\PassOptionsToPackage{unicode}{hyperref}
\PassOptionsToPackage{hyphens}{url}
\PassOptionsToPackage{dvipsnames,svgnames,x11names}{xcolor}
\documentclass[
  12pt]{article}
\usepackage{xr-hyper}
\usepackage{setspace}
\usepackage{titletoc}
\usepackage{amsmath,amssymb}
\usepackage{iftex}
\ifPDFTeX
  \usepackage[T1]{fontenc}
  \usepackage[utf8]{inputenc}
  \usepackage{textcomp} 
\else 
  \usepackage{unicode-math}
  \defaultfontfeatures{Scale=MatchLowercase}
  \defaultfontfeatures[\rmfamily]{Ligatures=TeX,Scale=1}
\fi
\usepackage{lmodern}
\ifPDFTeX\else  
\fi
\IfFileExists{upquote.sty}{\usepackage{upquote}}{}
\IfFileExists{microtype.sty}{
  \usepackage[]{microtype}
  \UseMicrotypeSet[protrusion]{basicmath} 
}{}
\makeatletter
\@ifundefined{KOMAClassName}{
  \IfFileExists{parskip.sty}{%
    \usepackage{parskip}
  }{
    \setlength{\parindent}{0pt}
    \setlength{\parskip}{6pt plus 2pt minus 1pt}}
}{
  \KOMAoptions{parskip=half}}
\makeatother
\usepackage{xcolor}
\makeatletter
\ifx\paragraph\undefined\else
  \let\oldparagraph\paragraph
  \renewcommand{\paragraph}{
    \@ifstar
      \xxxParagraphStar
      \xxxParagraphNoStar
  }
  \newcommand{\xxxParagraphStar}[1]{\oldparagraph*{#1}\mbox{}}
  \newcommand{\xxxParagraphNoStar}[1]{\oldparagraph{#1}\mbox{}}
\fi
\ifx\subparagraph\undefined\else
  \let\oldsubparagraph\subparagraph
  \renewcommand{\subparagraph}{
    \@ifstar
      \xxxSubParagraphStar
      \xxxSubParagraphNoStar
  }
  \newcommand{\xxxSubParagraphStar}[1]{\oldsubparagraph*{#1}\mbox{}}
  \newcommand{\xxxSubParagraphNoStar}[1]{\oldsubparagraph{#1}\mbox{}}
\fi
\makeatother

\usepackage{calc} 
\usepackage{etoolbox}
\usepackage{lscape}
\usepackage{longtable}
\usepackage{booktabs}
\usepackage{longtable,booktabs,array}
\usepackage{calc} 
\usepackage{etoolbox}
\makeatletter
\patchcmd\longtable{\par}{\if@noskipsec\mbox{}\fi\par}{}{}
\makeatother
\IfFileExists{footnotehyper.sty}{\usepackage{footnotehyper}}{\usepackage{footnote}}
\makesavenoteenv{longtable}
\usepackage{graphicx}
\makeatletter
\def\maxwidth{\ifdim\Gin@nat@width>\linewidth\linewidth\else\Gin@nat@width\fi}
\def\maxheight{\ifdim\Gin@nat@height>\textheight\textheight\else\Gin@nat@height\fi}
\makeatother
\setkeys{Gin}{width=\maxwidth,height=\maxheight,keepaspectratio}
\makeatletter
\def\fps@figure{htbp}
\makeatother

\makeatletter
\@ifpackageloaded{caption}{}{\usepackage{caption}}
\AtBeginDocument{%
\ifdefined\contentsname
  \renewcommand*\contentsname{Table of contents}
\else
  \newcommand\contentsname{Table of contents}
\fi
\ifdefined\listfigurename
  \renewcommand*\listfigurename{List of Figures}
\else
  \newcommand\listfigurename{List of Figures}
\fi
\ifdefined\listtablename
  \renewcommand*\listtablename{List of Tables}
\else
  \newcommand\listtablename{List of Tables}
\fi
\ifdefined\figurename
  \renewcommand*\figurename{Figure}
\else
  \newcommand\figurename{Figure}
\fi
\ifdefined\tablename
  \renewcommand*\tablename{Table}
\else
  \newcommand\tablename{Table}
\fi
}
\@ifpackageloaded{float}{}{\usepackage{float}}
\floatstyle{ruled}
\@ifundefined{c@chapter}{\newfloat{codelisting}{h}{lop}}{\newfloat{codelisting}{h}{lop}[chapter]}
\floatname{codelisting}{Listing}

\makeatother
\makeatletter
\@ifpackageloaded{caption}{}{\usepackage{caption}}
\@ifpackageloaded{subcaption}{}{\usepackage{subcaption}}
\makeatother

\ifLuaTeX
  \usepackage{selnolig}  
\fi
\usepackage[]{natbib}
\usepackage{bookmark}

\IfFileExists{xurl.sty}{\usepackage{xurl}}{} 
\hypersetup{
  pdftitle={Title},
  pdfauthor={Author 1; Author 2},
  pdfkeywords={3 to 6 keywords, that do not appear in the title},
  colorlinks=true,
  linkcolor={blue},
  filecolor={Maroon},
  citecolor={Blue},
  urlcolor={Blue},
  pdfcreator={LaTeX via pandoc}}

\newcommand{\anon}{1}
\usepackage[margin=1in]{geometry}
\usepackage{subcaption}
\usepackage[toc]{appendix}
\usepackage{stackengine}
\usepackage{bbold}
\RequirePackage[OT1]{fontenc}
\usepackage{times}
\usepackage{pifont}
\newcommand{\cmark}{\ding{51}} 
\newcommand{\xmark}{\ding{55}} 
\usepackage{adjustbox}
\usepackage{listings}
\usepackage{xcolor}

\usepackage{algorithm}
\usepackage{algpseudocode}
\usepackage{amsmath,amssymb,amsthm,bm,latexsym}
\newcommand\independent{\protect\mathpalette{\protect\independenT}{\perp}}
\def\independenT#1#2{\mathrel{\rlap{$#1#2$}\mkern2mu{#1#2}}}

\newenvironment{subproof}[1][\proofname]{%
  \begin{proof}[#1]%
}{%
  \end{proof}%
}
\usepackage{graphicx,psfrag,epsf}
\usepackage{epigraph,xcolor}
\usepackage[figuresright]{rotating}
\usepackage{enumerate}

\usepackage{url}
\usepackage{booktabs}
\usepackage{multirow}
\usepackage{hyperref}
\hypersetup{colorlinks=true,citecolor=blue,
	pdfpagemode=FullScreen,linktocpage=true}
\usepackage{cleveref}
\usepackage{amsfonts}
\usepackage{amsmath}
\usepackage{orcidlink}
\usepackage[dvipsnames]{xcolor}
\usepackage{amsthm}
\usepackage{mathrsfs}
\usepackage[T1]{fontenc}
\usepackage{makecell}
\usepackage{adjustbox}

\newcommand{\RNum}[1]{\uppercase\expandafter{\romannumeral #1\relax}}
\DeclareMathOperator*{\argmin}{arg\,min}
\DeclareMathOperator*{\argmax}{arg\,max}
\def\tr{\rm tr}
\newcommand{\abs}[1]{\left\lvert #1 \right\rvert}
\newcommand{\norm}[1]{\left\| #1 \right\|}

\numberwithin{equation}{section}  
\newtheoremstyle{general}
{3mm} 
{3mm} 
{\it} 
{} 
{\bfseries} 
{.} 
{.5em} 
{} 

\theoremstyle{general}
\newtheorem{definition}{Definition}
\newtheorem{lemma}{Lemma}
\newtheorem{theorem}{Theorem}

\newtheorem{assumption}{Assumption}
\newtheorem{remark}{Remark}

\begin{document}

\def\spacingset#1{\renewcommand{\baselinestretch}%
{#1}\small\normalsize} \spacingset{1}



\if1\anon
{
  \title{\bf ANGLE: Angular Neural Generative Learning via Engression}
  \author{
    Rajdeep Pathak\textsuperscript{1,2,}\thanks{Joint first authors.} \quad 
    Archi Roy\textsuperscript{3,*} \quad 
    Tanujit Chakraborty\textsuperscript{1,2,}\thanks{Corresponding author: \texttt{tanujit.chakraborty@sorbonne.ae}} \\[0.5cm]
    \textsuperscript{1}LPSM, Sorbonne Universit\'{e}, Paris, France \\
    \textsuperscript{2}SAFIR, Sorbonne University Abu Dhabi, UAE \\
    \textsuperscript{3}Indian Institute of Management, Kozhikode, India \\[0.2cm]
  }
  \maketitle
} \fi

\if0\anon
{
  \bigskip
  \bigskip
  \bigskip
  \begin{center}
    {\LARGE\bf ANGLE: Angular Neural Generative Learning via Engression}
  \end{center}
  \medskip
} \fi

\bigskip
\begin{abstract}
Circular data, representing angles or directions, are frequently encountered in computer vision, biology, geology, and meteorology. Traditional regression targets the conditional mean, which is often geometrically misleading for circular responses under multimodal, skewed, or asymmetric data structures. To address these limitations, a lightweight deep generative framework, namely ANGLE, is introduced for non-parametric distributional regression on the circle. The full conditional distribution of an angular response, given Euclidean and circular covariates, is learned through a generative map optimized via a generalized circular energy score (GCES) loss. Desirable theoretical properties including the strict propriety of the loss and the rotational equivariance of the estimators are established. Furthermore, both pre- and post-additive noise models are accommodated. A unified toolbox is provided for advancing previously underexplored challenges in circular statistics: extrapolation, sufficient dimension reduction, and conditional distribution equality testing. The framework's efficacy is demonstrated through extensive simulations and real-world applications. Specifically, the proposal is utilized for object pose estimation from imagery and wind direction prediction, which are integral to surveillance, autonomous vehicles, and energy systems, respectively. Superior predictive performance and robust uncertainty quantification of the proposed method in these tasks are revealed. Software implementation is made publicly available\footnote{\url{https://github.com/PyCoder913/anglepy}} via our Python package \texttt{anglepy}\footnote{Documentation: \url{https://anglepy.readthedocs.io}}.
\end{abstract}

\noindent%
{\it Keywords:} Circular data, distributional regression, Engression, extrapolation, pose estimation
\vfill

\spacingset{1.8} 

\section{Introduction} 
\label{sec:introduction}

Circular data refers to observations that take the form of angles or directions on the unit circle $\mathbb{S}^1$, and they arise naturally in many scientific disciplines.  Common examples of circular data include times of day or days of a calendar year, which wrap naturally onto a 24-hour or 12-month clock, and directions or orientations, which are measured as angles on the unit circle. Several interesting applications of circular data are presented in \cite{gill2010circular}, where the authors model the timing of domestic terrorism events as points on the calendar circle; and in \cite{villarini2016seasonality}, where the day of annual peak river flow is modeled as an angle to capture seasonal flooding patterns. Beyond these examples, circular statistical methods have also been extensively employed in domains like  biological rhythm discovery \citep{gorczyca2025weighted, downs2002circular} and movement analysis \citep{ranalli2020model}. In most of these applications, the primary statistical objective is to estimate the conditional mean of the angular response, given a set of explanatory variables. However, there has been increasing recognition that a single summary statistic such as the mean often fails to capture the full structure of the conditional relationship, and that the entire conditional distribution is the more relevant inferential target. This need is particularly acute for circular responses where the conditional distribution can be multimodal or asymmetric in ways that render point summaries uninformative and geometrically misleading \citep{di2016note}. 

As a motivating example, consider pose estimation, a well-established task in computer vision. Recovering object orientation from imagery or videos is critical for head pose \citep{prokudin2018deep} and gaze direction \citep{nonaka2022dynamic} detection, autonomous steering \citep{rasib2021pixel}, projectile tracking, robotic manipulation, augmented reality, and in-orbit operations (see \cite{liu2024deep} for a comprehensive review). However, occlusion, poor illumination, low resolution, or symmetries frequently result in multiple plausible poses. Estimating the full conditional distribution captures this uncertainty; as shown in Fig. \ref{fig:object_poses}, ambiguous inputs (e.g., the left-facing motorbike) necessitate wider predictive intervals to bound the true pose, which point estimators like circular linear regression (CLR) fails to provide. An analogous challenge arises in meteorology, where atmospheric dynamics, terrain-induced channeling, land-sea interactions, and competing weather systems create complex, multimodal wind direction distributions \citep{rad2022enhancing}. Because single point predictions obscure these underlying regimes, estimating the full distribution is vital for generating the probabilistic predictions (predictive intervals) required by safety-critical operations like aircraft takeoff and landing \citep{khattak2023assessing}, wildfire spread prediction \citep{quill2019modeling}, and maritime navigation \citep{ning2025hybrid}.

\begin{figure}[t!]
        \centering
    \includegraphics[width=0.9\linewidth]{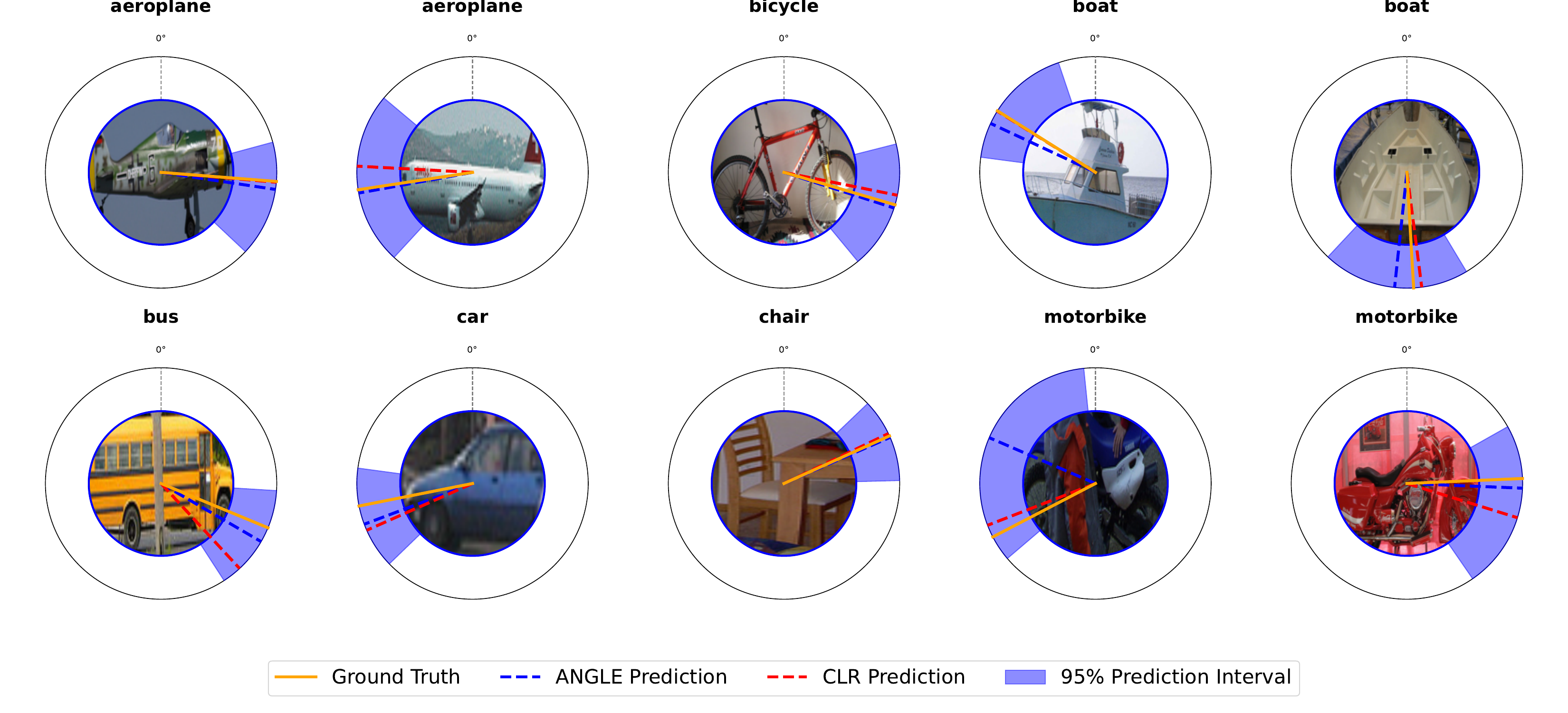}
        \caption{{\small Object pose estimation with uncertainty quantification with the proposed ANGLE. A detailed discussion is provided in Sec. \ref{sec:object_pose_estimation}.}} 
        \label{fig:object_poses}
\end{figure}


Motivated by the inadequacy of point summaries in many such applications involving angular responses, we develop \textit{ANGLE} - Angular Neural Generative Learning via Engression, a deep generative distributional regression framework for circular data. Given covariates $X\in\mathbb{R}^d$ and an angular response $Y\in[0,2\pi)$, the method learns the conditional distribution of $Y|X$ through a generative map $g(X,\varepsilon)$, where $\varepsilon\sim\mathcal U(0,1)$, estimated by minimizing a generalized circular energy score (GCES) loss. While our work draws conceptual inspiration from \cite{shen2025engression}, the extension to circular responses is far from straightforward. Notably, standard deep networks cannot be used directly to fit bounded and periodic angular responses. Moreover, the Euclidean energy score fails to accommodate the periodic wrapping effect ($0 \equiv 2\pi$), leading to erroneous predictive penalization. Unlike the Euclidean setting, where the energy score is known to be strictly proper, the circular setting lacks a canonical notion of distance, making distributional learning substantially more challenging. To address this issue, we introduce a family of GCES losses based on geodesic and chordal distances on $\mathbb S^1$ and characterize conditions under which they remain strictly proper. The resulting framework is nonparametric, bandwidth-free, and rotationally equivariant, ensuring that both distributional estimates and derived functionals respect the geometry of the circle. Beyond conditional distribution estimation, the learned generative object naturally enables a range of downstream statistical tasks. We develop methodologies for point prediction with uncertainty quantification - specifically, the estimation of circular mean, median, mode, density, dispersion, and quantiles; sufficient dimension reduction; extrapolation; and testing equality of conditional distributions, accompanied by desirable theoretical guarantees. The practical utility of the proposed framework is demonstrated through applications to object pose estimation from images using the PASCAL3D+  benchmark \citep{xiang2014beyond} and wind direction prediction using datasets from India and Germany \citep{cohen2025bayesian}. 

In summary, this work provides a unified lightweight generative framework for distributional learning on circular data. We summarize our key contributions below.

\begin{enumerate}
    \item[(a)] \textit{ANGLE framework and theoretical foundations.} We propose ANGLE, a deep generative distributional regression method for circular data, establishing population-level guarantee and rotational equivariance for the derived estimators (\Cref{thm:population_guarantee_engression}; \Cref{lem:point_predictions_without_origin}). A family of GCES losses is introduced and conditions for their strict propriety are characterized.
    \item[(b)] \textit{Comprehensive downstream applications.} We develop a unified toolkit for angular responses with theoretical guarantees across all tasks, enabling point prediction with uncertainty quantification (\Cref{lem:point_predictions_without_origin}), sufficient dimension reduction (Theorems \ref{thm:central_sigma_field_exists}-\ref{thm:central_subspace_recovery}), extrapolation (\Cref{thm:extrapolability_criteria}), and conditional distribution equality testing (\Cref{alg:conditional_test}).
    \item[(c)] \textit{Architectural flexibility and computational efficiency.} The framework adapts to both pre-additive (covariate noise) and post-additive (response noise) models, remaining significantly more computationally lightweight than existing classical, Bayesian, and modern deep learning alternatives available in circular statistics.
    \item[(d)] \textit{Applications to image and tabular data.} We demonstrate the method's efficacy on image data (object pose estimation, SDR, and distribution equality testing) and tabular data (wind direction prediction and extrapolation), alongside extensive simulation experiments. The proposed ANGLE methodology provides model-intrinsic uncertainty quantification for circular responses, the efficacy of which is systematically benchmarked against state-of-the-art alternatives using the circular continuous ranked probability score (CRPS).
\end{enumerate}

\subsection{Background}

To estimate the conditional distribution of an Euclidean target variable given covariates, \textit{Engression} was recently introduced as a deep distributional regression framework \citep{shen2025engression}. Engression optimizes a strictly proper scoring rule within an additive noise model, providing theoretical extrapolation guarantees under mild monotonicity assumptions. Recent works have utilized this methodology for problems with temporal \citep{kraft2026modeling} and spatiotemporal \citep{pathak2026deep} data. While it was introduced for Euclidean responses with the main narrative around extrapolation, the original framework did not explore other downstream statistical tasks such as SDR and conditional distribution equality testing. We bridge this gap for the circular data domain by introducing a comprehensive and unified toolkit designed for distributional regression and subsequent downstream analysis of angular data. In this paper, we develop the ANGLE methodology and evaluate its performance across various data modalities, including both imagery and tabular datasets.

\subsection{Organization and Notations}
\label{sec:notations}
The remainder of the paper is organized as follows. Following the notation setup, Sec. \ref{sec:methodology} outlines the ANGLE methodology and model architecture. Sec. \ref{sec:application_CircEngression} develops downstream statistical tasks: conditional functional estimation (Sec. \ref{sec:Point_prediction}), sufficient dimension reduction (Sec. \ref{sec:application_sdr}), extrapolation (Sec. \ref{sec:identifiability_and_extrapolation}), and conditional distribution equality testing (Sec. \ref{sec:application_equality_testing}). Empirical validation via simulations is presented in Sec. \ref{sec:simulations}, followed by applications to object pose and wind direction prediction in Sec. \ref{sec:applications}. Sec. \ref{sec:conclusion} concludes with future directions. Proofs, software demonstrations, implementation details, and supporting materials are deferred to the Appendix.

Throughout, $\overset{d}{=}$ denotes equality in distribution and $\iota=\sqrt{-1}$. For a real-valued random variable $X$, let $\psi_X(r)=\mathbf{E}[\exp(\iota rX)]$, $r\in\mathbb Z$, denote its $r$-th circular moment; similarly, for a function $f$, let $\psi_f(r)=\mathbf{E}[\exp(\iota r f(\varepsilon))]$. The $k$-th component of a vector $u$ is denoted by $u_{(k)}$. For a measurable set $A, \ \mathcal U(A)$ denotes the uniform distribution on $A$. For a measure $\mu$, $I_\mu$ denotes its support, $f|_A$ the restriction of a function $f$ to $A$, and $\mu_*\nu$ the push-forward of a measure $\nu$ under $\mu$, defined by $\mu_*\nu(A)=\nu(\mu^{-1}(A))$. We write $\mathbf P_{\mathrm{tr}}$ for the training-data distribution and $\mathcal X\subset\mathbb R^d$ for the training support. Throughout, $\mathbb S^1$ is parameterized by $[0,2\pi)$, with circular addition and subtraction defined by $a\oplus b=(a+b)\bmod 2\pi$ and $a\ominus b=(a-b)\bmod 2\pi$. For a random variable $X$, $\sigma(X)$ denotes the generated sigma-field. We write $\mathrm{atan2}(s,c)=\arg(c+\iota s)$, where $\arg:\mathbb C\setminus\{0\}\to(-\pi,\pi]$ is the principal argument, $\mathbb{C}\setminus\{0\}$ denotes the entire complex plane excluding the origin, and $(s,c)\neq(0,0)$. $\|\cdot\|$ denotes the Euclidean norm unless stated otherwise. The notation $X\independent Y$ denotes that the random variable $X$ is independent of $Y$.

\section{Methodology}
\label{sec:methodology}

We consider a generative approach to modeling a univariate angular response $Y\in [0,2\pi)$ given the Euclidean covariates $X\in \mathbb{R}^d$. Rather than directly specifying a parametric family for the conditional distribution $\mathbf{P}_{Y|X}$, we characterize it implicitly through a generative process as follows. We posit a function $g$ belonging to a prescribed class $\mathcal{M}$, and a real-valued noise variable $\varepsilon$ independent of $X$ such that $Y|X\overset{d}{=}g(X,\varepsilon)$. For each $g\in \mathcal{M}$ and $x\in \mathcal{X} \subset \mathbb{R}^d$, we write $\mathbf{P}_g(\cdot|x)$ as the distribution induced by $g(x,\varepsilon)$, where the randomness arises solely through $\varepsilon$. This generative formulation allows for two desirable features. First, it does not require explicit density specification on the circle, and second, it enables simulation-based approximation of the conditional distribution and sample-based estimation of circular statistics. We further illustrate this in detail in \Cref{sec:application_CircEngression}. Our proposed framework is inspired by the pre-ANM and post-ANM formulations of engression \citep{shen2025engression}, where latent noise is injected either before or after the non-linear link, and the resulting response is mapped onto the circle. We consider the following classes of circular generative models:
\begin{align}\label{eq:model_classes}
&\mathbf{M}_{\text{post}}^{(1)}=\left\{\left(g\left(\beta^\top x\right)+\eta\right)\bmod 2\pi \mid g:\mathbb{R}\rightarrow\mathbb{R},\ g\in \mathcal{G},\ \beta\in \mathcal{B}\right\},\nonumber\\
&\mathbf{M}_{\text{post}}^{(2)}=\left\{\left(g\left(\beta^\top x\right)+\eta\right)\bmod 2\pi \mid g:\mathbb{R}\rightarrow[0,2\pi),\ g\in \mathcal{G},\ \beta\in \mathcal{B}\right\},\nonumber\\
&\mathbf{M}_{\text{pre}}^{(1)}=\left\{g\left(\beta^\top x+\eta\right)\bmod 2\pi \mid g:\mathbb{R}\rightarrow\mathbb{R},\ g\in \mathcal{G},\ \beta\in \mathcal{B}\right\},\nonumber\\
&\mathbf{M}_{\text{pre}}^{(2)}=\left\{g\left(\beta^\top x+\eta\right) \mid g:\mathbb{R}\rightarrow[0,2\pi),\ g\in \mathcal{G},\ \beta\in \mathcal{B}\right\}. 
\end{align}

Note that $\mathbf{M}_{\text{post}}^{(1)}$ and $\mathbf{M}_{\text{pre}}^{(1)}$ follow a `wrapped modeling' approach, where the link function $g$ is unbounded, but the final output is modulated by $2\pi$. In contrast, $g$ is bounded in $[0,2\pi)$ for models in $\mathbf{M}_{\text{post}}^{(2)}$ and $\mathbf{M}_{\text{pre}}^{(2)}$.  In each class, the noise $\eta$ enters through a transformation $\eta\overset{d}{=}h(\varepsilon)$ for some measurable $h\in \mathcal{H}$, where we take $\varepsilon\sim \mathcal{U}(0,1)$ to be independent of $X$. The classes $\mathbf{M}_{\text{post}}^{(1)}$ and $\mathbf{M}_{\text{post}}^{(2)}$ introduce noise after the signal $g\left(\beta^\top x\right)$ is formed (post-ANM), while in $\mathbf{M}_{\text{pre}}^{(1)}$ and $\mathbf{M}_{\text{pre}}^{(2)}$ the noise is directly introduced in the covariate level (pre-ANM). The parameter space $\mathcal{B}$ and the function classes $\mathcal{G},\mathcal{H}$ are chosen depending on the statistical objective. For standard prediction tasks, it suffices to take $\mathcal{B}=\left\{\beta\in \mathbb{R}^d : \norm{\beta}<\infty\right\}$, with $\mathcal{G}$ and $\mathcal{H}$ classes of smooth functions. However, as we demonstrate in \Cref{sec:identifiability_and_extrapolation}, stronger structural assumptions on the parameter and function spaces are required when the goal is extrapolation. Although the response variable is circular, we model the latent noise variable $\eta$ as real-valued rather than circular. This is primarily motivated by many real-world phenomena, where the observed angle arises as the circular projection of an underlying Euclidean system subject to ordinary real-valued perturbations. While considering circular noise is also possible in the same framework, we leave the theoretical details for a future work. For simplicity and ease of presentation, we restrict our attention to single-index additive noise structures in the theoretical derivations. The results extend straightforwardly to the multiple-index setting, where $\beta\in \mathbb{R}^d$ is replaced by $\mathbf{B}\in \mathbb{R}^{d\times b}$, with suitable modifications to the identifiability conditions imposed whenever required. 

An important element of any distributional regression method is a quantitative metric to assess a distributional fit. We introduce the \textit{generalized circular energy score (GCES) loss}, which, for any direction $\theta$, is given as 
\begin{align}\label{eq:crps_angular}
     \overset{\circ}{\mathrm{ES}}(\mathbf{P},\theta)= \mathbf{E}[\mathcal{F}\left(\mathrm{d}(\theta,\Theta)\right)]-\frac{1}{2}\mathbf{E}[\mathcal{F}(\mathrm{d}(\Theta,\Theta^*))],
\end{align}
where $\Theta,\Theta^*$ are independent copies of circular random variables coming from the distribution $\mathbf{P}$, $\mathrm{d}(\theta_1,\theta_2)=\arccos{\langle \theta_1,\theta_2\rangle} = \min\{|\theta_1-\theta_2|, 2\pi-|\theta_1-\theta_2|\}$ is the geodesic metric on $\mathbb{S}^1$, and $\mathcal{F}$ represents a continuous function on $[0,\pi]$ such that $k(x,y)=\mathcal{F}(\mathrm{d}(x,y))$ is an isotropic kernel on $\mathbb{S}^1$. As we illustrate later in \Cref{thm:population_guarantee_engression}, for the population level guarantee of the proposed methodology, we require \eqref{eq:crps_angular} to be a strictly proper scoring rule. Because we frame our objective as a loss function, we adopt the negatively oriented convention: a scoring rule is strictly proper if its expected penalty is minimized if and only if the predicted probability distribution exactly matches the true underlying distribution. According to Theorem 1.1 and 4.5 of \cite{steinwart2021strictly}, this holds if and only if $-\mathcal{F}$ belongs to the class of functions that induce a strictly positive definite kernel  on $\mathbb{S}^1$. Examples of such functions are given in \citet[Table 1]{gneiting2013strictly}. Note that a simpler version of the circular energy score loss was considered in \cite{grimit2006continuous}, taking $\mathcal{F}$ to be the identity function, where the authors showed the metric to be proper. We need the above reformulation \eqref{eq:crps_angular} since strict propriety is crucial for this study. As illustrated in \Cref{lem:strict_properness_chordal} (Appendix \ref{sec:proofs_of_main_results}), it is possible to extend the construction beyond the geodesic metric. Specifically, \eqref{eq:crps_angular} continues to define a strictly proper energy score loss when $\mathcal{F}$ is the identity function and $\mathrm{d}$ is chosen to be the chordal distance. The population version of the proposed method can then be written as the minimizer of the GCES loss \eqref{eq:crps_angular}, i.e.,
\begin{align} 
    \label{eq:engression_population_version}
    \widetilde g\in \argmin_{g\in \mathcal{M}}\left\{\mathbf{E}\left[\mathcal{F}\left(\mathrm{d}\left(Y,g(X,\varepsilon)\right)\right)-\frac{1}{2}\mathcal{F}\left(\mathrm{d}\left(g(X,\varepsilon),g(X,\varepsilon^*)\right)\right)\right]\right\},
\end{align}
where $(X,Y)\sim \mathbf{P}_{\tr}(x,y)$ and $\varepsilon,\varepsilon^*\independent X$ are independent copies drawn from the $\mathcal{U}(0,1)$ distribution. The first step now is to establish a population level guarantee for the proposed method as a general distributional approach. That is, we need to show that when the model is correctly specified, \eqref{eq:engression_population_version} learns the true conditional distribution for all $x\in \mathcal{X}$.

\begin{theorem}[Population level guarantee]
    \label{thm:population_guarantee_engression}
    Assume that there exists a $g\in \mathcal{M}$ such that $g(x,\varepsilon)\sim \mathbf{P}_{\tr}(y|x)$ for all $x\in \mathcal{X}$, and that the generalized circular energy score loss is strictly proper. Let $\widetilde{g}$ be any population minimizer in \eqref{eq:engression_population_version} that minimizes the GCES loss. Then $\widetilde g(x,\varepsilon)\sim \mathbf{P}_{\tr}(y|x)$ $\mathbf{P}_X$-almost everywhere. 
\end{theorem}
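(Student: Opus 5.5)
The plan is the standard propriety argument used for engression. We condition on $X$ and reduce the population objective to a pointwise comparison of expected scores. Define, for each $x\in\mathcal X$ and each $g\in\mathcal M$,
\begin{align*}
S(g;x)=\mathbf{E}_{Y\sim\mathbf P_{\mathrm{tr}}(\cdot|x)}\bigl[\overset{\circ}{\mathrm{ES}}(\mathbf P_g(\cdot|x),Y)\bigr].
\end{align*}
Since $\varepsilon,\varepsilon^*$ are independent of $X$, Fubini's theorem together with the tower property shows that the objective in \eqref{eq:engression_population_version} equals $L(g):=\mathbf E_X[S(g;X)]$. Fubini applies because $\mathcal F$ is continuous on the compact interval $[0,\pi]$, hence bounded, so every expectation is finite. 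Here we use that, conditionally on $X=x$, the pair $(Y,g(x,\varepsilon))$ is independent with the correct marginals, and that $g(x,\varepsilon),g(x,\varepsilon^*)$ are i.i.d. from $\mathbf P_g(\cdot|x)$.

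Next let $g^\star\in\mathcal M$ be the correctly specified model from the hypothesis, so $\mathbf P_{g^\star}(\cdot|x)=\mathbf P_{\mathrm{tr}}(\cdot|x)$ for all $x$. Strict propriety of the GCES loss gives $S(g;x)\ge S(g^\star;x)$ for every $x$ and every $g$. Equality holds if and only if $\mathbf P_g(\cdot|x)=\mathbf P_{\mathrm{tr}}(\cdot|x)$. Integrating, $L(g)\ge L(g^\star)$, so $g^\star$ is a minimizer. Hence any minimizer $\widetilde g$ satisfies $L(\widetilde g)= L(g^\star)$, which is finite by boundedness of $\mathcal F$.

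Therefore the nonnegative function $D(x)=S(\widetilde g;x)-S(g^\star;x)$ has $\mathbf E_X[D(X)]=0$, and so $D=0$ $\mathbf P_X$-a.e. By the equality case of strict propriety, $\mathbf P_{\widetilde g}(\cdot|x)=\mathbf P_{\mathrm{tr}}(\cdot|x)$ for $\mathbf P_X$-a.e. $x$, which is the claim.

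The main point needing care is the first step. We must check that the conditional decomposition is legitimate, which means measurability of $x\mapsto S(g;x)$ and a regular conditional distribution $\mathbf P_{\mathrm{tr}}(y|x)$ on the circle. Both follow because $\mathbb S^1$ is Polish and $g$ is jointly measurable. We must also check that strict propriety is meant for all Borel probability measures on $\mathbb S^1$, including the conditional laws $\mathbf P_g(\cdot|x)$. The latter is exactly the strict positive definiteness of $-\mathcal F\circ\mathrm d$ invoked from \cite{steinwart2021strictly}; we will take it as the hypothesis of the theorem.
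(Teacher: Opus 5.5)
Your proposal is correct and uses the same propriety argument as the paper. The paper argues by contradiction: it assumes $\mathbf{P}_{\widetilde g}(\cdot|x)\neq\mathbf{P}_{\mathrm{tr}}(\cdot|x)$ on a set $\mathcal{X}_1$ of positive $\mathbf{P}_X$-measure, gets a strict pointwise score inequality there, and integrates to contradict the minimality of $\widetilde g$. Your direct version, where the nonnegative bounded excess risk $D$ has zero integral and so vanishes $\mathbf{P}_X$-a.e., is the same idea, and it also spells out things the paper's integration step uses only implicitly: the weak inequality off $\mathcal{X}_1$, the finiteness of the risks, and the conditional decomposition of the objective.
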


Having established the population level guarantee for our methodology, we now move on to finite sample estimation. For each observation in an i.i.d. sample $\{(X_i,Y_i)|i=1,2,\ldots,n\}$, we sample an $\varepsilon$ from the $\mathcal{U}(0,1)$ distribution $m$ times: $\left\{\epsilon_i^{(j)}|i=1,2,\ldots,n,j=1,2,\ldots,m\right\}$. Based on the finite samples, we define the empirical version of circular engression as:
{\scriptsize
\begin{align}
 \label{eq:engression_sample_version}
    \widehat g\in \argmin_{g\in \mathcal{M}}\left\{n^{-1}\sum_{i=1}^{n}{\left[\frac{1}{m}\sum_{j=1}^{m}{\mathcal{F}\left(\mathrm{d}\left(Y_i,g\left(X_i,\epsilon_i^{(j)}\right)\right)\right)}-\frac{1}{2m(m-1)}\sum_{j=1}^{m}{\sum_{k=1, k\neq j}^{m}{\mathcal{F}\left(\mathrm{d}\left(g\left(X_i,\epsilon_i^{(j)}\right),g\left(X_i,\epsilon_i^{(k)}\right)\right)\right)}}\right]}\right\}.
\end{align}}
For each fixed $g$, the objective \eqref{eq:engression_sample_version} is an unbiased estimator of the population risk \eqref{eq:engression_population_version}. In particular, the second term in the right hand side of \eqref{eq:engression_sample_version} is written as a U-statistic over pairs $j\neq k$ to obtain the unbiased estimate of the self-dispersion term. We parameterize the model $g$ using a broad class of neural networks, as detailed in the subsequent section.

\subsection{Model Architecture}\label{sec:model_architechture}
The ANGLE framework seamlessly accommodates single-index, multiple-index, and fully connected architectures, allowing the model complexity to be tailored to the task at hand. The framework also supports both pre- and post-ANMs, as well as bounded circular outputs and unconstrained real-valued outputs projected onto $[0, 2\pi)$, thereby providing a unified implementation of all model classes in \eqref{eq:model_classes}. The architecture utilizes a multilayer perceptron (MLP) backbone. The network's hidden representation incorporates fully connected transformations, batch normalization, and residual connections, augmented by stochastic noise injection (sampled from either $\mathcal{U}(0,1)$ or $\mathcal{N}(0,1)$) to enable conditional distribution estimation via the engression methodology \citep{shen2025engression}. To obtain circular outputs, the terminal layer uses one of the three transformations:
\begin{enumerate}
    \item[(I)] A bivariate output $(u,v)\in\mathbb R^2$ can be mapped through $\widehat\theta=\mathrm{atan2}(v,u)\pmod{2\pi}$ with a small numerical stabilization to avoid the singular point $(u,v)=(0,0)$;
    \item[(II)] An unbounded scalar output $u\in\mathbb R$ can be reduced modulo $2\pi$ before evaluating the loss and at inference time;
    \item[(III)] An unconstrained scalar output can be mapped to $[0,2\pi)$ using a scaled sigmoid transformation.
\end{enumerate}
The modulo transformation implements the wrapped-output variants ($\mathbf{M}_{\text{pre,post}}^{(1)}$), whereas the $\mathrm{atan}2$ and scaled-sigmoid transformations implement the bounded circular output variants ($\mathbf{M}_{\text{pre,post}}^{(2)}$). The distinction between $\mathbf{M}_{\text{pre}}$ and $\mathbf{M}_{\text{post}}$ is determined by whether the stochastic noise is injected before or after the nonlinear map. Notably, the ANGLE architecture also accommodates circular covariates by replacing each angular covariate with its trigonometric embeddings before the initial layer. Finally, the network parameters are optimized by minimizing the empirical GCES loss defined in \eqref{eq:engression_sample_version}, using either geodesic or chordal distance as the underlying circular discrepancy. For geodesic distance, the function $\mathcal{F}$ is chosen as $\mathcal{F}(t) = -k(t)$ with $k$ selected from \citet[Table 1]{gneiting2013strictly}. The different variants of the proposed model are compared in an ablation study in \Cref{sec:ablation_study}.


\section{Statistical Inference with ANGLE}
\label{sec:application_CircEngression}

Having established the generative model classes and ANGLE framework for circular responses, we now turn to the theoretical and practical utilities of the proposed framework for statistical inference. A key advantage of the proposed approach is that, once the generative map $g(x,\varepsilon)$ has been estimated, the full conditional distribution $\mathbf{P}_{Y|X=x}$ becomes accessible through forward simulation. Specifically, for any covariate value $x$, we can draw i.i.d. $\varepsilon_1,\ldots,\varepsilon_m\sim\mathcal{U}(0,1)$ and generate approximate conditional samples from the fitted model: $\widehat Y_1=g(x,\varepsilon_1),\ldots,\widehat Y_m=g(x,\varepsilon_m)$. These simulated samples can then be used to estimate a wide range of conditional circular summaries without requiring an explicit closed-form expression for the conditional density. In the remainder of this section, we investigate the various inferential tasks enabled by the proposed framework.


\subsection{Point Prediction of Conditional Functionals} \label{sec:Point_prediction}

The proposed generative formulation enables straightforward estimation of a broad class of conditional functionals for circular responses, such as the mean and median directions, measures of circular dispersion, conditional quantiles, cumulative distribution functions, and densities. Even under standard parametric circular models, many of these quantities do not admit simple closed-form expressions and are therefore most naturally estimated by sampling from the fitted generative map. For completeness, Appendix~\ref{appendix:review_point_prediction} reviews existing approaches to estimating these functionals under classical circular models. In this section, we show how each of these quantities can be estimated directly from the fitted generative map in \eqref{eq:engression_population_version}.

A fundamental requirement for statistical inference on the circle is rotational equivariance, which ensures that inference is invariant to the arbitrary choice of reference direction. \Cref{lem:point_predictions_without_origin} establishes that, under correct model specification, the conditional functionals induced by ANGLE satisfy this property. Specifically, if the response space is rotated by an angle $\alpha$, then the estimated mean direction, median direction, mode, and density rotate by the same angle $\alpha$, while the circular dispersion remains unchanged.

\begin{lemma}[Rotational equivariance] \label{lem:point_predictions_without_origin}
    Let $(X,Y)\sim \mathbf{P}_{\tr}(x,y)$, $\widetilde g$ be the population ANGLE object defined in \eqref{eq:engression_population_version}, and $(X,Y^\alpha)$ denote the rotated training problem where $Y^\alpha=Y\oplus\alpha$ for any $\alpha\in \mathbb{R}$ with the corresponding training distribution $\mathbf{P}_{\tr}^\alpha(y,x)$. Denote by $\widetilde g_\alpha$ the population ANGLE object fitted on $(X,Y^\alpha)$. Assume that the underlying model class $\mathcal{M}$ is correctly specified for both problems, i.e., there exist $g,g_\alpha\in \mathcal{M}$ such that $g(x,\varepsilon)\sim \mathbf{P}_{\tr}(\cdot|x),g_\alpha(x,\varepsilon)\sim \mathbf{P}_{\tr}^\alpha(\cdot|x)$ for all $x\in \mathcal{X}$.
    \begin{enumerate}[label=(\Alph*)]
        \item[(A)] \label{lem:A}  Assume that the mean resultant length $\left|\mathbf{E}_\varepsilon[\exp(\iota\widetilde g(x,\varepsilon))]\right|>0$ $\mathbf{P}_X$-almost everywhere. Define the conditional mean direction of $Y|X=x$ as
\begin{align}\label{eq:conditional_mean_circular}
    \mu_{\widetilde g}(x)=\mathrm{atan2}\left(\mathbf{E}_{\varepsilon}\left[\sin \widetilde g(x,\varepsilon)\right],\mathbf{E}_\varepsilon\left[\cos \widetilde g(x,\varepsilon)\right]\right),
\end{align}
where $\mathrm{atan}2$ is the two-argument arctangent operator. Then $\mu_{\widetilde g_\alpha}(x)= \mu_{\widetilde g}(x)\oplus\alpha$ $\mathbf{P}_X$-almost everywhere. Hence, the conditional mean direction under the rotated problem is equal to the original conditional mean direction rotated by the same angle.
        \item[(B)] \label{lem:B} Define the circular median as $ m_{\widetilde g}(x)=\argmin_{\theta\in [0, 2\pi)}\left\{\mathbf{E}_\varepsilon[\mathrm{d}(\theta,\widetilde g(x,\varepsilon))]\right\}$. Then $m_{\widetilde g_\alpha}(x)=m_{\widetilde g}(x)\oplus\alpha$ $\mathbf{P}_X$-almost everywhere. For notational simplicity, we may assume uniqueness of the median; however, the rotational equivariance property remains valid for each of the medians in the non-unique case. 
        \item[(C)] \label{lem:C} Define the circular dispersion measure \citep{demni2026robust} as $\sigma_{\widetilde g}(x)=\mathbf{Med}_\varepsilon[\mathrm{d}(\widetilde g(x,\varepsilon),m_{\widetilde g}(x))]$, where $\mathbf{Med}_\varepsilon$ denotes the median with respect to the random variable $\varepsilon$. Then $\sigma_{\widetilde g_\alpha}(x)=\sigma_{\widetilde g}(x)$ $\mathbf{P}_X$-almost everywhere, i.e., the conditional circular dispersion is rotationally invariant. 
\item[(D)] \label{lem:D} Assume that the conditional density of $Y|x$ is square-integrable on $[0, 2\pi)$. Following the Fourier inversion theorem for circular variables, we define the truncated conditional density representation of the population engressor as $ \mathcal{P}_{\widetilde g}(\theta|x)=(2\pi)^{-1}\sum_{r=-R}^{R}{\psi_{\widetilde g}(r|x)\exp(-\iota r\theta)}$ where $\psi_{\widetilde g}(r|x)=\mathbf{E}_\varepsilon[\exp(\iota r \widetilde g(x,\varepsilon))]$, and $R>0$ is a specified integer-valued truncation level. Then $\mathcal{P}_{\widetilde g_\alpha}(\theta|x)=\mathcal{P}_{\widetilde g}(\theta\ominus \alpha|x)$ for all $\theta\in [0, 2\pi)$, $\mathbf{P}_X$-almost everywhere.
        
\item[(E)] \label{lem:E}  Under the same assumption stated in part (D), define the conditional mode and antimode respectively as $ \bar{\upsilon}_{\widetilde g}(x)= \argmax_{\theta\in [0, 2\pi)}\left\{\mathcal{P}_{\widetilde g}(\theta|x)\right\}$ and $\underline{\upsilon}_{\widetilde g}(x)= \argmin_{\theta\in [0, 2\pi)}\left\{\mathcal{P}_{\widetilde g}(\theta|x)\right\}$. Then $\bar{\upsilon}_{\widetilde g_\alpha}(x)=\bar{\upsilon}_{\widetilde g}(x)\oplus\alpha$ and $\underline{\upsilon}_{\widetilde g_\alpha}(x)=\underline{\upsilon}_{\widetilde g}(x)\oplus\alpha$. For notational simplicity, we may assume uniqueness of the mode and antimode; however, the rotational equivariance remains valid for each mode in the non-unique case.
\end{enumerate}
\end{lemma}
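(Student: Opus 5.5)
The plan is to reduce every part to one distributional identity and then treat (A)--(E) as direct consequences. By \Cref{thm:population_guarantee_engression}, applied to the original and the rotated problems (both correctly specified, with strictly proper GCES), $\widetilde g(x,\varepsilon)\sim\mathbf P_{\tr}(\cdot|x)$ and $\widetilde g_\alpha(x,\varepsilon)\sim\mathbf P^\alpha_{\tr}(\cdot|x)$ for $x$ outside a $\mathbf P_X$-null set. The marginal of $X$ is unchanged by the rotation, so the union of the two exceptional sets is still $\mathbf P_X$-null. Since $Y^\alpha=Y\oplus\alpha$, the conditional law of $Y^\alpha$ given $X=x$ is the push-forward of $\mathbf P_{\tr}(\cdot|x)$ under $y\mapsto y\oplus\alpha$. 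This yields the key identity
\begin{align*}
\widetilde g_\alpha(x,\varepsilon)\overset{d}{=}\widetilde g(x,\varepsilon)\oplus\alpha \quad \mathbf P_X\text{-a.e.}
\end{align*}
All functionals in (A)--(E) depend on $\widetilde g(x,\cdot)$ only through this law, so it remains to check how each one transforms under a rotation of the law.

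For (A) and (D), use the circular moments. Because $\exp(\iota r(y\oplus\alpha))=\exp(\iota r y)\exp(\iota r\alpha)$ for integer $r$ (the mod-$2\pi$ reduction is invisible to integer frequencies), we get $\psi_{\widetilde g_\alpha}(r|x)=e^{\iota r\alpha}\psi_{\widetilde g}(r|x)$.
\begin{itemize}
\item[(A)] Take $r=1$. Multiplying a nonzero complex number by $e^{\iota\alpha}$ adds $\alpha$ to its argument modulo $2\pi$. The nonzero resultant length is assumed, and it is preserved because the modulus is unchanged. Hence $\mu_{\widetilde g_\alpha}=\mu_{\widetilde g}\oplus\alpha$, interpreting $\mathrm{atan2}$ modulo $2\pi$ as the paper does. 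The only care needed is to reconcile the principal-argument range $(-\pi,\pi]$ with $[0,2\pi)$.
\item[(D)] Substitute into the truncated sum. Each term becomes $\psi_{\widetilde g}(r|x)\exp(-\iota r(\theta-\alpha))$, which equals $\psi_{\widetilde g}(r|x)\exp(-\iota r(\theta\ominus\alpha))$ by $2\pi$-periodicity. This gives $\mathcal P_{\widetilde g_\alpha}(\theta|x)=\mathcal P_{\widetilde g}(\theta\ominus\alpha|x)$.
\end{itemize}

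For (B), use the rotation invariance of the geodesic metric, $\mathrm d(\theta\oplus\alpha,y\oplus\alpha)=\mathrm d(\theta,y)$. This holds because $\mathrm d$ depends only on $(\theta-y)\bmod 2\pi$. Then the objective satisfies $\mathbf E_\varepsilon[\mathrm d(\theta,\widetilde g_\alpha)]=\mathbf E_\varepsilon[\mathrm d(\theta\ominus\alpha,\widetilde g)]$. Since $\theta\mapsto\theta\ominus\alpha$ is a bijection of $[0,2\pi)$, the argmin sets correspond: $\theta$ minimizes the rotated objective iff $\theta\ominus\alpha$ minimizes the original one. This bijection of argmin sets also covers the non-unique case.

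For (C), combine (B) with the same invariance. The law of $\mathrm d(\widetilde g_\alpha,m_{\widetilde g_\alpha})$ equals the law of $\mathrm d(\widetilde g\oplus\alpha,m_{\widetilde g}\oplus\alpha)=\mathrm d(\widetilde g,m_{\widetilde g})$, so the medians agree. If the median is non-unique, pair the corresponding medians. For (E), the identity from (D) shows that the rotated density is a cyclic shift of the original. As in (B), the argmax and argmin sets are therefore rotated by $\alpha$.

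I expect no deep obstacle. The step needing the most care is the first one: justifying the a.e. distributional identity from \Cref{thm:population_guarantee_engression}, in particular that the rotated problem shares $\mathbf P_X$ so the null sets combine. A secondary point is bookkeeping with the mod-$2\pi$ conventions in $\mathrm{atan2}$.
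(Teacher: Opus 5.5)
Your proposal is correct and follows essentially the same route as the paper. You apply \Cref{thm:population_guarantee_engression} to both problems and use the push-forward under $\theta\mapsto\theta\oplus\alpha$ to get $\widetilde g_\alpha(x,\varepsilon)\overset{d}{=}\widetilde g(x,\varepsilon)\oplus\alpha$; then (A) and (D) follow from the factor $e^{\iota r\alpha}$ on the trigonometric moments, and (B), (C), (E) from rotation invariance of $\mathrm{d}$ together with the bijection $\theta\mapsto\theta\ominus\alpha$, where the paper's use of sine/cosine addition formulas for (A) is the same multiplication by $e^{\iota\alpha}$.
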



Proof of \Cref{lem:point_predictions_without_origin} is provided in Appendix \ref{sec:proofs_of_main_results}. Next, following \cite{di2016nonparametric}, we define the conditional circular cumulative distribution function (CDF) estimator as
\begin{align*} 
    F_{\widetilde g}(\theta|x)=\int_{0}^{\theta}\mathcal{P}_{\widetilde g}(u|x)du,\quad
    \theta\in[0,2\pi),
\end{align*}
where the conditional density $\mathcal{P}_{\widetilde g}(\cdot|x)$ is defined in part (D) of \Cref{lem:point_predictions_without_origin}. This definition depends on a choice of reference direction, which is unavoidable for CDF-type summaries on the circle. Following the same line of work, we define the conditional circular quantile at level $100(1-\alpha)\%$, for $\alpha\in(0,1)$, as 
\begin{align} \label{eq:circular_quantile}
q_\alpha(x) = \argmin_{q\in[0,2\pi)}\mathbf{E}_\varepsilon\left[\rho_\alpha\left(\widetilde g(x,\varepsilon), q\right)\right],
\end{align} where the circular check loss function is given by
\begin{align}
    \rho_\alpha(\theta_1, \theta_2) = 
    \begin{cases}
        \alpha \delta(\theta_1, \theta_2), &\text{if } \delta(\theta_1, \theta_2) \geq 0,\\
        -(1-\alpha) \delta(\theta_1, \theta_2), &\text{if } \delta(\theta_1, \theta_2) < 0
    \end{cases},
\end{align}
 where $\delta$ is the signed shortest angular distance on the circle, wrapped to the interval $[-\pi, \pi)$. Further properties of the circular CDF and quantile estimators, including rotational equivariance, can be established following similar arguments as in \cite{di2016nonparametric}. Finite sample consistency and convergence properties of these estimators require a separate asymptotic analysis and are beyond the scope of the present work. Notably, the rotational equivariance results established in \Cref{lem:point_predictions_without_origin} rely on the assumption of correct model specification, which is an idealized population-level condition. In practice, however, the expressive power of neural network generators such as ANGLE allows the approximation error to be made arbitrarily small by increasing model capacity. Nevertheless, model misspecification may still arise due to finite network capacity, optimization error, or the choice of latent noise representation.

\subsection{Sufficient Dimension Reduction}
\label{sec:application_sdr}

In many circular regression problems, the covariates are high dimensional, making sufficient dimension reduction (SDR) an important objective. Unlike unsupervised methods such as PCA, which ignore the response and may discard information relevant for prediction \citep{artemiou2009principal}, SDR seeks low-dimensional representations of the covariates that preserve all information about the response. Although SDR has recently been studied for general non-Euclidean responses (see \cite{huang2025frechet} among others), relatively little work has explicitly developed the SDR methodology tailored to circular-valued responses. In this section, we show that ANGLE provides a flexible and principled framework to this end. 

Given the response-covariate pair $(Y,X)\in [0,2\pi)\times \mathcal{X}$, the task of SDR is to find a sub-sigma field $\mathfrak{G}_{Y|X}\subseteq\sigma(X)$, such that 
\begin{align}
    \label{eq:sdr_definition}
    Y\independent X|\mathfrak{G}_{Y|X},
\end{align}
i.e., $Y$ and $X$ are independent conditioned on the information contained in $\mathfrak{G}_{Y|X}$. Note that the choice of $\mathfrak{G}_{Y|X}$ in \eqref{eq:sdr_definition} is not unique. The countable intersection of all such choices, termed as the central sigma field, is denoted as $\mathfrak{G}^*_{Y|X}$. The existence result for $\mathfrak{G}^*_{Y|X}$ (Theorem \ref{thm:central_sigma_field_exists}) is a direct analogue of the corresponding SDR result for general responses in \citet[Theorem 1]{lee2013general}; since circular responses take values in a standard measurable space, the proof carries over to the present setting under the same regularity condition. We include the proof in Appendix \ref{sec:proofs_of_main_results} for completeness.
\begin{theorem}
    \label{thm:central_sigma_field_exists}
    Assume that the family of probability measures $\left\{\mathbf{P}_{X|Y}(\cdot|y)|y\in \mathbb{S}^1\right\}$ exists and is dominated by a $\sigma$-finite measure on $\mathcal{X}$. Then there exists a unique minimal sufficient sub-sigma field $\mathfrak{G}^*_{Y|X}$ of $\sigma(X)$ such that $Y\independent X|\mathfrak{G}_{Y|X}$, and if $\mathfrak{G}_0$ is another sub-sigma field satisfying \eqref{eq:sdr_definition}, then $\mathfrak{G}^*_{Y|X}\subseteq\mathfrak{G}_0$, upto $\mathbf{P}_X$-null sets.
\end{theorem}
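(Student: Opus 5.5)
We follow the strategy of \citet[Theorem 1]{lee2013general}, which reduces the existence of a minimal sufficient sub-sigma field to the classical existence of a minimal sufficient statistic for a dominated family. The idea is to read the conditional independence $Y\independent X\mid\mathfrak{G}$ as a sufficiency statement. Regard $\{\mathbf{P}_{X|Y}(\cdot|y): y\in\mathbb{S}^1\}$ as a statistical model on the sample space $\mathcal{X}$, indexed by the ``parameter'' $y$. We first check that, for a sub-sigma field $\mathfrak{G}\subseteq\sigma(X)$, the statement $Y\independent X\mid\mathfrak{G}$ is equivalent to $\mathfrak{G}$ being sufficient for this family. Concretely, this means that for every Borel $A$ there is a version of $\mathbf{P}(X\in A\mid\mathfrak{G})$ that does not depend on $y$.

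To prove the equivalence we use that $\mathbb{S}^1$ (parameterized by $[0,2\pi)$) and $\mathcal{X}\subset\mathbb{R}^d$ are standard Borel spaces, so regular conditional distributions exist. One direction: if $\mathfrak{G}$ is sufficient, then $\mathbf{P}(X\in A\mid \mathfrak{G},Y)=\mathbf{P}(X\in A\mid\mathfrak{G})$. The other direction: conditional independence gives $\mathbf{P}(X\in A\mid\mathfrak{G},Y=y)=\mathbf{P}(X\in A\mid\mathfrak{G})$ for $\mathbf{P}_Y$-a.e.\ $y$, which yields a common version. Some care is needed with the $\mathbf{P}_Y$-null exceptional sets, since sufficiency is usually stated for every $y$. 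We handle this by modifying the family on a $\mathbf{P}_Y$-null set of $y$, which does not affect the joint law of $(X,Y)$.

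Next comes the domination step. Let $\mu$ be the $\sigma$-finite dominating measure and write $f(x|y)=d\mathbf{P}_{X|Y}(\cdot|y)/d\mu$. By the Halmos--Savage lemma there is a countable subfamily $\{y_k\}$ and weights $c_k>0$ with $\sum_k c_k=1$ such that $\lambda=\sum_k c_k\mathbf{P}_{X|Y}(\cdot|y_k)$ is equivalent to the whole family. Define $\mathfrak{G}^*$ as the sigma field generated by the likelihood ratios $x\mapsto d\mathbf{P}_{X|Y}(\cdot|y)/d\lambda$, $y\in\mathbb{S}^1$, completed by $\mathbf{P}_X$-null sets. The factorization theorem (Halmos--Savage form) shows that $\mathfrak{G}^*$ is sufficient, because each density factors through these ratios. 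The same theorem shows that any sufficient $\mathfrak{G}_0$ makes every ratio $d\mathbf{P}_{X|Y}(\cdot|y)/d\lambda$ $\mathfrak{G}_0$-measurable up to null sets. Hence $\mathfrak{G}^*\subseteq\mathfrak{G}_0$ modulo $\mathbf{P}_X$-null sets.

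Uniqueness then follows immediately: two minimal fields each contain the other up to null sets. We also note that $\mathfrak{G}^*$ coincides with the intersection of all sufficient sub-sigma fields described in the text, modulo null sets. The main obstacle is the first step, namely transferring "$\mathbf{P}_Y$-almost every $y$" statements into "every $y$" sufficiency. It also requires checking that the null sets entering through the completion are $\mathbf{P}_X$-null rather than $\lambda$-null. We address the latter by noting that $\mathbf{P}_X=\int \mathbf{P}_{X|Y}(\cdot|y)\,d\mathbf{P}_Y(y)\ll\lambda$, and conversely that $\lambda$-null sets are $\mathbf{P}_X$-null after the modification on a $\mathbf{P}_Y$-null set of $y$.
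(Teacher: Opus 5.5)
Your argument follows the paper's own route: a Halmos--Savage mixture $\lambda$ (the paper's $\overset{\circ}{\mathbf{Q}}_0$), the ratios $d\Pi_y/d\lambda$, and the characterization ``$\mathfrak{G}$ is SDR iff every ratio is essentially $\mathfrak{G}$-measurable''. Generating $\mathfrak{G}^*$ from the ratios, instead of intersecting all SDR fields as the paper does, is an inessential difference. The gap is in the ``only if'' half of that characterization. This is exactly the point you flag, and your repair does not close it.

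Conditional independence gives a common version of $\mathbf{P}(X\in A\mid\mathfrak{G},Y=y)$ only for $y$ outside a $\mathbf{P}_Y$-null set $N_{\mathfrak{G}}$, and that null set depends on $\mathfrak{G}$. By contrast, $\lambda$ and the generators of $\mathfrak{G}^*$ are fixed from one version of the family before $\mathfrak{G}_0$ is chosen. Minimality would therefore need a single version that works for all (uncountably many) SDR fields at once, and you do not provide one. Worse, the Halmos--Savage points $y_k$ typically form a $\mathbf{P}_Y$-null set, so a $\mathfrak{G}_0$-dependent modification can change $\lambda$ and hence every ratio. The converse absolute continuity you need for the completion ($\lambda\ll\mathbf{P}_X$) has the same defect: it requires $\Pi_{y_k}\ll\mathbf{P}_X$ at particular, typically null, values $y_k$. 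So an SDR field need not be sufficient for the version from which $\mathfrak{G}^*$ was built, and $\mathfrak{G}^*\subseteq\mathfrak{G}_0$ does not follow.

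The failure is real, not cosmetic. Let $X\sim\mathcal{U}(0,1)$ and $Y\sim\mathcal{U}(0,2\pi)$ be independent, and use the valid, Lebesgue-dominated version $\Pi_y=\mathcal{U}(0,1)$ for $y\neq0$ and $\Pi_0(dx)=2x\,dx$ on $[0,1]$. The trivial $\sigma$-field is SDR. Yet for any mixture $\lambda$, the quotient $(d\Pi_0/d\lambda)/(d\Pi_y/d\lambda)$ with $y\neq0$ equals $2x$. So your $\mathfrak{G}^*$ is $\sigma(X)$ up to null sets, which is not contained in the trivial field. The paper's proof makes the same leap silently when it asserts that $\Pi_y(B\mid\mathfrak{G})$ is the same for all $y\in\mathbb{S}^1$.

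The clean fix avoids statements indexed by individual $y$. Let $\mathcal{N}$ denote the $\mathbf{P}$-null sets of $\sigma(X)$. Since $\mathfrak{G}\subseteq\sigma(X)$, $Y\independent X\mid\mathfrak{G}$ holds iff $\mathbf{P}(Y\in B\mid X)=\mathbf{P}(Y\in B\mid\mathfrak{G})$ a.s.\ for every Borel $B\subseteq\mathbb{S}^1$. Equivalently, each $\mathbf{P}(Y\in B\mid X)$ is $\mathfrak{G}\vee\mathcal{N}$-measurable. Hence $\mathfrak{G}^*:=\sigma\{\mathbf{P}(Y\in B\mid X):B\text{ Borel}\}\vee\mathcal{N}$ is SDR, and $\mathfrak{G}^*\subseteq\mathfrak{G}_0\vee\mathcal{N}$ for every SDR $\mathfrak{G}_0$. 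The choice of versions only matters modulo $\mathcal{N}$, and the domination hypothesis is not even needed.

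If you want to keep the sufficiency route, you must first exhibit a version of $\{\Pi_y\}$ for which every SDR field is sufficient for \emph{every} $y$. Building one, for example from densities of $\mathbf{P}_{Y\mid X=x}$ with respect to $\mathbf{P}_Y$, essentially requires the direct construction above.
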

Typically, $\mathfrak{G}_{Y|X}$ is abstract and difficult to estimate directly. However, in the specific model structures considered in \eqref{eq:model_classes}, we exploit the single-index form $Y=f(\beta^\top X,\eta)$ with $\eta\independent X$, to convert this infinite dimensional problem of estimating $\mathfrak{G}^*_{Y|X}$ to a finite dimensional problem of estimation of the true parameter vector $\beta$.  Under suitable identifiability conditions (see Assumptions \ref{assmp:X_support_related}-\ref{assmp:B_related} and Appendix \ref{appendix:identifiability}), this parameter $\beta$ can be uniquely recovered and the corresponding central sigma field $\mathfrak{G}^*_{Y|X}$ can be determined uniquely up to $\mathbf{P}_X$-null sets. We state this formally in \Cref{thm:central_subspace_finding_engression} and \Cref{thm:central_subspace_recovery}. 


\begin{theorem}
    \label{thm:central_subspace_finding_engression}
    Let the true data generating process (DGP) be $Y|x\sim f_0\left(\beta_0^\top x,\varepsilon\right)$, where $f_0\in \mathcal{M}$ takes the structural form $f_0(t,\varepsilon)=g(t)+h(\varepsilon)$ or $f_0(t,\varepsilon)=g(t+h(\varepsilon))$, with $\beta_0$ being the true DGP parameter. Assume that the ANGLE objective \eqref{eq:engression_population_version} is well-defined at the true parameter values, and that the map $t\mapsto f_0(t,\cdot)$ is injective on $\left\{\beta_0^\top x \mid x\in \mathcal{X}\right\}$ (i.e., the conditional law is injective with respect to the linear predictor). Then the central sigma field is given by $\mathfrak{G}_{Y|X}^*=\sigma(\beta_0^\top X)$, upto $\mathbf{P}_X$-null sets. 
\end{theorem}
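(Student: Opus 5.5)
We prove two inclusions up to $\mathbf{P}_X$-null sets. First, $\sigma(\beta_0^\top X)$ is a sufficient sub-sigma field in the sense of \eqref{eq:sdr_definition}. Second, any sufficient sub-sigma field contains it. Then \Cref{thm:central_sigma_field_exists} identifies $\mathfrak{G}^*_{Y|X}$ as the minimal such field, and minimality forces equality.

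\textbf{Sufficiency.} Under the DGP, $Y=f_0(\beta_0^\top X,\varepsilon)$ with $\varepsilon\independent X$. Here the $\bmod 2\pi$ wrapping is a fixed measurable map and can be absorbed into $f_0$. Hence for every bounded measurable $\phi$ on $\mathbb{S}^1$,
\[
\mathbf{E}[\phi(Y)\mid X]=\int_0^1 \phi\bigl(f_0(\beta_0^\top X,e)\bigr)\,de=:\Phi(\beta_0^\top X).
\]
By Fubini, $\Phi$ is measurable, so this conditional expectation is $\sigma(\beta_0^\top X)$-measurable. The tower property then gives $\mathbf{E}[\phi(Y)\mid X]=\mathbf{E}[\phi(Y)\mid\beta_0^\top X]$, which is exactly $Y\independent X\mid\sigma(\beta_0^\top X)$. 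Consequently, by \Cref{thm:central_sigma_field_exists}, $\mathfrak{G}^*_{Y|X}\subseteq\sigma(\beta_0^\top X)$. The theorem's domination hypothesis is taken as given through the well-definedness assumption.

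\textbf{Minimality.} Let $\mathfrak{G}_0\subseteq\sigma(X)$ satisfy \eqref{eq:sdr_definition}. Then the regular conditional law $\kappa(X):=\mathbf{P}_{Y|X}(\cdot\mid X)$ is $\mathfrak{G}_0$-measurable up to null sets. By the computation above, $\kappa(X)=\mathcal{L}\bigl(f_0(\beta_0^\top X,\varepsilon)\bigr)=K(\beta_0^\top X)$, where $K:t\mapsto \mathcal{L}(f_0(t,\varepsilon))$ maps the set $T=\{\beta_0^\top x:x\in\mathcal X\}$ into the space $\mathcal{P}(\mathbb S^1)$ of probability measures on the circle, equipped with the weak topology. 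This is a Polish space.

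The injectivity hypothesis, read as injectivity of $t\mapsto$ the law of $f_0(t,\varepsilon)$, says that $K$ is injective on $T$. $K$ is measurable, since it is determined by the measurable maps $t\mapsto\int\phi(f_0(t,e))\,de$. By the Lusin--Souslin theorem, an injective Borel map between standard Borel spaces has a Borel inverse on its image. Therefore $\beta_0^\top X=K^{-1}(\kappa(X))$ is $\mathfrak{G}_0$-measurable, so $\sigma(\beta_0^\top X)\subseteq\mathfrak{G}_0$.

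Applying this with $\mathfrak{G}_0=\mathfrak{G}^*_{Y|X}$ gives the reverse inclusion, and hence $\mathfrak{G}^*_{Y|X}=\sigma(\beta_0^\top X)$ up to $\mathbf{P}_X$-null sets.

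\textbf{Main obstacle.} The delicate step is this measurable-inversion argument, in two respects.

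1. We must make precise that a $\mathfrak{G}_0$-measurable version of $\kappa$ exists. Conditional independence gives $\mathbf{P}(Y\in A\mid X)=\mathbf{P}(Y\in A\mid\mathfrak{G}_0)$ almost surely for each Borel $A$. Passing to a countable generating algebra of $\mathbb{S}^1$ then yields a $\mathfrak{G}_0$-measurable version of the kernel.

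2. We must handle the fact that $T$ need not be Borel. If $T$ is not Borel, restrict to a Borel subset carrying full $\mathbf{P}_{\beta_0^\top X}$-measure, which exists by completion regularity. This is also where the ``up to null sets'' qualification enters.
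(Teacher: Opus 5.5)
Your proposal is correct and follows the same route as the paper. Sufficiency of $\sigma(\beta_0^\top X)$ comes from $\varepsilon\independent X$, and the reverse inclusion recovers $\beta_0^\top X$ from the conditional law of $Y$ given $X$ through injectivity of $t\mapsto\mathcal{L}(f_0(t,\varepsilon))$; the paper phrases this step as a contradiction with $\mathfrak{G}^*_{Y|X}=\sigma(R(X))$ plus Doob--Dynkin, while your Lusin--Souslin argument supplies the Borel measurability of the inverse map that the paper takes for granted, and since your two inclusions already make $\sigma(\beta_0^\top X)$ the smallest sufficient field, you do not need the (unjustified) claim that well-definedness of the objective yields the domination hypothesis of \Cref{thm:central_sigma_field_exists}.
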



Using \Cref{thm:central_subspace_finding_engression}, we now propose the following methodology for performing SDR with ANGLE. Let $f_0(x,\varepsilon)=g_0(\beta_0^\top x,h_0(\varepsilon))$ be the true DGP. We recover these parameters through ANGLE:
\begin{align}\label{eq:sdr_recovery_optimization}
    (\widetilde{g},\widetilde{\beta},\widetilde{h})\in \argmin_{g\in \mathcal{G},\,h\in \mathcal{H},\,\beta\in \mathcal{B}}\left\{\mathbf{E}\left[\mathcal{F}\left(\mathrm{d}(Y,g(\beta^\top X,h(\varepsilon)))\right)-\frac{1}{2}\mathcal{F}\left(\mathrm{d}(g(\beta^\top X,h(\varepsilon)),g(\beta^\top X,h(\varepsilon^*)))\right)\right]\right\}.
\end{align}

\begin{theorem}
\label{thm:central_subspace_recovery}
Let $X$ be absolutely continuous on $\mathbb{R}^d$ and the assumptions in \Cref{thm:central_subspace_finding_engression} are true. Let $\widetilde{\beta}$ be the index parameter obtained by  minimizing the objective \eqref{eq:sdr_recovery_optimization}. Then $\widetilde{\beta} = c\beta_0$ for some $c \neq 0$, and consequently $ \mathfrak{G}^*_{Y|X} = \sigma(\widetilde{\beta}^\top X)$.
\end{theorem}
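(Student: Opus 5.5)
The plan has three steps. First I apply \Cref{thm:population_guarantee_engression} to the objective \eqref{eq:sdr_recovery_optimization}. Then I use absolute continuity of $X$ to turn the resulting almost-everywhere distributional identity into a statement about level sets of two linear forms. Finally I invoke \Cref{thm:central_subspace_finding_engression} to identify the central sigma field.

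\textbf{Step 1.} The true triple $(g_0,\beta_0,h_0)$ lies in $\mathcal{G}\times\mathcal{B}\times\mathcal{H}$, so the model is correctly specified, and the GCES loss is strictly proper. By \Cref{thm:population_guarantee_engression}, any minimizer $(\widetilde g,\widetilde\beta,\widetilde h)$ satisfies
\[
\widetilde g(\widetilde\beta^\top x,\widetilde h(\varepsilon))\overset{d}{=} f_0(\beta_0^\top x,\varepsilon)
\]
for $\mathbf P_X$-almost every $x$. Since $X$ is absolutely continuous, this identity holds on a set $A\subseteq\mathcal{X}$ of full Lebesgue measure inside the support. Define $\Phi(t)$ as the law of $f_0(t,\varepsilon)$ and $\widetilde\Phi(s)$ as the law of $\widetilde g(s,\widetilde h(\varepsilon))$. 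On $A$ we then have $\widetilde\Phi(\widetilde\beta^\top x)=\Phi(\beta_0^\top x)$.

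\textbf{Step 2.} This step shows $\widetilde\beta$ is parallel to $\beta_0$; I expect it to be the main obstacle. By the injectivity assumption of \Cref{thm:central_subspace_finding_engression}, $\Phi$ is injective on $\{\beta_0^\top x\}$. Hence for $x,x'\in A$, the equality $\widetilde\beta^\top x=\widetilde\beta^\top x'$ forces $\Phi(\beta_0^\top x)=\Phi(\beta_0^\top x')$, and therefore $\beta_0^\top x=\beta_0^\top x'$. So $\beta_0^\top x$ is a function of $\widetilde\beta^\top x$ on $A$, and in particular $\widetilde\beta\neq 0$, since otherwise $\beta_0^\top x$ would be constant on a set of positive measure, contradicting $\beta_0\ne0$.

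Now suppose $\beta_0\notin\mathrm{span}(\widetilde\beta)$. Then some direction $v$ satisfies $\widetilde\beta^\top v=0$ and $\beta_0^\top v\neq 0$. The support of an absolutely continuous $X$ contains an open ball, or at least a set of positive measure. By Fubini, almost every line $\{x+sv\}$ meets $A$ in a set of $s$ with positive one-dimensional measure. On such a line $\widetilde\beta^\top x$ is constant while $\beta_0^\top x$ varies, which is a contradiction. The delicate point is the positive-measure/Fubini argument when the support is not open. I would first try assuming the support has nonempty interior, which is essentially implicit in the identifiability assumptions referenced in Appendix \ref{appendix:identifiability}, and otherwise restrict to a density point of $\mathbf P_X$. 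The conclusion is $\widetilde\beta=c\beta_0$ with $c\neq0$.

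\textbf{Step 3.} Since $c\neq0$, $\sigma(\widetilde\beta^\top X)=\sigma(c\beta_0^\top X)=\sigma(\beta_0^\top X)$, because $t\mapsto ct$ is a bimeasurable bijection. By \Cref{thm:central_subspace_finding_engression} the latter equals $\mathfrak{G}^*_{Y|X}$ up to $\mathbf P_X$-null sets, which completes the argument.
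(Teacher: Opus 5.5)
Your proof is correct, but it gets to $\widetilde\beta=c\beta_0$ by a different route than the paper.

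\textbf{The paper's route.} It first notes that $\sigma(\widetilde\beta^\top X)$ is an SDR sigma field. By the minimality of $\mathfrak{G}^*_{Y|X}=\sigma(\beta_0^\top X)$ (via \Cref{thm:central_sigma_field_exists} and \Cref{thm:central_subspace_finding_engression}), it deduces $\sigma(\beta_0^\top X)\subseteq\sigma(\widetilde\beta^\top X)$. Doob--Dynkin then gives $\beta_0^\top X=\psi(\widetilde\beta^\top X)$ a.s. Finally, if the two vectors were linearly independent, $(\widetilde\beta^\top X,\beta_0^\top X)$ would be absolutely continuous on $\mathbb{R}^2$ yet concentrated on the $\lambda_2$-null graph of $\psi$.

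\textbf{Your route.} You get the functional dependence pointwise on a $\mathbf{P}_X$-full set directly from the injectivity hypothesis. You then replace the graph-null argument by slicing along a direction $v\in\ker\widetilde\beta^\top\setminus\ker\beta_0^\top$.

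\textbf{Comparison.} Both arguments rest on the same Fubini fact. Yours is more self-contained: the parallelism $\widetilde\beta=c\beta_0$ needs neither the existence and minimality of the central sigma field (with its domination hypothesis) nor Doob--Dynkin. You also actually justify $\widetilde\beta\neq 0$, which the paper simply assumes. You need \Cref{thm:central_subspace_finding_engression} only for the final identification of $\mathfrak{G}^*_{Y|X}$. The paper's version is shorter once the SDR theorems are in hand, and it makes the SDR interpretation of $\widetilde\beta$ explicit.

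\textbf{Minor cleanup.}
\begin{itemize}
\item The ``delicate point'' you flag is not delicate. From $\mathbf{P}_X(A)=1$ and $\mathbf{P}_X\ll\lambda_d$ you get $\lambda_d(A)>0$ at once; no open ball, interior, or density point is needed. Fubini over $v^\perp\times\mathbb{R}v$ then gives a set of base points $y$ of positive $\lambda_{d-1}$-measure whose line $\{y+sv\}$ meets $A$ in a set of positive length. In particular it meets $A$ at two points $s_1\neq s_2$, which is all you use.
\item ``Almost every line'' is false in general, since most lines miss a bounded $A$; say ``a positive-measure family of lines'' instead.
\item ``Full Lebesgue measure inside the support'' should read ``$\mathbf{P}_X$-full, hence positive Lebesgue, measure.''
\item Take $A$ Borel so that the slicing is legitimate.
\end{itemize}
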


Proofs for Theorems \ref{thm:central_sigma_field_exists}-\ref{thm:central_subspace_recovery} are provided in Appendix \ref{sec:proofs_of_main_results}. The recovery of the central sigma field proceeds similarly for both the pre- and post-ANMs in our framework, since both are estimated through the same energy score optimization. However, outside this common estimation strategy, recovering the central sigma-field in pre-additive models is generally more challenging; see \cite{chen2021ultrahigh} for related developments in the Euclidean setting. Note that the single-index mode can be extended by replacing $\beta\in\mathbb{R}^d$ with $\mathbf{B}\in \mathbb{R}^{d\times b}, b < d$, to obtain a reduced $b$-dimensional subspace through SDR. Following the dimension reduction step, one can readily fit a stage-two ANGLE on the reduced covariates for downstream tasks, or perform standard post-SDR inferential tasks \citep{kim2026valid}.

\subsection{Extrapolation}
\label{sec:identifiability_and_extrapolation}


A key advantage of engression over classical distributional regression techniques is its ability to extrapolate, i.e., to produce reliable predictions at covariate values lying outside the support of the training data. In the Euclidean case, extrapolation is facilitated by the fact that the signal $g(\beta^\top x)$ can continue to evolve continuously outside the observed covariate range \citep{shen2025engression}, while the additive noise structure remains preserved. On $\mathbb{S}^1$, however, the response necessarily wraps around the circle, so that a continuously increasing signal does not move progressively farther away from the observed responses in any meaningful geometric sense, but instead repeatedly cycles through the same angular values. To make this notion precise, we quantify the degree to which two conditional circular distributions that are indistinguishable on the training support may nevertheless differ at nearby extrapolation points, formalized through the following definition.


\begin{definition}[Distributional extrapolability]
 For a $\delta>0$, define the distributional extrapolation uncertainty as 
    \begin{align*}
        U_{\mathcal{P}}(\delta)=\sup_{\left\{x^*:\ \inf\{\abs{x^*-x}:x\in \mathcal{X}\}\leq \delta\}\right\}}\left\{\sup_{\substack{\mathbf{P},\mathbf{P}^*\in \mathcal{P}\\ \mathrm{D}(\mathbf{P,\mathbf{P}^*})=0 \ \forall x\in \mathcal{X}}}\left\{\mathrm{D}\left(\mathbf{P}(y|x^*),\mathbf{P}^*(y|x^*)\right)\right\}\right\},
    \end{align*}
  where $\mathrm{D}$ is a suitable measure of distance between probability distributions and $\mathcal{P} = \left\{\mathbf{P}(y|x)\right\}$ is a class of conditional distributions. $\mathcal{P}$ is said to be distributionally extrapolable upto $\delta_0$ if $\exists \ 0<\delta<\delta_0$ such that $U_\mathcal{P}(\delta)=0$.
\end{definition}

As identifiability of the model classes serve as a basis for extrapolability \citep{shen2025engression}, we introduce some conditions to study identifiability of \eqref{eq:model_classes}. Specifically, we take
    $\mathcal{G}=\left\{g|g\text{ satisfies \Cref{assmp:g_related}}\right\}, \text{ and } \mathcal{H}=\left\{h|h\text{ satisfies \Cref{assmp:h_related}}\right\}.$
\begin{assumption}
    \label{assmp:X_support_related}
  The training support $\mathcal{X}$ is a compact subset of $\mathbb{R}^d$.  The covariate $X$ admits a density that is bounded and strictly positive on an open convex set $\mathcal{X}' \subseteq \mathcal{X}$. 
\end{assumption}
\begin{assumption}
\label{assmp:g_related}
    \begin{enumerate}
    \item [(A)] The function $g$ is non-constant and smooth.
    \item [(B)] The support of $g$ in the model classes defined in \eqref{eq:model_classes} is compact and convex with a non-empty interior.
    \item[(C)] For model classes $\mathbf{M}_{\text{pre}}^{(1)}$ and  $\mathbf{M}_{\text{pre}}^{(2)}$ with pre-additive noise, $g$ is strictly monotone.
    \end{enumerate}
\end{assumption}
\begin{assumption}
    \label{assmp:h_related}
$h:[0,1]\rightarrow\mathbb{R}$ is a strictly monotonically increasing, bounded differentiable function with $h\left(\frac{1}{2}\right)=0$ and $\abs{h(1)-h(0)}<2\pi$.
\end{assumption}
\begin{assumption}
    \label{assmp:B_related}
    The set of coefficient vectors is $\mathcal{B} = \{\beta\in \mathbb{R}^d \ | \ \beta_{(1)} > 0, \ \|\beta\| = 1\}$.
\end{assumption}
The assumptions and identifiability of \eqref{eq:model_classes} are discussed in Appendix \ref{appendix:identifiability} in detail. We demonstrate in \Cref{thm:extrapolability_criteria} that all four model classes in \eqref{eq:model_classes} are extrapolable. 

 

\begin{theorem}
    \label{thm:extrapolability_criteria}
    Let the domain of the functions in $\mathcal{G}$ be $\mathcal{I} \subset \mathbb{R}$, which is an open connected interval. Suppose that Assumptions \ref{assmp:X_support_related}-\ref{assmp:B_related} hold and all $g\in \mathcal{G}$ are real analytic on $\mathcal{I}$. 
    \begin{enumerate}
        \item[(A)] Let $\mathfrak{T}_{\text{post}} = \{\beta^\top x \mid x \in \mathcal{X}, \beta \in \mathcal{B}\}$. Assume $\mathfrak{T}_{\text{post}} \subset \mathcal{I}$. Then there exists a $\delta_0 > 0$ such that for all $0 < \delta \le \delta_0$, $U_{\mathbf{M}_{\text{post}}^{(1)}}(\delta) = U_{\mathbf{M}_{\text{post}}^{(2)}}(\delta) = 0$.
        \item[(B)] Let $\mathfrak{T}_{\text{pre}} = \{\beta^\top x + h(\varepsilon) \mid x \in \mathcal{X}, \varepsilon \in [0,1], \beta \in \mathcal{B}, h \in \mathcal{H}\}$. Assume $\mathfrak{T}_{\text{pre}} \subset \mathcal{I}$. Then there exists a $\delta_0 > 0$ such that for all $0 < \delta \le \delta_0$, $U_{\mathbf{M}_{\text{pre}}^{(1)}}(\delta) = U_{\mathbf{M}_{\text{pre}}^{(2)}}(\delta) = 0$.
    \end{enumerate}
\end{theorem}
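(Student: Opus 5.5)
The plan is to combine identifiability on the training support with real-analytic continuation of the link $g$. Take two conditional laws $\mathbf P,\mathbf P^*$ from the same model class, with parameters $(g,\beta,h)$ and $(g^*,\beta^*,h^*)$, that agree for every $x\in\mathcal X$. First I will show that the parameters coincide, up to the ambiguities that cannot affect any conditional law (additive multiples of $2\pi$ in the wrapped classes). Then I will show that the two models induce the same law at every $x^*$ whose index $\beta^\top x^*$ (respectively $\beta^\top x^*+h(\varepsilon)$) stays in $\mathcal I$. Since $\mathcal X$ is compact, $\mathfrak T_{\text{post}}$ or $\mathfrak T_{\text{pre}}$ is a compact subset of the open interval $\mathcal I$. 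So there is $\delta_0>0$ such that for every $\beta\in\mathcal B$ (unit norm) and every $x^*$ within distance $\delta_0$ of $\mathcal X$, we have $|\beta^\top x^*-\beta^\top x|\le\delta_0$ for some $x\in\mathcal X$, and the index remains in $\mathcal I$. Then $\mathrm D=0$ at all such $x^*$, which gives $U(\delta)=0$ for $\delta\le\delta_0$.

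\textbf{Identifiability on $\mathcal X$.} This part relies on the identifiability results of Appendix~\ref{appendix:identifiability} under Assumptions~\ref{assmp:X_support_related}--\ref{assmp:B_related}. For the post-ANM classes I will use circular moments. Equality of laws for all $x\in\mathcal X'$ gives
\begin{align*}
e^{\iota r g(\beta^\top x)}\psi_h(r)=e^{\iota r g^*(\beta^{*\top} x)}\psi_{h^*}(r)\quad\text{for all } r\in\mathbb Z .
\end{align*}
Because $|h(1)-h(0)|<2\pi$, the noise law is not lattice-supported, so $\psi_h(1)\neq0$. Taking $r=1$ shows that $g(\beta^\top x)-g^*(\beta^{*\top}x)$ is constant modulo $2\pi$ on $\mathcal X'$, and by continuity it is a single constant $c$ on the convex open set $\mathcal X'$. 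Differentiating in $x$ gives $g'(\beta^\top x)\beta=g^{*\prime}(\beta^{*\top}x)\beta^*$. Since $g$ is non-constant and analytic, $g'\neq0$ at some point, so $\beta\parallel\beta^*$, and the normalization in Assumption~\ref{assmp:B_related} forces $\beta=\beta^*$. The centering $h(1/2)=0$ together with the moment identity $\psi_h(r)=e^{\iota rc}\psi_{h^*}(r)$ then lets me absorb or pin down $c$, giving $h\overset{d}{=}h^*$ modulo the shift. For the pre-ANM classes, the strict monotonicity of $g$ and $h$ (Assumption~\ref{assmp:g_related}(C)) means the conditional quantile map $\varepsilon\mapsto g(\beta^\top x+h(\varepsilon))$ is monotone, which yields $g(\beta^\top x+h(\varepsilon))=g^*(\beta^{*\top}x+h^*(\varepsilon))$ as functions of $(x,\varepsilon)$. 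Differentiating in $x$ and in $\varepsilon$ then separates $\beta$ from $h'$ in the same way.

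\textbf{Analytic continuation.} Once $\beta=\beta^*$, I will have $g=g^*+c$ (post case) or $g=g^*$ after reparametrization (pre case) on the set $\{\beta^\top x: x\in\mathcal X'\}$ (respectively its $h$-thickening). This set contains a nondegenerate interval, because $\mathcal X'$ is open. Since $g$ and $g^*$ are real analytic on the connected interval $\mathcal I$, the identity theorem extends the equality to all of $\mathcal I$. The conditional laws at $x^*$ are push-forwards of $\varepsilon$ through these identical maps, composed with $\bmod\,2\pi$ in the wrapped classes, so $\mathrm D(\mathbf P(\cdot|x^*),\mathbf P^*(\cdot|x^*))=0$ for any distance $\mathrm D$ between distributions.

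\textbf{Main obstacle.} I expect the hardest step to be the identifiability step for the wrapped classes. There, equality holds only modulo $2\pi$, and one must rule out $g-g^*$ jumping between different multiples of $2\pi$ and rule out noise laws related by rotation. My first approach will be to use continuity on the connected set $\mathcal X'$ to force a single constant, and to use the bound $|h(1)-h(0)|<2\pi$ so the noise is not wrapped onto itself, which makes $\psi_h(1)\neq0$ and keeps the support arc identifiable.
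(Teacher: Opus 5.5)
Your architecture (identify the parameters on the training support, continue $g$ analytically by the identity theorem, then choose $\delta_0$ from a compactness margin) is the same as the paper's. Your gradient argument for $\beta=\beta^*$ is a clean substitute for the paper's appeal to Lin and Kulasekera; it uses that zeros of $g'$ are isolated on $\beta^\top\mathcal{X}'$.

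\textbf{The failing step: identification in $\mathbf{M}_{\text{pre}}^{(1)}$.} Matching quantiles of $g(\beta^\top x+h(\varepsilon))\bmod 2\pi$ says nothing about the unwrapped monotone map unless its range $g(\beta^\top x+h([0,1]))$ is an interval of length $<2\pi$. Assumption~\ref{assmp:h_related} bounds the range of $h$, not of $g$ composed with the noisy index. So the tool you name in your last paragraph, $|h(1)-h(0)|<2\pi$, is the wrong one here. The paper's proof gets this only by invoking Theorem~\ref{thm:identifiability_all}(C), which carries exactly this range restriction as an extra hypothesis not listed in the theorem. Without it your claimed identity is false. Take $h(\varepsilon)=L(\varepsilon-\tfrac12)$ with $0<L<2\pi$. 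The analytic, strictly increasing links $g(t)=2\pi t/L$ and $g^*(t)=4\pi t/L$, with the same $\beta$ and $h$, both give the uniform law on the circle at every $x$. Yet $g-g^*$ is not a constant multiple of $2\pi$ on $\beta^\top\mathcal{X}'+h([0,1])$. The theorem's conclusion survives in this example, but your route to it does not.

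You can either add the range hypothesis or bypass identifiability entirely. For each $r\in\mathbb{Z}$, the map $s\mapsto\int_0^1\exp\{\iota r\,g(s+h(\varepsilon))\}\,d\varepsilon$ is real-analytic on the interval $\{s: s+h([0,1])\subset\mathcal{I}\}$. To see this, extend $e^{\iota r g}$ holomorphically to a complex neighbourhood of the compact set $s_0+h([0,1])$ and integrate. Hence equality of all circular moments for $s\in\beta^\top\mathcal{X}'$ continues to the extrapolation points. If $\beta\neq\beta^*$, the two moment functions agree on a product of open intervals, so both are constant there and therefore constant everywhere. This handles all four classes uniformly.

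\textbf{Two smaller repairs.}

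First, ``not lattice-supported, hence $\psi_h(1)\neq 0$'' is not a valid implication: the uniform law on $[0,2\pi)$ is non-lattice with $\psi(1)=0$. Also, $|h(1)-h(0)|<2\pi$ does not rule out $\psi_h(1)=0$ once the arc is longer than $\pi$; the paper asserts this step without proof too. The fix is easy. The wrapped law of $h(\varepsilon)$ lives on a proper arc, so it is not uniform, and some $r_0\neq0$ has $\psi_h(r_0)\neq 0$. Your moment identity gives $|\psi_h(r_0)|=|\psi_{h^*}(r_0)|$. Then $g(\beta^\top x)-g^*(\beta^{*\top}x)$ lies in a single coset of $(2\pi/r_0)\mathbb{Z}$, and your connectedness argument on $\mathcal{X}'$ still makes it constant.

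Second, $\mathfrak{T}_{\text{post}}$ and $\mathfrak{T}_{\text{pre}}$ are not compact. $\mathcal{B}$ is an open half-sphere, and $\mathcal{H}$ is cut out by a strict inequality (for instance $\sup_{h\in\mathcal{H}}h(1)=2\pi$ is not attained). So $\mathfrak{T}\subset\mathcal{I}$ does not give a positive distance to $\mathbb{R}\setminus\mathcal{I}$. The supremum over pairs in $U(\delta)$ needs a $\delta_0$ uniform over pairs, which requires $\overline{\mathfrak{T}}\subset\mathcal{I}$. The paper instead uses the per-model compact set of Lemma~\ref{lemma:compactness_condition}. That is correct but only gives a $\delta_0$ that depends on the pair.
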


The proof of \Cref{thm:extrapolability_criteria} is given in Appendix \ref{sec:proofs_of_main_results}. Under the identifiability assumptions, the real analytic restriction makes the model classes (locally) distributionally extrapolable. It is useful to contrast our extrapolability result with the engression framework of \citet[Theorem 1]{shen2025engression} for classical regression setup, where functional extrapolability of post-ANMs is necessary and sufficient for distributional extrapolability. They operate under a twice-differentiable ($C^2$) assumption which does not inherently guarantee functional extrapolability, and require unbounded noise. In our circular framework, the additive noise is intrinsically bounded because of the bounded transformation $h$. To achieve extrapolability, we instead constrain our model class $\mathcal{G}$ to real analytic functions. By the identity theorem for analytic functions, agreement on the training support dictates global agreement across the connected domain $\mathcal{I}$. Consequently, the identifiability assumptions and real analytic constraint inherently renders the function class functionally extrapolable on the extended domain $\mathcal{I}$, naturally satisfying the criteria for distributional extrapolability. Another distinction concerns the covariate space itself. While \cite{shen2025engression} allow the training support $\mathcal{X}$ to be potentially unbounded, we assume that $\mathcal{X}$ is compact. Compactness is entirely consistent with practice, since observed covariates are often finite valued and necessarily lie within a bounded region. We assume that the feature space $\mathcal{I}$ extends beyond the observed training data as an open connected but bounded domain. This is a natural assumption in essentially all real applications, since covariates are always constrained by physical, geometric, or measurement limitations and therefore cannot vary arbitrarily.

\subsection{Testing Equality of Conditional Distributions}
\label{sec:application_equality_testing}
In this section, we demonstrate how the ANGLE estimator can be applied for testing of equality of conditional distributions. Consider two independent samples $\mathcal{S}_1=\{(Y_{1i},X_{1i})|i=1,2,\ldots,n_1\}$ and $\mathcal{S}_2=\{(Y_{2i},X_{2i})|i=1,2,\ldots,n_2\}$ generated from the true conditional distributions $\mathbf{P}^{(1)}_{Y|X}$ and $\mathbf{P}^{(2)}_{Y|X}$, respectively. We test 
\begin{align}\label{eq:distribution_equality_test}
    H_0:\mathbf{P}^{(1)}_{Y|X}=\mathbf{P}^{(2)}_{Y|X}\text{ vs. }H_1:\mathbf{P}^{(1)}_{Y|X}\neq\mathbf{P}^{(2)}_{Y|X}.
\end{align}
Tests of this form are widely used in various domains, especially in the context of covariate shift \citep{qiu2024efficient} and causal inference \citep{fan2024environment}. We take inspiration from the GCA-CDET test of \cite{zheng2025conditional}, adapted with minimal changes to incorporate circular responses.  The pseudoscope of the test is given in \Cref{alg:conditional_test}. The test reduces \eqref{eq:distribution_equality_test} to an unconditional two-sample problem using the idea that under $H_0$, synthetic responses generated from $\mathbf{P}^{(1)}_{Y|X}$ at covariate values from $\mathcal S_2$ should be indistinguishable from the true responses in $\mathcal S_2$. The sample split in Step 2 of \Cref{alg:conditional_test} ensures the independence required for the angular two-sample test $\mathrm{ATS}$, such as Watson's test \citep{fornasin2025effects} or Kuiper's test \citep{anichini2023measuring}. Consistency of the procedure follows from \cite{zheng2025conditional} under standard smoothness and sample size conditions, provided the test $\mathrm{ATS}$ is itself consistent. Notably, several conditional independence testing procedures originally developed for Euclidean data can, in principle, be extended to the circular setting; see \cite{ren2025score,gao2026testing,zhang2026doubly} among others. 

\begin{algorithm}[h!]
\caption{Conditional distribution equality test}
\label{alg:conditional_test}
\begin{algorithmic}[1]
\Require  $\mathcal{S}_1=\{(Y_{1i},X_{1i})\}_{i=1}^{n_1}$, $\mathcal{S}_2=\{(Y_{2i},X_{2i})\}_{i=1}^{n_2}$, level $\alpha$, circular two-sample test $\mathrm{ATS}$.

\State Fit the ANGLE network on $\mathcal{S}_1$ to obtain the generative model $\widetilde{g}_1$, such that $\widetilde{g}_1(x,\varepsilon)\sim\mathbf{P}^{(1)}_{Y|X=x}$ under correct specification.
\State Randomly partition $\mathcal{S}_2$, independently of $\mathcal{S}_1$, into two equal halves $\mathcal{S}_{21}=\{(Y_{21,i},X_{21,i})\}_{i=1}^{n_2/2}$ and $\mathcal{S}_{22}= \{(Y_{22,i},X_{22,i})\}_{i=1}^{n_2/2}.$
\For{$i=1,\ldots,n_2/2$}
    \State Draw noise $\varepsilon_i \sim P_\varepsilon$ independently of $\mathcal{S}_1$ and $\mathcal{S}_2$.
     \State Generate synthetic circular response $\widetilde{Y}_{21,i}=\widetilde{g}_1(X_{21,i}, \varepsilon_i)$.
\EndFor

\State Form the synthetic dataset $\widetilde{\mathcal{S}}_{21}=\{(\widetilde{Y}_{21,i},X_{21,i})\}_{i=1}^{n_2/2}$. Under correct specification and sufficiently large sample sizes, $\widetilde{\mathcal{S}}_{21}\sim\mathbf{P}^{(1)}_{Y|X}\otimes\mathbf{P}^{(2)}_X$ and $\mathcal{S}_{22}\sim\mathbf{P}^{(2)}_{Y|X}\otimes\mathbf{P}^{(2)}_X$. Consequently, under $H_0$, the datasets $\widetilde{\mathcal{S}}_{21}$ and $\mathcal{S}_{22}$ are approximately identically distributed.
\State \Return $\mathrm{ATS}\big(\widetilde{\mathcal{S}}_{21},\mathcal{S}_{22},\alpha\big)$ and reject $H_0$ if the $p$-value $< \alpha$.
\end{algorithmic}
\end{algorithm}

\section{Simulation Experiments}
\label{sec:simulations}
We conduct a comprehensive simulation study across varied settings to empirically assess the predictive accuracy and extrapolative properties of ANGLE. The following sections detail the data generation procedures, model specification, and empirical results for standard predictive tasks - specifically, the estimation of the circular conditional mean and distribution. The simulation study for extrapolation is subsequently presented in \Cref{appendix:simulation_extrapolation}. 

\subsection{Circular Data Generation} \label{sec:circular_data_generation_sim}
We generate synthetic circular data using four approaches derived from two base mapping mechanisms: stereographic projection (arctangent link) and the Projected Normal (PN) distribution \citep{bhuyan2026modeling}, with linear and nonlinear latent structures. Euclidean and circular covariates are sampled from standard normal and von Mises distributions, respectively, and perturbed with pre-additive uniform or Gaussian noise. Each mechanism is evaluated under four configurations varying covariate dimensions and structural parameters, yielding 16 simulation settings in total. Each setting contains 2,000 training and 200 test observations. All generation settings are summarized in \Cref{table:simulation_parameter_settings} (Appendix \ref{appendix:tables_figures}).

\subsubsection{Approach 1 and 2: Arctangent Link}
We define the linear and circular covariates as $\mathbf{X}_L \sim \mathcal{N}(0,1)^{n \times p_L}$ and $\boldsymbol{\Theta}_C \sim \mathcal{VM}(0, 1)^{n \times p_C}$, respectively, where $n$ denotes the number of data points. To simulate pre-additive measurement uncertainty, we introduce independent noise matrices $\mathbf{E}_L$ and $\mathbf{E}_C$ with elements drawn from $\mathcal{U}(-\sigma, \sigma)$ or $\mathcal{N}(0, \sigma^2)$. The perturbed covariates are given by $\widetilde{\mathbf{X}}_L = \mathbf{X}_L + \mathbf{E}_L$ and $\widetilde{\boldsymbol{\Theta}}_C = (\boldsymbol{\Theta}_C + \mathbf{E}_C) \pmod{2\pi}$. To bypass boundary discontinuities, the noisy circular features are trigonometrically embedded, yielding the unified design matrix $\widetilde{\mathbf{X}} = [\widetilde{\mathbf{X}}_L, \cos(\widetilde{\boldsymbol{\Theta}}_C), \sin(\widetilde{\boldsymbol{\Theta}}_C)] \in \mathbb{R}^{n \times (p_L + 2p_C)}$. Given independent coefficient vectors $\beta_1, \beta_2 \in \mathbb{R}^{p_L + 2p_C}$, we compute the base linear combinations $\mathbf{z}_k = \widetilde{\mathbf{X}}\beta_k$ for $k \in \{1, 2\}$. To govern functional complexity, a scalar parameter $a$ introduces coupled non-linear cross-terms to form the latent signals:$$ \boldsymbol{\xi}_1 = \mathbf{z}_1 + a \tanh(\mathbf{z}_2), \quad \boldsymbol{\xi}_2 = \mathbf{z}_2 + a \tanh(\mathbf{z}_1) $$We set $a=0$ for a strictly linear latent mapping (Approach 1) and $a=4$ to induce strong non-linear interactions (Approach 2). Treating $\boldsymbol{\xi}_1$ and $\boldsymbol{\xi}_2$ as orthogonal Cartesian coordinates, we project these unbounded signals onto the circular support $[0, 2\pi)^n$ via a full-quadrant arctangent transformation:$$ \boldsymbol{\theta} = \mathrm{atan2}(\boldsymbol{\xi}_2, \boldsymbol{\xi}_1) \bmod{2\pi}. $$This methodology establishes a robust data-generating process that natively accommodates heterogeneous predictors under specified noise and varying structural non-linearities.

\subsubsection{Approach 3 and 4: Projected Normal Distribution}
Building upon the unified design matrix $\widetilde{\mathbf{X}} \in \mathbb{R}^{n \times p}$ (where $p = p_L + 2p_C$), our second data-generating mechanism derives from the PN distribution \citep{bhuyan2026modeling}. Given independent coefficient vectors $\beta_1, \beta_2 \in \mathbb{R}^p$, we compute base linear combinations $\mathbf{z}_k = \widetilde{\mathbf{X}}\beta_k$ for $k \in \{1, 2\}$. To introduce structural non-linearity, the latent Euclidean mean vectors are coupled via a scalar parameter $a$:$$ \boldsymbol{\mu}_1 = \mathbf{z}_1 + a \tanh(\mathbf{z}_2), \quad \boldsymbol{\mu}_2 = \mathbf{z}_2 + a \tanh(\mathbf{z}_1).$$
Again, we set $a=0$ for a strictly linear latent mapping (Approach 3) and $a=2$ to induce non-linear cross-interactions (Approach 4). To capture the intrinsic structural uncertainty of the isotropic PN distribution, bivariate Cartesian targets are sampled as $\mathbf{Y}_k \sim \mathcal{N}(\boldsymbol{\mu}_k, \mathbf{I}_n)$, where $k\in\{1,2\}$ and $\mathbf{I}_n$ denotes the $n\times n$ identity matrix. Finally, these unbounded coordinates are radially projected onto the standard circular domain:
$$ \boldsymbol{\theta} = \mathrm{atan2}(\mathbf{Y}_2, \mathbf{Y}_1) \bmod{2\pi}.$$
This procedure implicitly models the conditional response as $\theta_i | \tilde{\boldsymbol{x}}_i \sim \mathcal{PN}_2(\boldsymbol{\mu}_i, \mathbf{I}_2)$.

\subsubsection{Data Generation Settings and Parameters}
Details such as number of covariates and the corresponding parameters governing these simulations are given in Table \ref{table:simulation_parameter_settings} (Appendix \ref{appendix:tables_figures}). For both the arctangent and PN formulations, the true regression coefficient vectors $\beta_k$ expand to dimension $p_L + 2p_C$ to accommodate the trigonometric embedding. Specifically, the first $p_L$ elements govern the linear covariates, while the remaining $2p_C$ elements serve as paired weights for the cosine and sine components of the circular features. For high-dimensional configurations (Settings 1.4, 2.4, 3.4, and 4.4) where explicit specification is intractable, the 125-dimensional coefficient vectors are drawn independently from $\mathcal{U}(-10, 10)$. Fig.  \ref{fig:Training_Data_Arctan_Setting_1.1} illustrates the first setting (1.1) with linear arctangent link and 2 Euclidean covariates; analogous plots for the remaining settings are omitted for brevity.

\begin{figure}
        \centering
        \includegraphics[width=\linewidth]{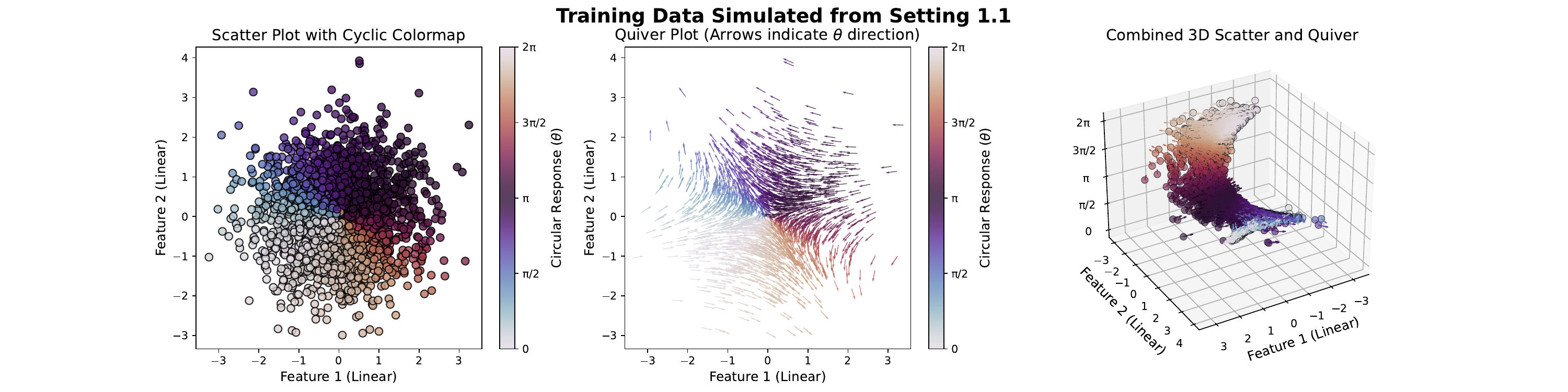}
        \caption{{\small 2000 training points generated from setting 1.1; Scatter plot (left), quiver plot with arrows depicting the direction of the observations in 2D (middle), and the same in 3D (right).}}
        \label{fig:Training_Data_Arctan_Setting_1.1}
\end{figure}

\subsection{Model Specification and Simulation Results} \label{sec:simulation_model_specification}

 All simulated datasets are fitted using a 3-layer (pre-ANM) ANGLE network with hidden dimension 100 and noise dimension 64. To evaluate the effect of geometry on distributional learning, we train separate models using geodesic and chordal distance-based energy score losses. Training is performed for 500 epochs with learning rate 0.05 and no batch normalization. Performance is assessed using mean absolute angular deviation (MAAD), circular mean directional error (CMDE or cosine loss), and circular continuous ranked probability score (CRPS; see Appendix \ref{appendix:metrics} for definitions). Since ANGLE is generative, inference is done based on 100 samples per test point. Further, the point metrics are computed from the circular mean, while CRPS uses the full predictive distribution. We compare against the baseline models listed in Table \ref{table:baseline_models}, whose architectural and implementation details are deferred to Appendix \ref{appendix:Baseline_models}. We exclude Bayesian implementations due to their substantially higher computational cost; however, Bayesian von-Mises Quasi Processes (vMQP; \cite{cohen2025bayesian}) is included for comparison during real data analysis.



{\tiny
\begin{longtable}[]{@{}p{0.3\linewidth}cccccccc@{}}
\caption{{\small Comparison of baseline models for circular data. Columns indicate native support for linear/circular covariates, input/output ranges, intrinsic uncertainty quantification (UQ), computational efficiency, and evaluation tasks (simulations, object pose estimation (OPE), and wind direction prediction (WDP)).}}\label{table:baseline_models}
\tabularnewline
\toprule\noalign{}
Model & \makecell{Linear\\covariates} & \makecell{Circular\\covariates} & \makecell{Input/output\\range} & UQ & Lightweight & Simulations & OPE & WDP \\
\midrule\noalign{}
\endfirsthead
\toprule\noalign{}
Model & Linear covariates & Circular covariates & Input/output range \\
\midrule\noalign{}
\endhead
\bottomrule\noalign{}
\endlastfoot
Circular Linear Regression (CLR; \cite{presnell1998projected}) & \cmark & \xmark & $(-\pi, \pi]$ & \xmark & \cmark & \cmark & \xmark & \cmark\\ 
Nonparametric Kernel CLR \citep{taylor2012non} & \cmark & \cmark & $(-\pi, \pi]$ & \xmark & \cmark & \cmark & \xmark & \cmark \\
SCBR-NLL \citep{hassanzadeh2021smoothing} & \cmark & \xmark & $[0, 2\pi)$ & \xmark & \cmark & \cmark & \xmark & \xmark \\
SCBR-MCPE \citep{hassanzadeh2021smoothing} & \cmark & \xmark & $[0, 2\pi)$ & \xmark & \cmark & \cmark & \cmark & \cmark \\
Lifted EP-MGvM \citep{wu2019probabilistic} & \cmark & \xmark & $(-\pi, \pi]$ & \xmark & \cmark & \cmark & \cmark & \cmark \\
MCR2 \citep{jha2017multiple} & \xmark & \cmark & $[0, 2\pi)$ & \xmark & \xmark & \cmark & \cmark & \cmark \\
vMQP \citep{cohen2025bayesian} & \cmark & \xmark & $(-\pi, \pi]$ & \cmark & \xmark & \xmark & \xmark & \cmark \\
Parametric Mixture Models \citep{prokudin2018deep} & - & - & $[0,2\pi)$ & \cmark & \xmark & \xmark & \cmark & \xmark \\
\textbf{ANGLE (Proposed)} & \cmark & \cmark & $[0, 2\pi)$ & \cmark & \cmark & \cmark & \cmark & \cmark \\
\end{longtable}
}

\begin{figure}
        \centering
        \includegraphics[width=0.7\linewidth]{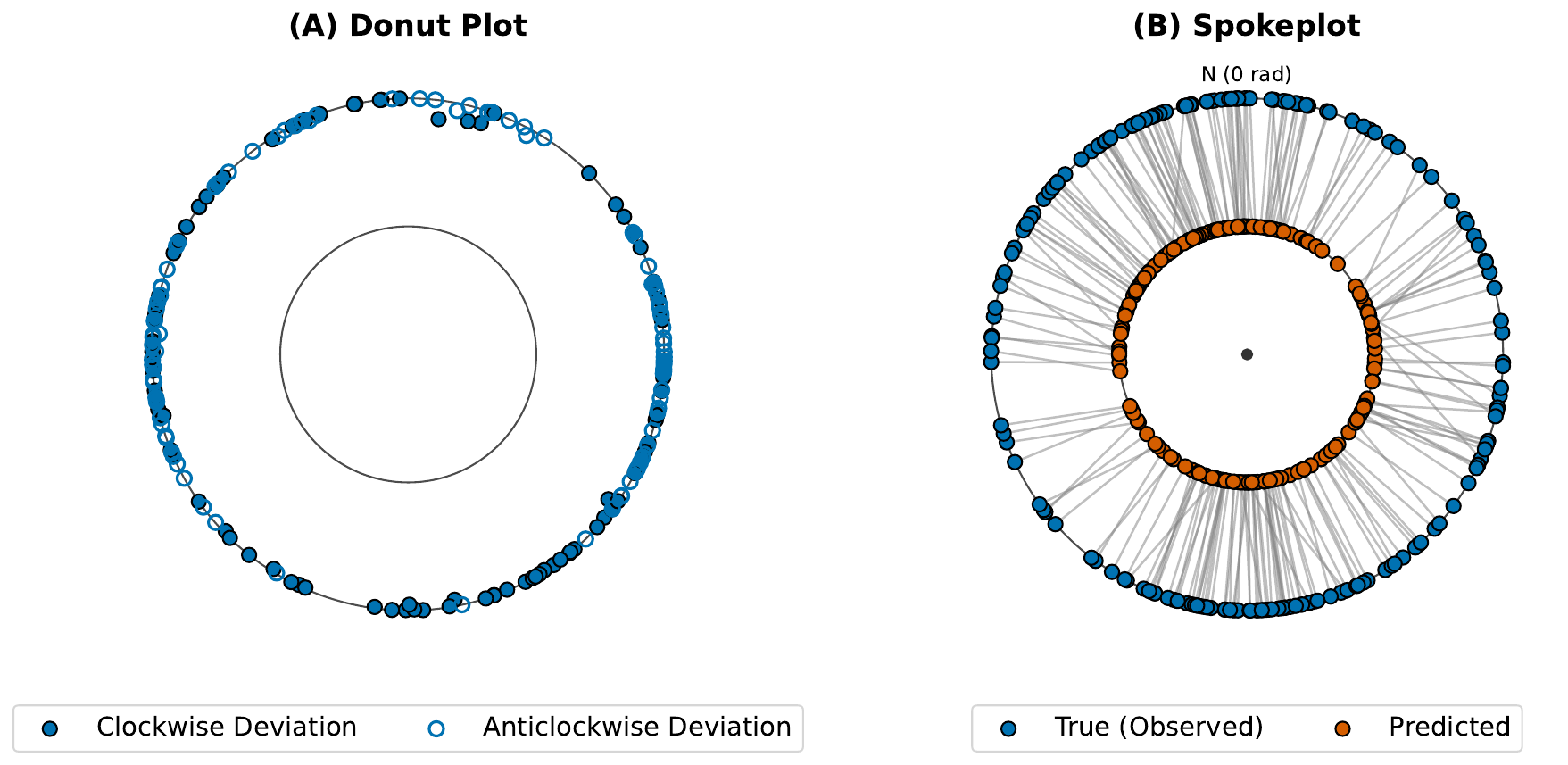}
        \caption{{\small (A) Donut plot and (B) Spokeplot of ANGLE-Chordal predictions on simulated data, setting 1.1. In (A), points near the outer circumference (radius = 2) indicate good fit; points inside the unit disc indicate poor fit. Filled and empty circles denote clockwise and anticlockwise deviations, respectively.}}
        \label{fig:simulations_errors}
\end{figure}

\begin{figure}[!h]
        \centering
        \includegraphics[width=0.6\linewidth]{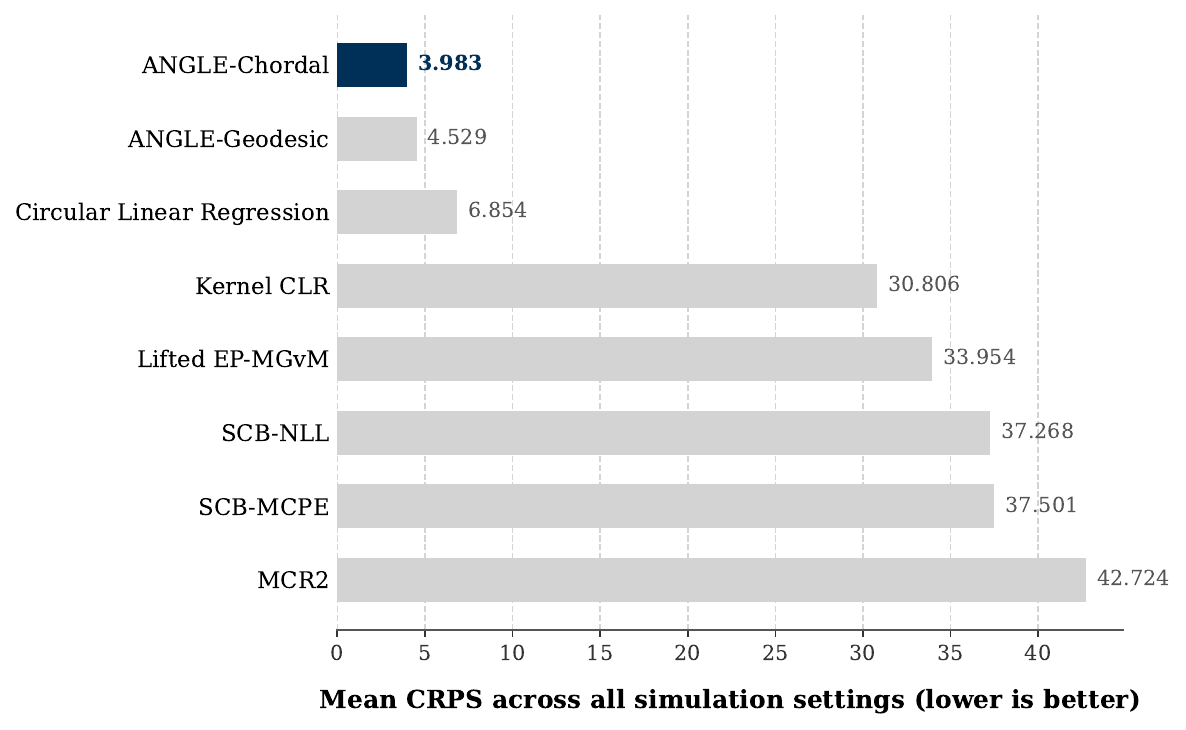}
        \caption{Comparison of model performance on simulated datasets using mean CRPS across all 16 simulation settings.}
        \label{fig:simulation_crps_comparison}
\end{figure}

Table \ref{table:simulation_results} (Appendix \ref{appendix:tables_figures}) reports the mean and standard deviation of the evaluation metrics over 200 independent trials. Across most settings, the energy score based on the chordal distance achieves the best performance. Predictions by ANGLE on the simulated test set of setting 1.1 is presented through (A) donut plot \citep{jha2017multiple} and (B) spokeplot in Fig. \ref{fig:simulations_errors}. For fairness, circular covariates were represented by their sine and cosine embeddings for baseline methods that do not natively support directional predictors. Overall, ANGLE consistently outperforms all competitors (see Fig. \ref{fig:simulation_crps_comparison}) in terms of CRPS across all 16 simulation settings. While circular linear regression (CLR) remains competitive under linear data-generating mechanisms (Approaches 1 and 3), ANGLE substantially outperforms it in the nonlinear settings (Approaches 2 and 4), highlighting the advantage of neural networks in capturing complex nonlinear relationships.




\subsection{Simulations for Extrapolation} \label{appendix:simulation_extrapolation}
\Cref{thm:extrapolability_criteria} establishes the extrapolability of the  model classes \eqref{eq:model_classes} under some identifiability conditions and the assumption that the generative function $g$ is real analytic. To achieve extrapolation, we qualitatively evaluate the implementations of $\mathbf{M}_{\text{pre, post}}^{(1)}$, and $\mathbf{M}_{\text{pre, post}}^{(2)}$ with scaled sigmoid output. For $\mathbf{M}_{\text{pre, post}}^{(1)}$, by defining $g: \mathbb{R} \to \mathbb{R}$ as an unbounded scalar output and applying the modulo operator post-hoc during inference, the network is permitted to grow linearly outside the training support, which maps to a continuous `wrapping' behavior on the circular manifold. Theoretically, $\mathbf{M}_{\text{pre}}^{(1)}$ is not fully identifiable due to the $2k\pi$ equivalence class, but is extrapolable under the real analytic assumption (Theorem \ref{thm:extrapolability_criteria}).


\begin{figure}[!ht]
        \centering
        \includegraphics[width=\linewidth]{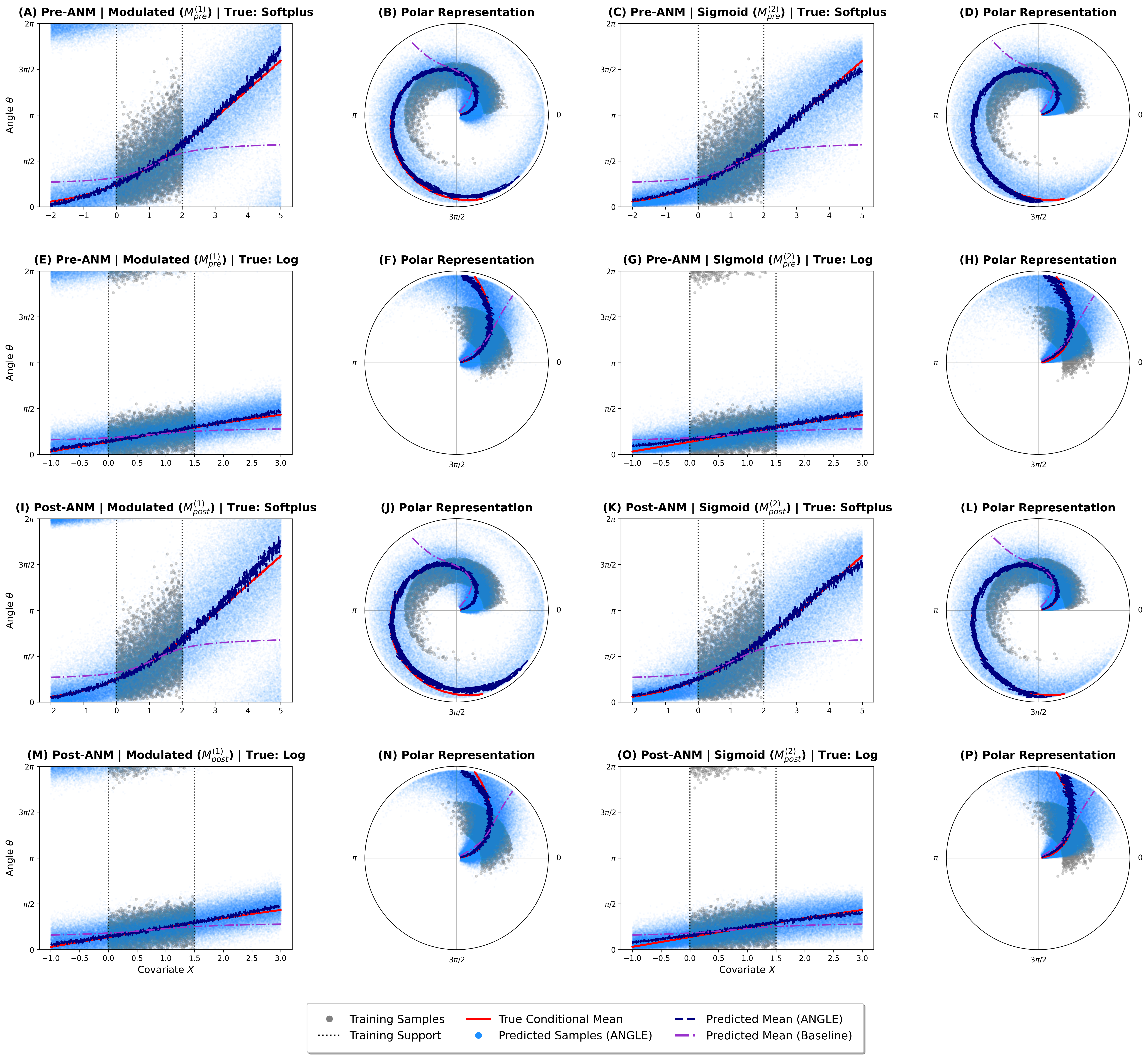}
        \caption{Simulation results for extrapolation with ANGLE: unbounded output with $2\pi$ modulation (first two columns) and output bounded in $[0,2\pi)$ with scaled sigmoid (last two columns). As a baseline, the kernel CLR is considered.}
        \label{fig:extrapolation_simulations}
\end{figure}


We demonstrate extrapolability empirically using simulated pre-ANM data with one predictor in Fig. \ref{fig:extrapolation_simulations}. 5000 training covariates $X\in\mathbb{R}$ are sampled uniformly from a bounded support and perturbed with pre-additive Gaussian noise, $\eta \sim \mathcal{N}(0, 1)$. These noisy inputs are passed through two base functions - linearly asymptotic $g = \mathrm{Softplus}$, and sub-linear $g = \mathrm{Log}$, and projected onto the circular manifold to generate the angular response $Y$. The true function $g$ is chosen to be strictly monotonic, i.e., not wrapping around the circle. To establish a ground-truth extrapolation baseline, the exact conditional circular mean is numerically approximated over an extended evaluation grid via Monte Carlo integration utilizing 10,000 noise samples per grid point. The training and evaluation supports for the $\mathrm{Softplus}$ base function are defined as $[0,2]$ and $[-2,5]$, respectively; for the logarithmic configuration, these boundaries are restricted to $[0, 1.5]$ and $[-1,3]$. We visualize the extrapolation fits by training ANGLE over the simulated datasets under both pre-ANM (panels (A)-(H)) and post-ANM (panels (I)-(P)) formulations. Within these plots, the training samples are denoted by gray scatter points bounded within the training support. To approximate the model's predicted conditional distribution, we generate 100 samples from the trained ANGLE at each evaluation coordinate, illustrated by the blue scatter clouds. The theoretical ground-truth conditional mean and ANGLE's predicted circular mean are superimposed as solid red and dashed navy lines, respectively. The dash-dotted indigo line represents the fit of kernel CLR, which we consider as a baseline.

It is visually evident that although the baseline fit (indigo) closely approximates the true conditional mean (red) within the training support, it asymptotes outside the support and fails to extrapolate. Conversely, the proposed ANGLE with the $\mathbf{M}_{\text{post,pre}}^{(1)}$-like architecture smoothly preserves the structural trend of the data, providing a better approximation of the true conditional mean, both inside and beyond the training support (first two columns in Fig. \ref{fig:extrapolation_simulations}). Finally, extrapolation capabilities of the $\mathbf{M}_{\text{post,pre}}^{(2)}$-like architecture with scaled sigmoid is presented in the last two columns in Fig. \ref{fig:extrapolation_simulations}, where the predicted circular mean (dashed navy line) closely aligns with the true conditional mean (red line).

\section{Real Data Applications}
\label{sec:applications}
To empirically validate the practical utility of our proposed framework for prediction with uncertainty quantification, sufficient dimension reduction, extrapolation, and testing equality of conditional distributions, we conduct comprehensive evaluations across both image and tabular data modalities. For prediction, we address two distinct tasks: object pose estimation from images and location-based wind direction prediction. For the former, we utilize the PASCAL3D+ benchmark dataset \citep{xiang2014beyond} comprising 36,697 images across 12 object categories with annotated ground-truth orientations. The proposed architecture for this task combines a fine-tuned visual encoder with a class-specific ANGLE head for targeted object pose estimation. For wind direction prediction, we evaluate the model using two distinct meteorological datasets: the German wind direction dataset previously analyzed by \cite{cohen2025bayesian} and district-level mean wind directions in India for December 2025. The Indian wind dataset is also used to showcase extrapolability of ANGLE. The proposed models are implemented in fully dense mode with $\mathrm{atan2}$ in the final layer. Point predictions are obtained by computing the circular mean of samples simulated from the estimated ANGLE object. To ensure the statistical robustness of our findings, all experimental setups are executed over 50 independent trials, with performance metrics reported as the empirical mean and standard deviation.

\subsection{Object Pose Estimation} \label{sec:object_pose_estimation}
Pose estimation is a fundamental task in computer vision, which deals with predicting the direction towards which an object in a given image or video is pointing. It is an important building block in systems that aim to understand complex scenes and arises in many applications such as estimation of head pose \citep{prokudin2018deep}, gaze direction \citep{nonaka2022dynamic}, object class orientation, and so on. We evaluate our approach for pose detection of objects on the PASCAL3D+ dataset \citep{xiang2014beyond}, a rigorous benchmark for pose estimation comprising 12 object categories (see Table \ref{table:pascal_statistics}) sourced from PASCAL VOC 2012 and ImageNet \citep{deng2009imagenet}. The dataset provides 3D annotations including angles for azimuth, elevation, and distance. We model the azimuth, which denotes the horizontal rotation angle defining the object's orientation relative to a $0^\circ$ reference. Consistent with prior literature \citep{prokudin2018deep}, we utilize the ImageNet and PASCAL training images for training, ImageNet validation images for validation, and PASCAL validation images for testing (data split statistics are provided in Table \ref{table:pascal_statistics}). Notably, we analyze the raw images without applying any prior preprocessing and data augmentation.

{\tiny 
\setlength{\tabcolsep}{5.5pt} 
\begin{longtable}[]{@{}lccccccccccccc@{}}
\caption{{\small Number of images per class in train, validation, and test splits of the PASCAL3D+ dataset.}}\label{table:pascal_statistics}\tabularnewline
\toprule\noalign{}
~ & aeroplane & bicycle & boat & bottle & bus & car & chair &
diningtable & motorbike & sofa & train & tvmonitor & Total \\
\midrule\noalign{}
\endhead

Train & 1467 & 1094 & 1843 & 1515 & 865 & 4043 & 1996 & 1533 & 1019 & 1150 & 989 & 1041 & 18555 \\
Validation & 979 & 658 & 1262 & 761 & 536 & 2780 & 514 & 1153 & 613 & 707 & 646 & 623 & 11232 \\
Test & 484 & 380 & 491 & 733 & 320 & 1173 & 1449 & 374 & 376 & 387 & 329 & 414 & 6910 \\
\midrule\noalign{}
Total class-wise & 2930 & 2132 & 3596 & 3009 & 1721 & 7996 & 3959 & 3060 & 2008 & 2244 & 1964 & 2078 & 36697 \\
\bottomrule\noalign{}
\endlastfoot
\end{longtable}
}

Since the proposed ANGLE expects data in the form of arrays or tensors, feature extraction is performed using state-of-the-art visual encoders: Inception-v3 \citep{szegedy2016rethinking} and ConvNeXt \citep{liu2022convnet}, initialized with pre-trained ImageNet weights from \textit{torchvision}\footnote{https://github.com/pytorch/vision}. As classification-based pre-training encourages rotation invariance, we fine-tune these models using PASCAL3D+ azimuth annotations to extract rotation-sensitive representations. Hence, the object pose estimation task is divided into two stages: (1) fine-tuning pre-trained visual encoder to perform as feature extractor and (2) fitting an ANGLE model on the extracted features. For the first stage, we replace the standard classification blocks of the image models with a custom multi-class biternion regression head. To circumvent the periodic discontinuity of angular targets at $0^\circ / 360^\circ$, we encode the ground-truth azimuth angle $\theta$ as a continuous 2D directional vector, or biternion, $v = [\cos\theta, \sin\theta]^\top$. The modified final linear layer maps the extracted image features to a tensor of size $C \times 2$, where $C=12$ is the total number of target object classes. A feature-wise $L_2$-normalization is then applied to the outputs to ensure the predicted 2D vectors lie on the unit circle. During fine-tuning, we employ a conditional routing strategy: the network extracts the predicted biternion $\hat{v}_c$ corresponding strictly to the object class $c$. The entire network is fine-tuned end-to-end by minimizing the cosine distance loss between the predicted and target biternions: $\mathcal{L} = 1 - \hat{v}_c \cdot v$. Optimization is performed for 20 epochs using Adam with a learning rate of $1 \times 10^{-4}$ and a weight decay of $0.01$. During training, we utilize the validation images for early stopping. 

A single global encoder processes images across all 12 classes, generating 2048-dimensional (Inception-v3) or 768-dimensional (ConvNeXt) feature vectors. In the second stage, these features are subsequently passed to the ANGLE framework. While feature extraction is global, we train independent, class-specific models to perform targeted pose estimation. We compare our approach against the parametric mixture models of \cite{prokudin2018deep} and standard baselines reported in Table \ref{table:baseline_models}. CLR and its kernel variant are excluded due to their inability to scale to high-dimensional regimes where features outnumber samples. Evaluation metrics include MAAD, CMDE, circular CRPS, $\text{Accuracy}_{\pi/6}$, and median error (MedErr; see Appendix \ref{appendix:metrics} for the mathematical formulations). Quantitative class-wise comparisons are presented in \Cref{table:object_pose_results} (Appendix \ref{appendix:tables_figures}), and the test-set predictions with 95\% intervals are visualized in Fig. \ref{fig:object_poses}. The prediction intervals are constructed by extracting the 0.025 and 0.975 circular quantiles using \eqref{eq:circular_quantile} from 100 predictive samples from the estimated conditional distribution of azimuth angles. Fig. \ref{fig:Objectpose_Acc_CRPS} illustrates the mean scores obtained by the models on $\text{Accuracy}_{\pi/6}$ and CRPS across 12 classes. The proposed model with ConvNeXt as the visual encoder (ConvNeXt-ANGLE) achieves the best mean scores across all 12 classes for all metrics, followed by Inception-v3-ANGLE.

\begin{figure}[!t]
        \centering
        \includegraphics[width=0.8\linewidth]{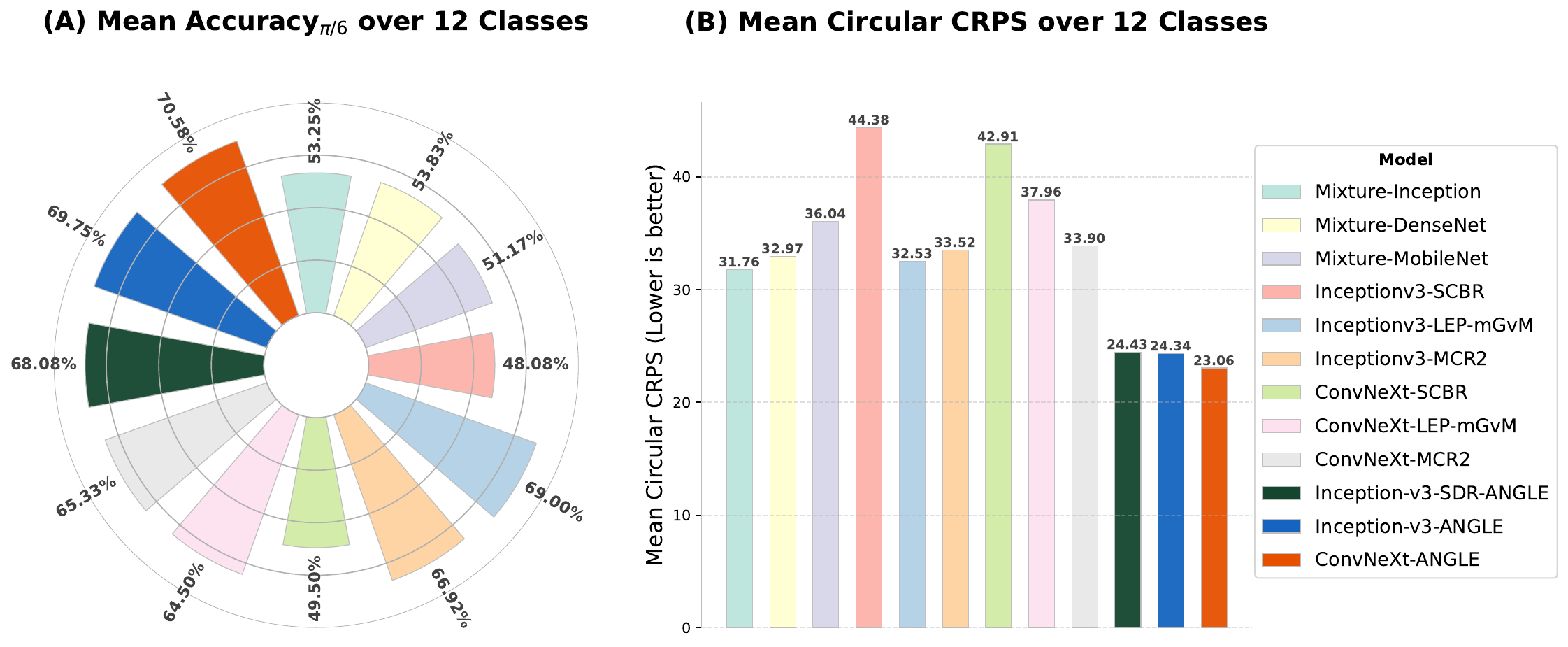}
        \caption{{\small Mean (A) $\text{Accuracy}_{\pi/6}$ and (B) CRPS obtained by the proposed and baseline models across all 12 classes on the PASCAL3D+ test set.}}
        \label{fig:Objectpose_Acc_CRPS}
\end{figure}

\begin{figure}[]
        \centering
        \includegraphics[width=\linewidth]{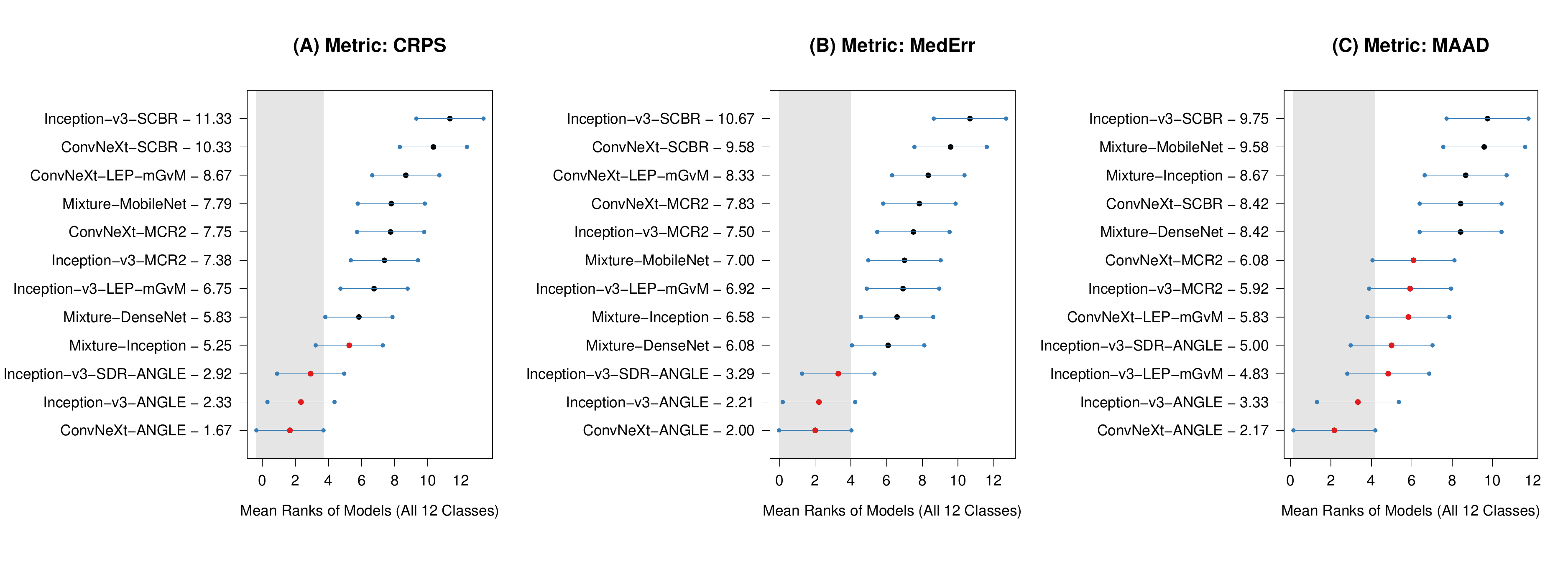}
        \caption{{\small MCB test results to statistically evaluate model performances in object pose estimation using (A) CRPS, (B) MedErr, and (C) MAAD metrics. The $y$-axis in each plot denotes the models and their average ranks (lower the better) corresponding to that metric.}}
        \label{fig:Objectpose_MCB}
\end{figure}

To statistically test the performance of the proposed models against baselines on this task, we conduct the non-parametric Multiple Comparisons with the Best (MCB) test \citep{koning2005m3}. The results presented in Fig. \ref{fig:Objectpose_MCB} indicate that ANGLE is statistically superior to the baselines and achieve the lowest average ranks with respect to CRPS, MedErr, and MAAD metrics.

\subsection{Wind Direction Prediction} \label{sec:wind_direction_pred}
The modeling of wind direction is a well-established problem in circular statistics. As a second application, we leverage spatial coordinates to predict wind direction at unobserved locations across two meteorological datasets. Predicting wind direction at unobserved locations is useful in several applications, including pollutant dispersion tracking, extreme weather forecasting, and the optimization of renewable wind energy infrastructure. The German dataset, sourced from the Deutscher Wetterdienst (DWD)\footnote{https://www.dwd.de}, utilizes a single cross-sectional observation from 260 stations on the final day of a calm weather period \citep{cohen2025bayesian}. Observations for this data are in the range $(-\pi, \pi]$, which is converted to the range of $[0, 2\pi)$ for modeling with ANGLE. However, modeling solely using spatial coordinates is rather simplistic and the procedure can be made more robust by incorporating external covariates that affect wind direction, such as temperature, pressure, and humidity. 

Another dataset considered in this study is the Indian wind data\footnote{https://iced.niti.gov.in/energy/fuel-sources/wind/speed}, which comprises the mean wind direction across 737 districts for December 2025. The observations lie in $[0, 2\pi)$, with the northern direction indicating 0 radians and angles increasing clockwise. The distribution of wind directions for both datasets is presented as rose plots in Fig. \ref{fig:wind_distributions} (Appendix \ref{appendix:tables_figures}). The German and Indian datasets were randomly partitioned into 208/52 and 552/185 train/test splits, respectively (Fig. \ref{fig:combined_wind_datasets} in Appendix \ref{appendix:tables_figures}). ANGLE model comprising 2 layers (128 hidden units, 64 noise dimension) was trained for 500 epochs on the German data, and a 3-layer architecture (100 hidden units, 64 noise dimension) was trained for 800 epochs on the Indian data. We benchmark predictive performance against the baselines listed in \Cref{table:baseline_models}. 

The proposed approach attains superior CRPS scores across both datasets (Fig. \ref{fig:wind_crps}), signifying better modeling of the target conditional distribution. Quantitative results are summarized in \Cref{table:wind_prediction_results} (Appendix \ref{appendix:tables_figures}). Notably, the Indian wind dataset presents a more challenging modeling task than the German dataset due to its larger volume and wider observational variance. As a result, although baseline models yield comparable results on the German data, the proposed ANGLE demonstrates a significant performance advantage on the Indian data. Furthermore, our proposal proves to be a remarkably computationally efficient conditional distribution estimator; while Bayesian vMQP requires 21,677 seconds for its sampling procedure on the German dataset, our models complete training and inference in just 1.96-2.2 seconds (close to CLR), representing a 99.99\% decrease in computational overhead.

\begin{figure}[!t]
        \centering
        \includegraphics[width=0.7\linewidth]{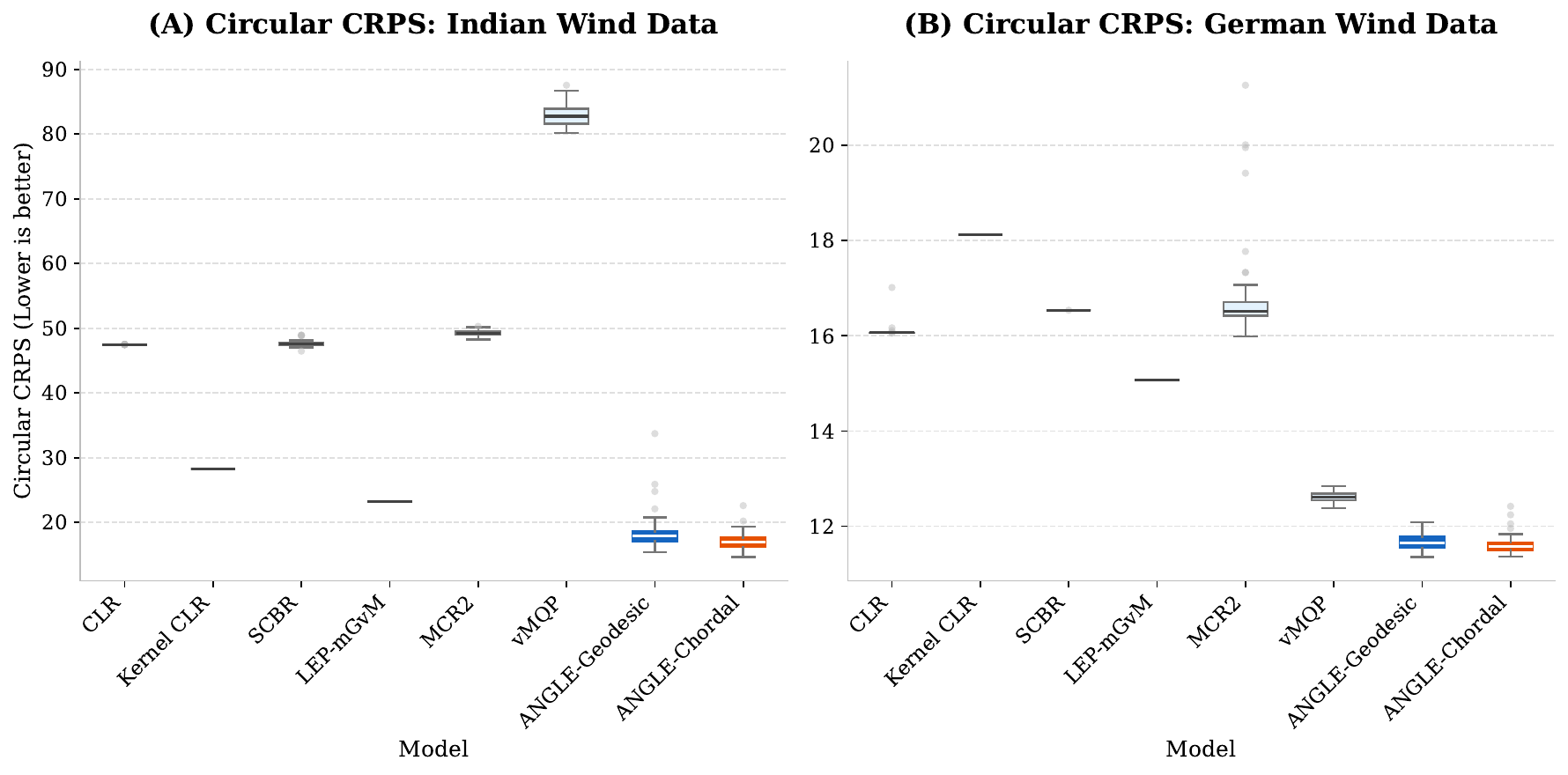}
        \caption{{\small CRPS obtained by the models on (A) Indian and (B) German wind datasets over 50 independent runs.}}
        \label{fig:wind_crps}
\end{figure}

\subsection{ANGLE for Sufficient Dimension Reduction}
We demonstrate the practical utility of sufficient dimension reduction (SDR) using ANGLE for the PASCAL3D+ dataset. In object pose estimation, feature extractors such as Inception-v3 generate high-dimensional representations (here, $d=2048$). To mitigate redundancy and isolate the predictive subspace that is `sufficient' to explain the angular responses, we estimate a $d \times b$ index parameter $\widehat{\beta}$ ($b \ll d$) through ANGLE, following \Cref{thm:central_subspace_recovery}. Setting $b=5$, we project the original features to obtain the reduced 5-dimensional covariates $X' = \widehat{\beta}^\top X$. A stage-two model, termed Inception-v3-SDR-ANGLE, is subsequently trained on $X'$. Despite a $99.7\%$ reduction in feature dimensionality, this approach yields highly competitive predictive performance and consistently outperforms all baselines (see \Cref{table:object_pose_results} and Fig. \ref{fig:Objectpose_Acc_CRPS}), achieving the third lowest average rank in the MCB test (Fig. \ref{fig:Objectpose_MCB}) based on MedErr and CRPS. The estimated 5-dimensional central subspace sufficiently captures the structural information required for azimuth prediction. Beyond computational efficiency, SDR naturally facilitates visual diagnostics and exploratory analysis.



\subsection{ANGLE for Extrapolation}
To evaluate the extrapolation capabilities of the ANGLE framework, we implement a targeted spatial block-splitting strategy on the Indian wind dataset. Specifically, the training set is restricted to observations whose latitude and longitude are both below their respective 85th percentiles, with the remaining data allocated to the test set. This configuration establishes an out-of-distribution (OOD) spatial block, confining the training data to a bounded south-western region of the coordinate space and forcing the model to extrapolate toward the northern and eastern peripheries. Hence, the northern and north-eastern states are isolated as the held-out test set (Fig. \ref{fig:train_test_splits_Extrapolability} in Appendix \ref{appendix:tables_figures}) with 517 training and 220 test observations.

        

{\tiny 
\setlength{\tabcolsep}{5pt} 
\begin{longtable}[]{ccccccccc}
\caption{{\small Model performance on Indian wind dataset under the extrapolation regime. The \underline{\textbf{best}} results are highlighted. MedErr, CRPS, and MAAD are reported in degrees.}}\label{table:extrapolation_wind_results}\tabularnewline
\toprule\noalign{}
Metric & vMQP & CLR & Kernel CLR & SCBR & LEP-MGvM & MCR2 & ANGLE-Geodesic & ANGLE-Chordal \\
\midrule\noalign{}
\endfirsthead
        Accuracy & $0.20\pm0.02$ & \underline{$\mathbf{0.36\pm0.00}$} & $0.22\pm0.00$ & $0.26\pm0.01$ & $0.22\pm0.00$ & $0.21\pm0.03$ & $0.30\pm0.04$ & $0.31\pm0.02$ \\ 
        MedErr & $89.14\pm1.45$ & $52.96\pm0.41$ & $78.72\pm0.00$ & $60.75\pm6.02$ & $73.00\pm0.00$ & $95.22\pm7.90$ & $51.27\pm9.56$ & \underline{$\mathbf{45.74\pm1.55}$} \\ 
        CRPS & $57.84\pm0.34$ & $59.16\pm0.25$ & $78.62\pm0.00$ & $65.15\pm3.68$ & $74.43\pm0.00$ & $91.96\pm3.61$ & $38.84\pm4.58$ & \underline{$\mathbf{37.81\pm0.67}$} \\ 
        MAAD & $88.71\pm0.61$ & $59.16\pm0.25$ & $78.62\pm0.00$ & $65.15\pm3.68$ & $74.43\pm0.00$ & $91.96\pm3.61$ & $58.25\pm7.18$ & \underline{$\mathbf{55.90\pm0.87}$} \\ 
        CMDE & $0.99\pm0.01$ & $0.60\pm0.00$ & $0.85\pm0.00$ & $0.65\pm0.06$ & $0.80\pm0.00$ & $1.03\pm0.05$ & $0.56\pm0.11$ & \underline{$\mathbf{0.52\pm0.01}$} \\  \bottomrule 
\end{longtable}
}

The predictive performance under this extrapolation paradigm is summarized in Table \ref{table:extrapolation_wind_results}. Notably, some conditions required for the theoretical guarantees of extrapolability established in Section \ref{sec:identifiability_and_extrapolation}, such as the monotonicity assumption, are violated in this empirical setting; broad-scale meteorological phenomena over the Indian subcontinent dictate complex circulating wind patterns that do not vary monotonically with latitudinal or longitudinal progression. Although the scores substantially degrade as compared to the prediction (interpolation) regime (Table \ref{table:wind_prediction_results}), the results demonstrate that ANGLE substantially outperforms the baseline models on this OOD task. 

\subsection{ANGLE for Testing Equality of Conditional Distributions}

To evaluate conditional distribution equality using ANGLE, we tested whether azimuth distributions given Inception-v3 features are preserved across the 12 PASCAL3D+ categories. For each class pair, we tested the null hypothesis ($H_0$) of identical conditional pose distributions. ANGLE was fitted on the larger class ($\mathcal{S}_1$). The smaller class ($\mathcal{S}_2$) was bisected: one half generated synthetic azimuths via the fitted model, while the other provided the observed azimuths. These samples were compared using Kuiper's test and Trigonometric MANOVA. The latter serves as our primary testing criterion due to its superior power and Type-I error control for circular data \citep{landler2021advice}.

\begin{figure}[!t]
\centering\includegraphics[width=\linewidth]{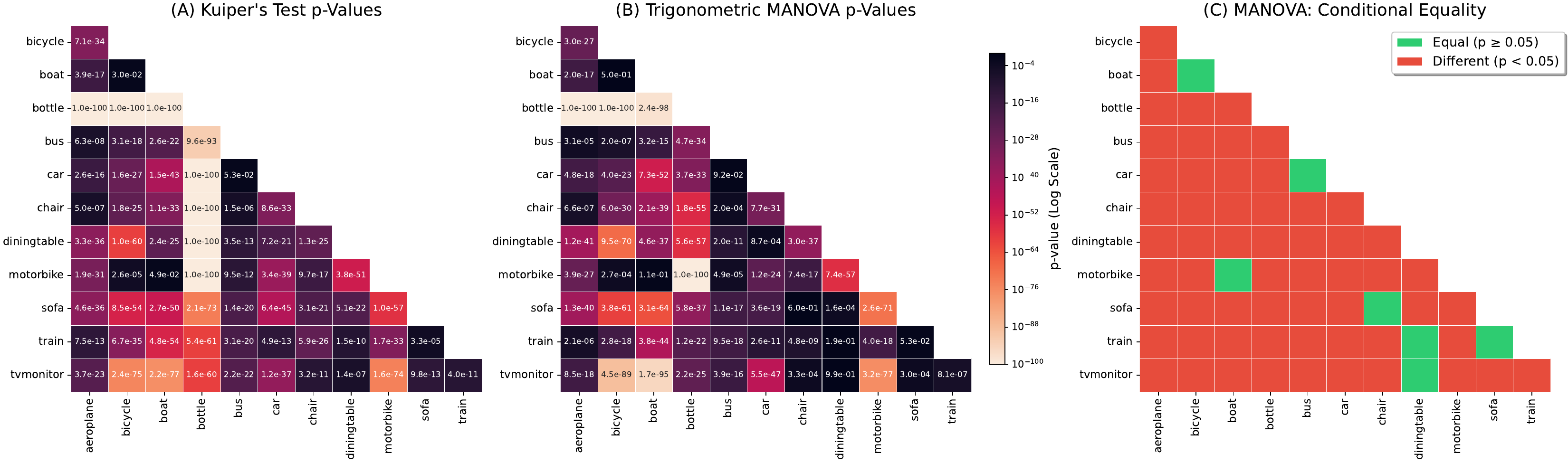}
\caption{{\small Testing pairwise equality of conditional distributions of 12 objects in PASCAL3D+ training set using ANGLE.}}\label{fig:conditional_equality_testing}
\end{figure}

Fig. \ref{fig:conditional_equality_testing} summarizes the exhaustive pairwise tests, with the symmetric upper triangle and diagonal masked. Panels (A) and (B) visualize $p$-values for the Kuiper and MANOVA tests, respectively; lighter shades denote severe generative mismatches ($p \ll 0.05$). Panel (C) binarizes the MANOVA results at $\alpha = 0.05$: green cells indicate statistically indistinguishable distributions (failing to reject $H_0$), while red cells denote significant divergence. Results indicate that structurally analogous objects (e.g., car/bus, sofa/chair) share equivalent conditional distributions. However, the prevalence of rejections highlights a high degree of class-specificity, confirming distinct feature-to-orientation mappings across most pairs. This divergence might be due to varying geometric symmetries, photographic framing biases (e.g., frontal cars versus lateral aeroplanes), or fundamental differences in the underlying neural feature representations.

\section{Ablation Study} \label{sec:ablation_study}
Implementation of the proposed ANGLE framework affords significant architectural flexibility. Within the GCES loss \eqref{eq:crps_angular}, the distance metric can be specified as either chordal or geodesic; for the latter, the function $\mathcal{F}$ may be the negative of any of those outlined in \citet[Table 1]{gneiting2013strictly} to ensure strict propriety. Architecturally, the network supports both single-, multiple-index, and fully dense modes, alongside three output head formulations: unbounded (modulated by $2\pi$ during inference), scaled sigmoid, or a biternion representation utilizing $\mathrm{atan2}$. To compare the contribution of these components in model performance, we conducted an ablation study on the Indian wind dataset utilizing a distinct train-test split from Sec. \ref{sec:wind_direction_pred}. By varying the distance metric, network mode, kernel, and output head, we compared 60 different model variants. Full results are detailed in Table \ref{table:ablation} (Appendix \ref{appendix:tables_figures}), and the top 10 configurations based on CRPS are highlighted in Fig. \ref{fig:ablation}. The results demonstrate that the fully dense mode consistently surpasses the single-index mode, capitalizing on the superior expressive power of fully connected layers. Moreover, models employing an unbounded output modulated by $2\pi$ exhibit high instability, characterized by large standard deviations. We observe that the dense variant combining the chordal distance with a scaled sigmoidal output head achieves the highest overall performance, while the $C^2$-Wendland kernel proves most effective among models utilizing the geodesic distance.

\begin{figure}[!h]
        \centering
        \includegraphics[width=0.8\linewidth]{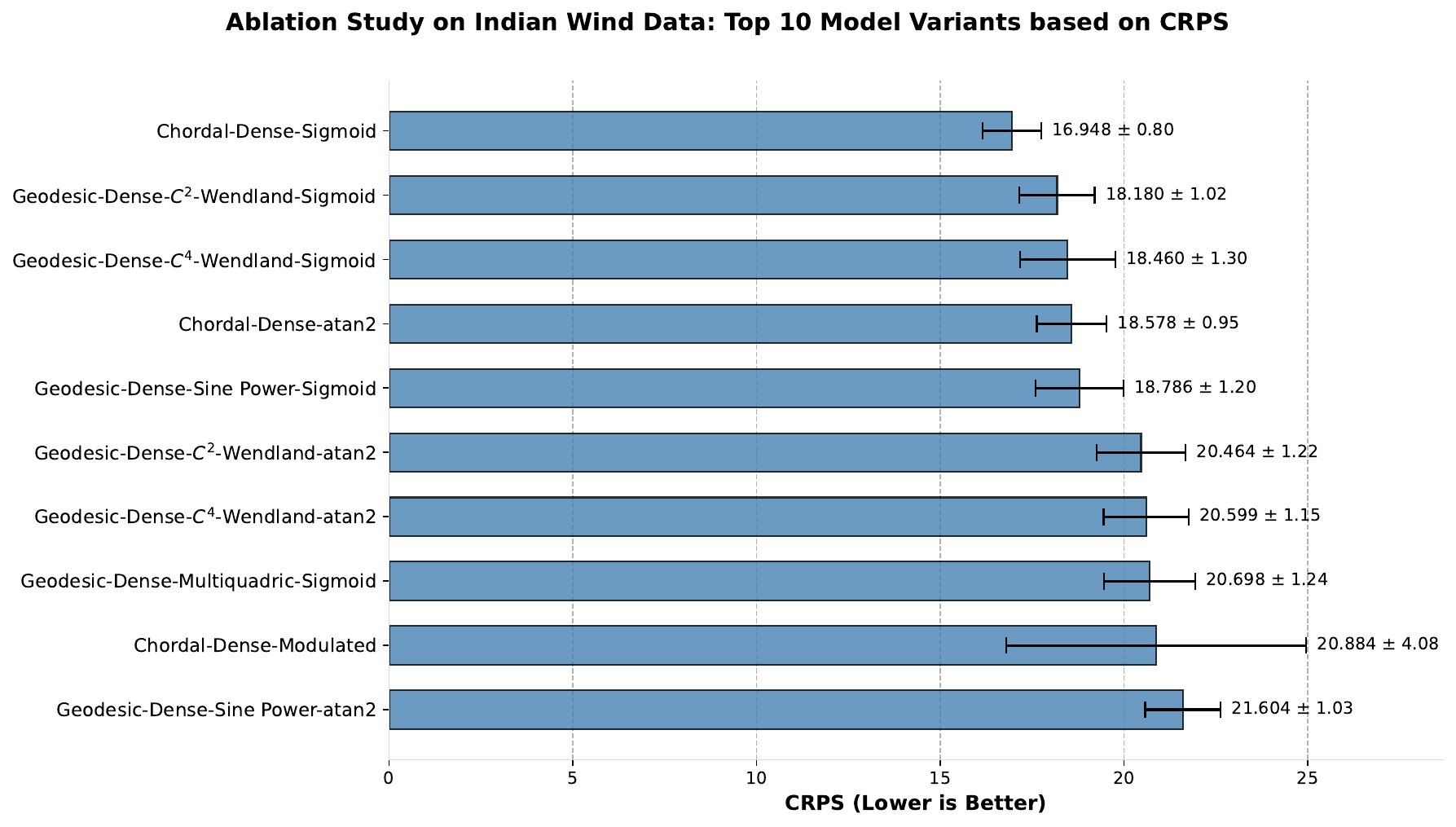}
        \caption{Top 10 ANGLE model variants based on CRPS in the ablation study. The mean and standard deviation of the CRPS scores are reported over 50 independent training and evaluation runs.}
        \label{fig:ablation}
\end{figure}

\section{Discussion}\label{sec:conclusion}

In this paper, we proposed a deep generative framework called ANGLE for distributional learning on circular data, that combines the flexibility of deep learning with the inferential capabilities of distributional modeling. The framework is nonparametric, computationally efficient, produces rotationally equivariant estimators, and offers extrapolation along with uncertainty quantification while naturally accommodating both Euclidean and circular covariates. Through simulations and real data applications to object pose estimation and wind direction prediction, we demonstrated its ability to recover complex conditional distributions and provide model-intrinsic uncertainty quantification. Beyond prediction, ANGLE yields a unified toolbox for supervised learning with angular responses, including sufficient dimension reduction and testing equality of conditional distributions. Several directions remain open for future research. Methodologically, extending the method for probabilistic forecasting of circular time series \citep{harvey2024modelling} and space-time data \citep{wang2014modeling,pathak2026deep} and to more general non-Euclidean response spaces, such as spheres, tori, and manifolds, would considerably broaden its scope. On the applied side, performance in computer vision tasks can be improved through stronger feature extractors and data augmentation. Extending the framework to real-time video analysis would enable applications such as dynamic gaze estimation \citep{nonaka2022dynamic} and trajectory tracking, further broadening the scope of ANGLE.

\renewcommand{\thefigure}{A.\arabic{figure}}
\renewcommand{\thetable}{A.\arabic{table}}

\newcommand{\ms}[2]{%
  \substack{#1\\[0.25ex]\scriptstyle (\pm #2)}%
}

\newcommand{\bms}[2]{%
  \substack{\mathbf{#1}\\[0.25ex] \scriptstyle \mathbf{(\pm #2)}}%
}

\newcommand{\ubms}[2]{%
  \substack{\underline{\mathbf{#1}}\\[0.25ex]
            \scriptstyle \underline{\mathbf{(\pm #2)}}}%
}
\newcommand{\md}[2]{%
  \substack{#1\\[0.25ex]\scriptstyle (\pm #2)}%
}
\def\spacingset#1{\renewcommand{\baselinestretch}%
{#1}\small\normalsize} \spacingset{1}

\clearpage
\appendix

\section*{Appendix}

\section*{Table of Contents} 

\startcontents[appendix]

\printcontents[appendix]{l}{1}{\setcounter{tocdepth}{2}}

\section{Mathematical Proofs}
\label{sec:proofs_of_main_results}

\subsection{Auxiliary Lemmas} \label{appendix:basic_def_lemmas}
\setcounter{lemma}{0}
\renewcommand{\thelemma}{\ref{appendix:basic_def_lemmas}.\arabic{lemma}}

\begin{lemma} \label{lemma:quantile_of_h}
    Let $h:[0,1]\to\mathbb{R}$ be a strictly increasing continuous function and $\varepsilon \sim \mathcal{U}(0,1)$. Then the $\alpha$-quantile of $h(\varepsilon)$ is $h(\alpha)$.
\end{lemma}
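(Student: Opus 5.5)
The plan is to compute the distribution function of $h(\varepsilon)$ explicitly and then read off the quantile from the definition. Take the standard (left-continuous generalized inverse) definition of the $\alpha$-quantile, $Q(\alpha)=\inf\{t\in\mathbb{R}: \mathbf{P}(h(\varepsilon)\le t)\ge \alpha\}$, for $\alpha\in(0,1)$. The endpoints $\alpha\in\{0,1\}$ are handled separately by interpreting them as the essential infimum and supremum of $h(\varepsilon)$; these equal $h(0)$ and $h(1)$ by continuity and monotonicity.

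\textbf{Step 1: the distribution function of $h(\varepsilon)$.} Since $h$ is continuous and strictly increasing on $[0,1]$, it is a bijection onto $[h(0),h(1)]$, and its inverse $h^{-1}$ is continuous and strictly increasing. For $t\in[h(0),h(1)]$ we have
\[
\{h(\varepsilon)\le t\}=\{\varepsilon\le h^{-1}(t)\},
\]
so $F(t):=\mathbf{P}(h(\varepsilon)\le t)=h^{-1}(t)$, using $\varepsilon\sim\mathcal U(0,1)$. In addition, $F(t)=0$ for $t<h(0)$ and $F(t)=1$ for $t\ge h(1)$. Hence $F$ is continuous and strictly increasing on $[h(0),h(1)]$.

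\textbf{Step 2: identify the infimum.} Fix $\alpha\in(0,1)$. Then $F(h(\alpha))=h^{-1}(h(\alpha))=\alpha$, so $h(\alpha)$ belongs to the set $\{t:F(t)\ge\alpha\}$, and therefore $Q(\alpha)\le h(\alpha)$. For the reverse inequality, take any $t<h(\alpha)$. There are two cases.
\begin{itemize}
\item If $t<h(0)$, then $F(t)=0<\alpha$.
\item If $t\in[h(0),h(\alpha))$, then $F(t)=h^{-1}(t)<\alpha$ by strict monotonicity of $h^{-1}$.
\end{itemize}
In either case $t$ is not in the set, so $Q(\alpha)=h(\alpha)$.

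There is no real obstacle here. The only point needing care is to state which quantile convention is used and to note that strict monotonicity makes the quantile unique, so every convention gives the same answer. In particular, $F(t)=\alpha$ holds exactly at $t=h(\alpha)$, so the quantile set is a singleton. For the paper's use, $h(1/2)=0$ (Assumption \ref{assmp:h_related}) then gives that the median of $\eta=h(\varepsilon)$ is $0$.
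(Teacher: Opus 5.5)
Your proof is correct and follows the same route as the paper: compute $F_{h(\varepsilon)}(t)=h^{-1}(t)$ and then evaluate the generalized inverse $\inf\{t : F(t)\ge\alpha\}$ to obtain $h(\alpha)$. Your version is slightly more careful than the paper's, which glosses over these points: you restrict $F(t)=h^{-1}(t)$ to $t\in[h(0),h(1)]$, you handle values of $t$ outside that range, and you treat the endpoint levels $\alpha\in\{0,1\}$ explicitly.
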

\begin{proof}
    Note that as $h$ is strictly increasing and continuous, it is injective. For any $\varepsilon\sim \mathcal{U}(0,1)$,
\begin{align*}
    F_{h(\varepsilon)}(t)=\mathbf{P}(h(\varepsilon)\leq t)=\mathbf{P}(\varepsilon\leq h^{-1}(t))=h^{-1}(t),
\end{align*}
where $F_X$ denotes the CDF of the random variable $X$. So the quantile function of $h(\varepsilon)$ is given as
\begin{align*}
    F^{-1}_{h(\varepsilon)}(\alpha)=\inf\left\{t\mid F_{h(\varepsilon)}(t)\geq \alpha\right\}=\inf\left\{t|h^{-1}(t)\geq \alpha\right\}=h(\alpha),
\end{align*}
since $h$ is strictly increasing. 
\end{proof}


\begin{lemma}[Strict propriety of the chordal circular energy score]
\label{lem:strict_properness_chordal}
Let $\mathcal{P}$ denote the class of all Borel probability measures on $[0,2\pi)$. Let $u:[0,2\pi)\to\mathbb{R}^2$ be the embedding $u(\theta)=(\cos\theta,\sin\theta)^\top$, and let $\mathrm{d}_c(\theta_1,\theta_2)=\norm{u(\theta_1)-u(\theta_2)}$ be the chordal distance. For $\mathbf{P}\in\mathcal{P}$ and $\theta\in[0,2\pi)$, define the chordal circular energy score loss by taking $\mathcal{F}$ to be the identity function and $\mathrm{d}$ to be $\mathrm{d}_c$ in \eqref{eq:crps_angular}. Then $\overset{\circ}{\mathrm{ES}}$ is a strictly proper scoring rule relative to $\mathcal{P}$, i.e., for any $\mathbf{Q}\in\mathcal{P}$,
\begin{align*}
    \mathbf{E}_{\Theta_0\sim \mathbf{Q}}\left[\overset{\circ}{\mathrm{ES}}(\mathbf{Q},\Theta_0)\right] \leq \mathbf{E}_{\Theta_0\sim \mathbf{Q}}\left[\overset{\circ}{\mathrm{ES}}(\mathbf{P},\Theta_0)\right],
\end{align*}
with equality if and only if $\mathbf{P}=\mathbf{Q}$.
\end{lemma}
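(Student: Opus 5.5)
We reduce the statement to the strict propriety of the Euclidean energy score in $\mathbb{R}^2$ by transporting everything through the embedding $u$. For $\mathbf{P}\in\mathcal{P}$ write $u_*\mathbf{P}$ for its push-forward, which is a Borel probability measure on $\mathbb{R}^2$ supported on the unit circle, hence with finite first moment. By definition $\mathrm{d}_c(\theta_1,\theta_2)=\norm{u(\theta_1)-u(\theta_2)}$. Therefore, with $\Theta,\Theta^*\sim\mathbf{P}$ i.i.d., we have $\overset{\circ}{\mathrm{ES}}(\mathbf{P},\theta)=\mathbf{E}\norm{U-u(\theta)}-\tfrac12\mathbf{E}\norm{U-U^*}$, where $U=u(\Theta)$ and $U^*=u(\Theta^*)$ are i.i.d.\ from $u_*\mathbf{P}$. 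This is exactly the Euclidean energy score $\mathrm{ES}(u_*\mathbf{P},u(\theta))$ in $\mathbb{R}^2$. All expectations are finite because the chordal distance is bounded by $2$.

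Next we compute the expected score difference. Let $\Theta_0\sim\mathbf{Q}$ and $V=u(\Theta_0)\sim u_*\mathbf{Q}$. Direct algebra gives
\begin{align*}
\mathbf{E}\,\overset{\circ}{\mathrm{ES}}(\mathbf{P},\Theta_0)-\mathbf{E}\,\overset{\circ}{\mathrm{ES}}(\mathbf{Q},\Theta_0)=\tfrac12\Big(2\mathbf{E}\norm{U-V}-\mathbf{E}\norm{U-U^*}-\mathbf{E}\norm{V-V^*}\Big),
\end{align*}
where $V,V^*$ are i.i.d.\ from $u_*\mathbf{Q}$ and independent of $U,U^*$. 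The bracket is the energy distance between $u_*\mathbf{P}$ and $u_*\mathbf{Q}$.

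By the Székely--Rizzo representation it equals $c_2\int_{\mathbb{R}^2}\abs{\varphi_{u_*\mathbf{P}}(t)-\varphi_{u_*\mathbf{Q}}(t)}^2\norm{t}^{-3}\,dt$ for a constant $c_2>0$. Alternatively, one can cite the strict negative definiteness of the Euclidean norm, as in Gneiting and Raftery. Either way the bracket is nonnegative, and it vanishes if and only if the two characteristic functions agree. Hence it vanishes if and only if $u_*\mathbf{P}=u_*\mathbf{Q}$ on $\mathbb{R}^2$. This proves the inequality.

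It remains to show that $u_*\mathbf{P}=u_*\mathbf{Q}$ implies $\mathbf{P}=\mathbf{Q}$. The map $u:[0,2\pi)\to\mathbb{S}^1$ is a Borel bijection. Its inverse, which is essentially $\mathrm{atan2}$ taken modulo $2\pi$, is Borel measurable. So for any Borel $A\subseteq[0,2\pi)$ we have $\mathbf{P}(A)=u_*\mathbf{P}(u(A))=u_*\mathbf{Q}(u(A))=\mathbf{Q}(A)$.

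The main step needing care is the characteristic-function identity for the energy distance, in particular its validity and finiteness. We plan to cite it rather than reprove it. Here it applies directly because both push-forward measures have bounded support. The rest is bookkeeping with the independent copies.
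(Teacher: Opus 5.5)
Your proposal is correct and follows essentially the same route as the paper: push forward through $u$ to identify the chordal score with the Euclidean energy score of $u_*\mathbf{P}$ on $\mathbb{R}^2$, invoke strict propriety of that score, and conclude via injectivity of $\mathbf{P}\mapsto u_*\mathbf{P}$. The only difference is in the citation for strict propriety: you make the energy-distance identity explicit and cite Sz\'ekely--Rizzo or Gneiting--Raftery, whereas the paper cites strict conditional negative definiteness of the norm together with a propriety theorem from Waghmare.
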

\begin{proof}
Let $\mathbf{P},\mathbf{Q}\in\mathcal{P}$ be arbitrary. The map $u:[0,2\pi)\to\mathbb{R}^2$ is continuous and strictly injective on $[0,2\pi)$, hence Borel measurable. Define the pushforward measures $u_*\mathbf{P}$ and $u_*\mathbf{Q}$ on $\mathbb{S}^1$. Since $u$ is an injective Borel measurable map between standard Borel spaces, the pushforward operator is injective on Borel probability measures, so that $u_*\mathbf{P} = u_*\mathbf{Q}$ if and only if $\mathbf{P}= \mathbf{Q}$. Since the support of $u_*\mathbf{P}$ is contained in $\mathbb{S}^1$, every $X\sim u_*\mathbf{P}$  satisfies $\|X\| = \|u(\Theta)\| = 1$ almost surely. So, $\|X - Y\| \leq 2$ almost surely for any $Y \sim u_*\mathbf{Q}$, and hence $\mathbf{E}(\|X-Y\|) < \infty$. Substituting $\mathrm{d}_c(\theta_1,\theta_2)=\|u(\theta_1)-u(\theta_2)\|$ into \eqref{eq:crps_angular} under $\mathbf{Q}$ and applying the change of variables $X=u(\Theta)\sim u_*\mathbf{P}$, $X'=u(\Theta')\sim u_*\mathbf{P}$, and $Y=u(\Theta_0)\sim u_*\mathbf{Q}$, we obtain
\begin{align*}
    \mathbf{E}_{\Theta_0\sim Q}\left[\overset{\circ}{\mathrm{ES}}(\mathbf{P},\Theta_0)
    \right] 
    &= \mathbf{E}_{Y\sim u_*\mathbf{Q}}\mathbf{E}_{X\sim u_*\mathbf{P}}\left[\|X-Y\|
    \right] - \frac{1}{2}\mathbf{E}_{X,X'\sim u_*\mathbf{P}}\left[\|X-X'\|\right]\\
    &= \mathbf{E}_{Y\sim u_*\mathbf{Q}}\left[\mathrm{ES}(u_*\mathbf{P},Y)\right],
\end{align*}
where $\mathrm{ES}(\cdot,\cdot)$ denotes the standard multivariate energy score loss on $\mathbb{R}^2$ defined in \cite{shen2025engression}. Now due to \citet[Theorem 4.2]{joziak2015conditionally}, we know that the Euclidean norm is conditionally strictly negative definite on $\mathbb{R}^2$, and hence the same holds for $\mathbb{S}^1$. Then, using \citet[Theorem 4]{waghmare2025proper}, the associated energy score loss must be strictly proper. Therefore we can write,
\begin{align*}
    \mathbf{E}_{Y\sim u_*\mathbf{Q}}\left[\mathrm{ES}(u_*\mathbf{Q},Y)\right] 
    \leq 
    \mathbf{E}_{Y\sim u_*\mathbf{Q}}\left[\mathrm{ES}(u_*\mathbf{P},Y)\right] 
    = \mathbf{E}_{\Theta_0\sim \mathbf{Q}}\left[\overset{\circ}{\mathrm{ES}}
    (\mathbf{P},\Theta_0)\right],
\end{align*}
with equality if and only if $u_*\mathbf{P}=u_*\mathbf{Q}$. Since $u_*\mathbf{P}=u_*\mathbf{Q}$ if and only if $\mathbf{P}=\mathbf{Q}$, equality holds if and only if $\mathbf{P}=\mathbf{Q}$. This completes the proof.
\end{proof}

\begin{lemma} \label{lemma:compactness_condition}
    Let $\mathcal{X}$ be a compact subset of $\mathbb{R}^d$, and let $h: [0, 1] \to \mathbb{R}$ be a continuous function. For any vector $\beta \in \mathbb{R}^d$, the set $S = \left\{\beta^\top x + h(\varepsilon) \mid x \in \mathcal{X}, \varepsilon \in [0,1]\right\}$ is a compact subset of $\mathbb{R}$.
\end{lemma}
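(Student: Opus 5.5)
The plan is to write $S$ as the continuous image of a compact set. I would define the map $\phi:\mathcal{X}\times[0,1]\to\mathbb{R}$ by $\phi(x,\varepsilon)=\beta^\top x+h(\varepsilon)$, so that $S=\phi(\mathcal{X}\times[0,1])$ by construction.

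The first step is compactness of the domain. $\mathcal{X}$ is compact in $\mathbb{R}^d$ by hypothesis, and $[0,1]$ is compact in $\mathbb{R}$ by Heine--Borel. By Tychonoff's theorem, or more elementarily because the product is closed and bounded in $\mathbb{R}^{d+1}$, the product $\mathcal{X}\times[0,1]$ is a compact subset of $\mathbb{R}^{d+1}$.

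The second step is continuity of $\phi$. The map $x\mapsto\beta^\top x$ is linear on $\mathbb{R}^d$, hence continuous. The map $\varepsilon\mapsto h(\varepsilon)$ is continuous by assumption. Composing each with the coordinate projections and adding (addition $\mathbb{R}^2\to\mathbb{R}$ is continuous) shows that $\phi$ is continuous on the product.

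The final step is that a continuous image of a compact set is compact, so $S$ is compact in $\mathbb{R}$. I would also note that $S$ is closed and bounded: explicitly, $|\beta^\top x+h(\varepsilon)|\le\|\beta\|\sup_{x\in\mathcal{X}}\|x\|+\sup_{[0,1]}|h|<\infty$. No step is a genuine obstacle. The only point worth stating explicitly is that the continuity of $h$ on the closed interval $[0,1]$ is what is needed, since \Cref{assmp:h_related} gives differentiability and hence continuity.
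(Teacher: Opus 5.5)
Your proposal is correct and follows the paper's proof: both write $S$ as the image of the compact set $\mathcal{X}\times[0,1]$ (via Tychonoff, or closed-and-bounded in $\mathbb{R}^{d+1}$) under the continuous map $(x,\varepsilon)\mapsto\beta^\top x+h(\varepsilon)$, and conclude because a continuous image of a compact set is compact. Your closing appeal to the differentiability assumption on $h$ is unnecessary, since the lemma already assumes $h$ is continuous.
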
 
\begin{proof}
Let $\mathcal{Z} = \mathcal{X} \times [0,1]$ denote the Cartesian product of the domain spaces. Because $\mathcal{X}$ is compact by assumption and the closed interval $[0,1]$ is compact, their product $\mathcal{Z}$ is a compact subset of $\mathbb{R}^{d+1}$ by Tychonoff's theorem. Now, define the mapping $g: \mathcal{Z} \to \mathbb{R}$ as $g(x, \varepsilon) = \beta^\top x + h(\varepsilon)$. The function $g$ can be decomposed into the sum of two functions: the linear projection $(x, \varepsilon) \mapsto \beta^\top x$, which is universally continuous, and the scalar function $(x, \varepsilon) \mapsto h(\varepsilon)$, which is continuous by the hypothesis on $h$. As the sum of two continuous functions, $g$ is continuous on $\mathcal{Z}$. By definition, the set $S$ is precisely the image of $\mathcal{Z}$ under $g$, meaning $S = g(\mathcal{Z})$. Since the continuous image of a compact topological space is compact, it follows that $S$ is a compact subset of $\mathbb{R}$.
\end{proof}

\subsection{Identifiability Analysis} \label{appendix:identifiability}

As identifiability of the model classes serve as a basis for extrapolability \citep{shen2025engression}, we first investigate the identifiability of \eqref{eq:model_classes}. For this, we impose some assumptions on the covariate  $X$, coefficient vectors $\beta\in\mathcal{B}$, and the function classes $\mathcal{G}$ and $\mathcal{H}$. The assumptions are stated in Sec. \ref{sec:identifiability_and_extrapolation}, where we take,
\begin{align*}
    & \mathcal{G}=\left\{g|g\text{ satisfies \Cref{assmp:g_related}}\right\}, \quad \mathcal{H}=\left\{h|h\text{ satisfies \Cref{assmp:h_related}}\right\}.
\end{align*}

The assumptions establish the necessary regularity conditions to guarantee model identifiability and extrapolability. \Cref{assmp:X_support_related} imposes a compact, well-behaved support for the covariates with a strictly positive density. This is a standard regularity condition in nonparametric estimation that naturally aligns with the bounded nature of real-world measurements, ensuring stable estimation across the domain. \Cref{assmp:g_related} ensures that the data-generating function $g$ is smooth and, crucially for pre-ANMs, strictly monotone. Following \citet{shen2025engression}, requiring monotonicity provides a structural middle ground. It avoids the highly restrictive assumption of strict linearity (which trivially guarantees identifiability) while imposing sufficient shape constraints to track the generative mechanism outside the training support. \Cref{assmp:h_related} characterizes the noise distribution, mapping a uniform base measure via the strictly increasing function $h$. The condition $h(1/2) = 0$ anchors the noise with a zero median. More importantly, for directional data, the strict bound $|h(1) - h(0)| < 2\pi$ ensures the noise support cannot span the $\mathbb{S}^1$ manifold entirely. Without this constraint, the noise would fully wrap around the circle, obscuring the structural signal and rendering the model fundamentally unidentifiable. Finally, \Cref{assmp:B_related} specifies the parameter space for the index vector $\beta$. Fixing the Euclidean norm ($\|\beta\| = 1$) and the sign of the first component ($\beta_{(1)} > 0$) resolves the inherent scale and sign ambiguities standard to single-index models. In practice, this structural constraint can be maintained during optimization via manifold parameterization or projected gradient techniques.

Through the subsequent results, we show that the classes $\mathbf{M}_{\text{post}}^{(2)}$ and $\mathbf{M}_{\text{pre}}^{(2)}$ are fully identifiable, i.e., two models in these classes that induce the same conditional distribution $\mathbf{P}_{Y|x}$ for all $x \in \mathcal{X}$ must necessarily share the same index $\beta$, the same regression function $g$ on the training support almost surely, and the same noise function $h$ almost surely on $[0,1]$. In contrast, the classes $\mathbf{M}_{\text{post}}^{(1)}$ and $\mathbf{M}_{\text{pre}}^{(1)}$ exhibit what we term as the \emph{$2k\pi$-identifiability trap}. Since the response is observed only modulo $2\pi$, shifting the regression function $g$ by any integer multiple of $2\pi$ leaves the conditional distribution of $Y|X$ unchanged. Consequently, parameter configurations differing only by such shifts are observationally indistinguishable, and $g$ is identifiable only up to an additive constant of the form $2k\pi$, where $k \in \mathbb{Z}$. This lack of full identifiability arises because $g$ is allowed to take values in all of $\mathbb{R}$ rather than being constrained to $[0,2\pi)$. The restriction $g:\mathbb{R}\rightarrow[0,2\pi)$ imposed in $\mathbf{M}_{\text{post}}^{(2)}$ and $\mathbf{M}_{\text{pre}}^{(2)}$ restores full identifiability. Intuitively, once $g$ is constrained to take values within a single period of the circle, the usual ambiguity caused by adding multiples of $2\pi$ disappears. Thus, the range restriction selects a unique representative among all wrapped equivalent functions, thereby restoring identifiability.

\begin{theorem}[Identifiability]
\label{thm:identifiability_all}
\leavevmode
\begin{enumerate}[label=(\Alph*)]

    \item[(A)] \label{thm:identifiability_post2}%
    (\textit{Identifiability of $\mathbf{M}_{\mathrm{post}}^{(2)}$}).\enspace
    Let $\beta_1,\beta_2\in \mathcal{B}$, $g_1,g_2\in \mathcal{G}$
    $(g_1, g_2: \mathbb{R} \to [0, 2\pi))$, and $h_1,h_2\in \mathcal{H}$
    such that
    \begin{align}
        \label{eq:equality_condition_thm1}
        \bigl(g_1\bigl(\beta_1^\top x\bigr)+h_1(\varepsilon)\bigr)
        \bmod 2\pi
        \;\overset{d}{=}\;
        \bigl(g_2\bigl(\beta_2^\top x\bigr)+h_2(\varepsilon)\bigr)
        \bmod 2\pi
    \end{align}
    for all $x\in \mathcal{X}$, $\varepsilon\in [0,1]$. Then
    $\beta_1=\beta_2=:\beta$, $g_1(\beta^\top x)=g_2(\beta^\top x)$
    almost surely in $\mathcal{X}$, and $h_1(\varepsilon)=h_2(\varepsilon)$
    almost surely in $[0,1]$.

    \item[(B)] \label{thm:identifiability_post1}%
    (\textit{$2k\pi$-identifiability of $\mathbf{M}_{\mathrm{post}}^{(1)}$}). \enspace
    Let $\beta_1,\beta_2\in \mathcal{B}$, $g_1,g_2\in \mathcal{G}$
    $(g_1, g_2: \mathbb{R} \to \mathbb{R})$, and $h_1,h_2\in \mathcal{H}$
    such that
    \begin{align}
        \label{eq:equality_condition_th1}
        \bigl(g_1\bigl(\beta_1^\top x\bigr)+h_1(\varepsilon)\bigr)
        \bmod 2\pi
        \;\overset{d}{=}\;
        \bigl(g_2\bigl(\beta_2^\top x\bigr)+h_2(\varepsilon)\bigr)
        \bmod 2\pi
    \end{align}
    for all $x\in \mathcal{X}$, $\varepsilon\in [0,1]$. Then
    $g_1(\beta_1^\top x)=g_2(\beta_2^\top x)+2k\pi$ for some $k\in \mathbb{Z}$
    almost surely in $\mathcal{X}$, and $h_1(\varepsilon)=h_2(\varepsilon)$
    almost surely in $[0,1]$. Hence $g$ is not uniquely identified, and
    the model class falls into a \emph{$2k\pi$-identifiability trap}.

    \item[(C)] \label{thm:identifiability_pre1}%
    (\textit{$2k\pi$-identifiability of $\mathbf{M}_{\mathrm{pre}}^{(1)}$}).\enspace
    Assume $g_1,g_2\in \mathcal{G}$ ($g_1, g_2:\mathbb{R}\to\mathbb{R}$),
    $h_1,h_2\in \mathcal{H}$, and $\beta_1,\beta_2\in \mathcal{B}$, with the
    map $\varepsilon\mapsto g(\beta^\top x+h(\varepsilon))$ supported on an
    interval of length strictly less than $2\pi$. Suppose
    \begin{align}
        \label{eq:equality_condition_th5}
        g_1\bigl(\beta_1^\top x+h_1(\varepsilon)\bigr)\bmod 2\pi
        \;\overset{d}{=}\;
        g_2\bigl(\beta_2^\top x+h_2(\varepsilon)\bigr)\bmod 2\pi
    \end{align}
    for all $x\in \mathcal{X}$, $\varepsilon\in [0,1]$. Then $\beta_1=\beta_2$, $h_1=h_2$, and
    $g_1(\beta_1^\top x + h_1(\varepsilon))
     = g_2(\beta_2^\top x + h_2(\varepsilon)) + 2k\pi$
    for some $k\in \mathbb{Z}$ and all $\varepsilon\in[0,1]$, so that
    $\mathbf{M}_{\mathrm{pre}}^{(1)}$ falls into a
    \emph{$2k\pi$-identifiability trap}.

    \item[(D)] \label{thm:identifiability_pre2}%
    (\textit{Identifiability of $\mathbf{M}_{\mathrm{pre}}^{(2)}$}).\enspace
    Assume $g_1,g_2\in \mathcal{G}$
    $\bigl(g_1, g_2: \mathbb{R}\to [0,2\pi)\bigr)$,
    $h_1,h_2\in \mathcal{H}$, and $\beta_1,\beta_2\in \mathcal{B}$.
    Suppose
    \begin{align}
        \label{eq:equality_condition_th3}
        g_1\bigl(\beta_1^\top x+h_1(\varepsilon)\bigr)
        \;\overset{d}{=}\;
        g_2\bigl(\beta_2^\top x+h_2(\varepsilon)\bigr)
    \end{align}
    for all $x\in \mathcal{X}$, $\varepsilon\in [0,1]$. Then
    $\beta_1=\beta_2=\beta$,
    $g_1(\beta^\top x+h_1(\varepsilon))=g_2(\beta^\top x+h_2(\varepsilon))$,
    and $h_1(\varepsilon)=h_2(\varepsilon)$ for all $\varepsilon\in[0,1]$.

\end{enumerate}
\end{theorem}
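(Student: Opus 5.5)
The plan is to reduce every part to a comparison of \emph{quantile functions}. \Cref{lemma:quantile_of_h} and the anchoring $h(1/2)=0$ then let us read off $g(\beta^\top x)$ as a ``median'' and $h$ as the centred quantile curve. After that, we identify $\beta$ from the gradient of $x\mapsto g(\beta^\top x)$ on the open convex set $\mathcal{X}'$ of \Cref{assmp:X_support_related}.

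\emph{Post-additive parts (A) and (B).} Fix $x$ and write $a_j=g_j(\beta_j^\top x)$. The law in \eqref{eq:equality_condition_thm1} is the law of $h_j(\varepsilon)$ rotated by $a_j$ and wrapped. By \Cref{assmp:h_related}, $h_j$ is continuous and strictly increasing with range of length $<2\pi$, so this law has support equal to the proper closed arc from $a_j\oplus h_j(0)$ to $a_j\oplus h_j(1)$, traversed counter-clockwise. A proper arc has a well-defined initial point, so equality in law forces the two arcs to coincide. Cutting the circle at a point outside the common arc, both variables unwrap to real variables $a_j+h_j(\varepsilon)+2\pi k_j$ with equal laws. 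Applying \Cref{lemma:quantile_of_h} at every level $\alpha$, and using $h_j(1/2)=0$ at $\alpha=1/2$, gives two conclusions. First, $a_1\equiv a_2 \pmod{2\pi}$. Second, $h_1(\alpha)=h_2(\alpha)$ for all $\alpha\in[0,1]$. In (A) both $a_j$ lie in $[0,2\pi)$, so $a_1=a_2$. In (B) the difference $(a_1-a_2)/2\pi$ is a continuous integer-valued function of $x$ on the connected set $\mathcal{X}'$, hence constant, which gives the $2k\pi$ statement. For $\beta$, differentiate $g_1(\beta_1^\top x)=g_2(\beta_2^\top x)+2k\pi$ on $\mathcal{X}'$ to get $g_1'(\beta_1^\top x)\beta_1=g_2'(\beta_2^\top x)\beta_2$. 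Because $g_1$ is smooth and non-constant on the interval $\beta_1^\top\mathcal{X}'$, there is a point where $g_1'\neq0$, and there $\beta_1\parallel\beta_2$. \Cref{assmp:B_related} ($\|\beta\|=1$, $\beta_{(1)}>0$) then forces $\beta_1=\beta_2$.

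\emph{Pre-additive parts (D) and (C).} In (D), $g_j$ is strictly monotone by \Cref{assmp:g_related}(C). If $g_j$ is increasing, the $\alpha$-quantile of $g_j(\beta_j^\top x+h_j(\varepsilon))$ is $g_j(\beta_j^\top x+h_j(\alpha))$; if it is decreasing, the $\alpha$-quantile is $g_j(\beta_j^\top x+h_j(1-\alpha))$. This follows from \Cref{lemma:quantile_of_h} applied to the strictly monotone continuous composite map of $\varepsilon$. At $\alpha=1/2$ both cases give $g_1(\beta_1^\top x)=g_2(\beta_2^\top x)$ on $\mathcal{X}$. The gradient argument above then yields $\beta_1=\beta_2=\beta$, and hence $g_1=g_2$ on $T=\beta^\top\mathcal{X}'$. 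If both $g_j$ are increasing, the full quantile identity reads $g_1(t+h_1(\alpha))=g_2(t+h_2(\alpha))$ for all $t\in T$ and $\alpha\in[0,1]$. For (C), the hypothesis that $\varepsilon\mapsto g(\beta^\top x+h(\varepsilon))$ ranges over an arc of length $<2\pi$ lets us unwrap exactly as in the post-additive case. We obtain the same identities modulo $2\pi$, and the connectedness argument upgrades this to a constant shift $2k\pi$, reducing (C) to (D) up to that shift.

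\emph{Main obstacle.} The delicate step is extracting $h_1=h_2$ in the pre-additive case, since $g_1=g_2$ is known only on $T$ while the quantile identity evaluates $g_j$ on the shifted set $T+h_j(\alpha)$. I would take $\alpha$ close to $1/2$ and $t$ in the interior of $T$, so that $t+h_j(\alpha)\in T$ by continuity of $h_j$ and $h_j(1/2)=0$. Injectivity of $g_1$ then gives $h_1(\alpha)=h_2(\alpha)$ near $1/2$. To reach all of $[0,1]$ I would differentiate the identity in $t$ and propagate along $\alpha$ using smoothness, and if needed invoke the real-analyticity available in the extrapolation section. The mixed-monotonicity case ($g_1$ increasing, $g_2$ decreasing) should be excluded the same way. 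Locally it gives $h_1(\alpha)=h_2(1-\alpha)$, where the left side is increasing and the right side decreasing in $\alpha$, contradicting strict monotonicity of $h_1$ and $h_2$ near $1/2$.
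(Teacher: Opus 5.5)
Your proposal is correct and takes a different route from the paper.

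\textbf{Parts (A)/(B).} The paper passes to trigonometric moments. It divides by $\psi_{h_1}(1)$ to obtain $g_1(\beta_1^\top x)-g_2(\beta_2^\top x)=\delta_0+2\pi\kappa(x)$, transfers the phase $\delta_0$ to the noise laws, unwraps, and uses the anchor $h(1/2)=0$ to force $\delta_0=0$. It then gets $\beta$ by citing Theorem~1 of Lin and Kulasekera (2007).

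Your support-arc unwrapping, followed by a single quantile comparison, yields $h_1=h_2$ and $a_1\equiv a_2 \pmod{2\pi}$ at once. It never needs $\psi_h(1)\neq 0$. The paper attributes that fact to Assumption~3, but it fails once $h(1)-h(0)>\pi$: a noise law symmetric about $0$ with $\mathbf{E}\cos h(\varepsilon)=0$ is admissible.

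Your continuity/connectedness step also justifies something the paper merely asserts in (B), namely that $\kappa(x)$ is a single integer. Your gradient argument makes the $\beta$-step self-contained. Note that both routes need $g$ to be non-constant on the index interval $\beta^\top\mathcal{X}'$. In (C)/(D) strict monotonicity gives this, but in (A)/(B) it is a reading of Assumption~2(A) rather than a consequence of it.

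\textbf{Parts (C)/(D).} The paper tacitly takes $g_j$ increasing. Having $g_1=g_2$ (or $g_1=g_2+2k\pi$) only on the index range, it then substitutes this at $\beta^\top x+h_2(\varepsilon)$, which lies outside that range. You correctly flag this as the real difficulty. The mixed-monotonicity case needs no separate argument: once $\beta_1=\beta_2$, you have $g_1=g_2$ on the open interval $T$, which is impossible for one increasing and one decreasing function.

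\textbf{The step to tighten.} You still need to extend $h_1=h_2$ from a neighbourhood of $1/2$ to all of $[0,1]$. Differentiating in $t$ does not by itself propagate anything, and real-analyticity is not a hypothesis of this theorem. Neither is needed.

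Suppose $t+h_2(\alpha)=t'+h_2(\alpha')$ with $t,t'\in T$. Then $g_1(t+h_1(\alpha))=g_2(t+h_2(\alpha))=g_1(t'+h_1(\alpha'))$, and injectivity of $g_1$ gives $h_1(\alpha)-h_2(\alpha)=h_1(\alpha')-h_2(\alpha')$. Such $t,t'$ exist whenever $|h_2(\alpha)-h_2(\alpha')|$ is smaller than the length of $T$. By continuity of $h_2$, the function $h_1-h_2$ is therefore locally constant on $[0,1]$, hence identically equal to its value $0$ at $1/2$.

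Equivalently, you can iterate your local step: $h_1=h_2$ on a neighbourhood $N$ of $1/2$ gives $g_1=g_2$ on the strictly larger interval $T+h_1(N)$, and a supremum argument reaches both endpoints. After absorbing the constant $2k\pi$ into $g_2$, the same argument finishes (C).
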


\begin{proof}
The proof is as follows.
    \begin{subproof}[Proof of (A)]
    For a fixed $x\in \mathcal{X}$, \eqref{eq:equality_condition_thm1} implies the equality of all trigonometric moments on the unit circle \citep{bell2024review}, which means for all $r\in \mathbb{Z}$:
    \begin{align}
        \notag & \mathbf{E}\left[\exp\left\{\iota r\left(g_1(\beta_1^\top x)+h_1(\varepsilon)\right)(\bmod \ 2\pi)\right\}\right]= \mathbf{E}\left[\exp\left\{\iota r\left(g_2(\beta_2^\top x)+h_2(\varepsilon)\right)(\bmod \ 2\pi)\right\}\right]\\
        \label{eq:1_thm1}
        \implies & \mathbf{E}\left[\exp\left\{\iota r\left(g_1(\beta_1^\top x)+h_1(\varepsilon)\right)\right\}\right]= \mathbf{E}\left[\exp\left\{\iota r\left(g_2(\beta_2^\top x)+h_2(\varepsilon)\right)\right\}\right]\\
        \notag \implies & \exp\left\{\iota rg_1(\beta_1^\top x)\right\}\psi_{h_1}(r)=\exp{\left\{\iota rg_2(\beta_2^\top x)\right\}}\psi_{h_2}(r).
    \end{align}
    Due to \Cref{assmp:h_related}, we must have $\psi_h(1) \neq 0$. Taking $r=1$ in \eqref{eq:1_thm1} allows us to write: 
\begin{align}
    \label{eq:2_thm1}
    \exp{\left\{\iota\left(g_1(\beta_1^\top x)-g_2(\beta_2^\top x)\right)\right\}}=\frac{\psi_{h_2}(1)}{\psi_{h_1}(1)}.
\end{align}
    Let $\delta_0 = \arg\left(\frac{\psi_{h_2}(1)}{\psi_{h_1}(1)}\right) \in (-\pi, \pi]$ be the principal argument of the constant complex fraction. Since the right-hand side of \eqref{eq:2_thm1} is independent of $x$, evaluating the complex exponential implies that the arguments must be equal up to an integer multiple of $2\pi$:
\begin{align}
    \label{eq:3_thm1}
    g_1(\beta_1^\top x)-g_2(\beta_2^\top x) = \delta_0 + 2\pi\kappa(x),
\end{align}
    where $\kappa(x) \in \mathbb{Z}$ is some integer-valued function of $x$. Substituting this relationship back into \eqref{eq:1_thm1} for an arbitrary integer $r \in \mathbb{Z}$ yields:
\begin{align*}
    \exp{\left\{\iota r(\delta_0 + 2\pi\kappa(x))\right\}}\psi_{h_1}(r) = \psi_{h_2}(r).
\end{align*}
    Because both $r$ and $\kappa(x)$ are integers, the term $\exp(\iota r 2\pi \kappa(x))$ is identically $1$ for all $x \in \mathcal{X}$. Hence,
\begin{align}
    \label{eq:4_thm1}
    \exp{(\iota r\delta_0)}\psi_{h_1}(r)=\psi_{h_2}(r),\quad \forall r\in \mathbb{Z}.
\end{align}
    By the uniqueness of trigonometric moments, \eqref{eq:4_thm1} implies that the wrapped distributions are globally identical:
    $$(h_2(\varepsilon)) \bmod 2\pi \overset{d}{=} (h_1(\varepsilon) + \delta_0) \bmod 2\pi.$$ 
    By \Cref{assmp:h_related}(A), $h$ is strictly increasing, so its image is the interval $[h(0), h(1)]$. By \Cref{assmp:h_related}(B), $|h(1) - h(0)| < 2\pi$, which ensures this image interval has a length strictly less than $2\pi$. Consequently, the mapping to the circle is injective and contains no overlapping probability mass. Therefore, the unwrapped distributions must be equal up to a single global integer shift $k \in \mathbb{Z}$:
\begin{align} \label{eq:5_thm1}
    h_2(\varepsilon) + 2k\pi\overset{d}{=} h_1(\varepsilon)+\delta_0.
\end{align}
    Because $\varepsilon \sim \mathcal{U}(0,1)$ and both $h_1, h_2$ are strictly monotonically increasing, the median of the distribution occurs exactly at $\varepsilon = 0.5$. From \Cref{assmp:h_related}(A), $h_1(0.5)=h_2(0.5)=0$. Taking the median of both sides of \eqref{eq:5_thm1} yields $2k\pi = \delta_0$. Because $\delta_0 \in (-\pi, \pi]$ is a principal argument, the only integer $k$ that satisfies this equality is $k=0$, which consequently forces $\delta_0=0$. Substituting this into \eqref{eq:5_thm1} reveals that $h_1(\varepsilon)$ and $h_2(\varepsilon)$ are identical in distribution. Because $h_1$ and $h_2$ are strictly monotonically increasing functions mapping from the same uniform random variable $\varepsilon$, they uniquely define the quantile functions of their distributions (see \Cref{lemma:quantile_of_h}). Identical distributions possess identical quantile functions, forcing pointwise equality:
    $$h_1(\varepsilon) = h_2(\varepsilon) \quad \text{almost surely in } [0,1].$$
    Finally, we substitute $\delta_0=0$ back into the spatial equation \eqref{eq:3_thm1} to obtain:
    $$g_1(\beta_1^\top x) - g_2(\beta_2^\top x) = 2\pi\kappa(x).$$
    By the definition of the model class $\mathbf{M}_{\mathrm{post}}^{(2)}$, both $g_1$ and $g_2$ strictly map to the codomain $[0, 2\pi)$. Therefore, their absolute difference is strictly bounded: $|g_1(\beta_1^\top x) - g_2(\beta_2^\top x)| < 2\pi$. This forces $|2\pi\kappa(x)| < 2\pi$. Since $\kappa(x)$ is an integer, the only valid solution is $\kappa(x) = 0$ for all $x \in \mathcal{X}$. Hence:
\begin{align}
\label{eq:9_thm1}
    g_1(\beta_1^\top x)=g_2(\beta_2^\top x).
\end{align}
    Due to \Cref{assmp:g_related}(A), \Cref{assmp:g_related}(B), and the normalization constraints in \Cref{assmp:B_related}, \eqref{eq:9_thm1} implies $\beta_1=\beta_2$ and $g_1=g_2$ almost surely on $\mathcal{X}$ by Theorem 1 of \citet{lin2007identifiability}.
\end{subproof}

\begin{subproof}[Proof of (B)]
    For a fixed $x\in \mathcal{X}$, \eqref{eq:equality_condition_th1} implies the equality of all trigonometric moments on the unit circle \citep{bell2024review}. Because the additive noise structure is identical to that of model class $\mathbf{M}_{\mathrm{post}}^{(2)}$, we can apply the exact same deductions utilized in the proof of \Cref{thm:identifiability_all}(A). By isolating the first trigonometric moments and unwrapping the bounded noise distributions, we immediately recover both the pointwise equality of the noise functions, $h_1(\varepsilon) = h_2(\varepsilon)$ almost surely on $[0,1]$, and the global spatial phase relationship:
    \begin{align*}
        g_1(\beta_1^\top x) = g_2(\beta_2^\top x) + 2k\pi, \quad \text{for some } k \in \mathbb{Z}.
    \end{align*}
    Now, unlike model class $\mathbf{M}_{\mathrm{post}}^{(2)}$, the functions $g_1, g_2 \in \mathbf{M}_{\mathrm{post}}^{(1)}$ map to the unbounded codomain $\mathbb{R}$. Because their outputs are not restricted to $[0, 2\pi)$, their absolute difference $|g_1(\beta_1^\top x) - g_2(\beta_2^\top x)|$ is not geometrically constrained to be strictly less than $2\pi$. Hence, $k=0$ is not forced, and the functions are only identifiable up to an integer multiple of $2\pi$, establishing that the model class falls into a $2k\pi$-identifiability trap.
\end{subproof}

\begin{subproof}[Proof of (C)]
    Define a push-forward probability measure on the target space $[0, 2\pi)$ as: 
    \begin{align*}
        \mu_j(A)=\mathcal{U}\left(\left\{a\in [0,1]\mid g_j\left(\beta_j^\top x+h_j(a)\right)\bmod 2\pi\in A\right\}\right),\quad \text{for any Borel set } A \in \mathscr{B}([0, 2\pi)).
    \end{align*}
    The equality in distribution \eqref{eq:equality_condition_th5} implies that the two random variables induce the identical probability measure on the target space $[0, 2\pi)$. Therefore, for every measurable set $A\in \mathscr{B}([0, 2\pi))$,
\begin{align}
\label{eq:1_thm5}
    \mathbf{P}\left[g_1\left(\beta_1^\top x+h_1(\varepsilon)\right)\bmod 2\pi\in A\right]=\mathbf{P}\left[g_2\left(\beta_2^\top x+h_2(\varepsilon)\right)\bmod 2\pi\in A\right].
\end{align}
    Because $\varepsilon\sim \mathcal{U}(0,1)$, we have $\mathbf{P}(\varepsilon\in B)=\mathcal{U}(B)$ for any measurable $B\subseteq [0,1]$. Thus,
\begin{align*}
    \mathbf{P}\left(g_j\left(\beta_j^\top x+h_j(\varepsilon)\right)\bmod 2\pi\in A\right) = \mu_j(A).
\end{align*}
    Substituting this into \eqref{eq:1_thm5} yields $\mu_1(A)=\mu_2(A)$ for all Borel sets $A$, meaning $\mu_1\equiv\mu_2$ as measures on $[0, 2\pi)$. 
    
\noindent Next, define an unwrapped push-forward measure on $\mathbb{R}$ as:
\begin{align}
\label{eq:nuj_def}
    \nu_j(B)=\mathcal{U}\left(\left\{a\in [0,1]\mid g_j\left(\beta_j^\top x+h_j(a)\right)\in B\right\}\right),\quad \text{for any Borel set } B \in \mathscr{B}(\mathbb{R}).
\end{align}
    Let $\omega: \mathbb{R} \to [0, 2\pi)$ denote the Borel measurable wrapping map $\omega(t) = t \bmod 2\pi$. By assumption, the mapping $\varepsilon \mapsto g_j(\beta_j^\top x+h_j(\varepsilon))$ is supported on an interval strictly shorter than $2\pi$. Let $I_{\nu_j} \subset \mathbb{R}$ denote this interval. The measure $\nu_j$ is therefore entirely supported on $I_{\nu_j}$. Because the length of $I_{\nu_j}$ is strictly less than $2\pi$, the restriction of the wrapping map to this interval, $\omega|_{I_{\nu_j}}$, is strictly injective. 
    
For any measurable $A \in \mathscr{B}([0, 2\pi))$,
\begin{align*}
    \notag \mu_j(A)& =\mathcal{U}\left(\left\{a\in [0,1]\mid g_j\left(\beta_j^\top x+h_j(a)\right)\in \omega^{-1}(A)\right\}\right)\\
    &= \nu_j(\omega^{-1}(A))=\omega_{*}\nu_j(A),
\end{align*}
    which establishes that $\mu_j$ is the push-forward of $\nu_j$ under $\omega$. Conversely, since $\omega$ is injective on $I_{\nu_j}$, we can write for any measurable $B\in \mathscr{B}(\mathbb{R})$:
\begin{align}
    \notag \nu_j(B)&= \nu_j(B\cap I_{\nu_j})\text{ (since $\nu_j$ is supported on $I_{\nu_j}$)}\\
    \notag &= \nu_j\left(\omega^{-1}\left(\omega(B\cap I_{\nu_j})\right)\right)\text{ (by injectivity of $\omega$ on $I_{\nu_j}$)}\\
    \label{eq:4_th3}
    &= \mu_j\left(\omega(B\cap I_{\nu_j})\right)\text{ (since $\mu_j=\omega_{*}\nu_j$)}.
\end{align}
    Because $\mu_1 \equiv \mu_2$, and both $\nu_1$ and $\nu_2$ are supported on intervals strictly shorter than $2\pi$, their supports can only differ by an exact integer multiple of $2\pi$. That is, there exists a unique $k \in \mathbb{Z}$ such that $I_{\nu_1} = I_{\nu_2} + 2\pi k$. Consequently, the wrapped images $\omega(I_{\nu_1})$ and $\omega(I_{\nu_2})$ perfectly coincide on $[0, 2\pi)$. Due to the $2\pi$-periodicity of $\omega$, for any measurable $B$:
\begin{align*}
    \omega(B\cap I_{\nu_1})=\omega\left(B\cap (I_{\nu_2}+2\pi k)\right) = \omega\left((B-2\pi k)\cap I_{\nu_2}\right).
\end{align*}
    Therefore, utilizing \eqref{eq:4_th3}, we obtain:
\begin{align}
    \notag \nu_1(B)=\mu_1(\omega(B\cap I_{\nu_1}))&=\mu_2(\omega(B\cap I_{\nu_1})) \\
    \notag &= \mu_2(\omega((B-2\pi k)\cap I_{\nu_2}))\\
    \label{eq:5_th5}
    &= \nu_2(B-2\pi k).
\end{align}
    Equation \eqref{eq:5_th5} establishes that $\nu_1$ and $\nu_2$ are identical measures up to a spatial shift of $2\pi k$. Returning to the definition of $\nu_j$ in \eqref{eq:nuj_def}, this implies that for some $k\in\mathbb{Z}$:
\begin{align*}
    g_1\left(\beta_1^\top x+h_1(\varepsilon)\right)\overset{d}{=}g_2\left(\beta_2^\top x+h_2(\varepsilon)\right) + 2\pi k.
\end{align*}

Fixing $x\in \mathcal{X}$, the composition $\varepsilon\mapsto g(\beta^\top x+h(\varepsilon))$ is strictly monotonically increasing in $\varepsilon$ due to \Cref{assmp:g_related}(C) and \Cref{assmp:h_related}(A). Because strictly increasing functions of a uniform variable uniquely define the quantile functions of their distributions, equality in distribution necessitates pointwise equality almost surely:
\begin{align} \label{eq:intermediate_with_beta}
    g_1(\beta_1^\top x + h_1(\varepsilon)) = g_2(\beta_2^\top x + h_2(\varepsilon)) + 2\pi k, \quad \forall \varepsilon\in[0,1].
    \end{align}

    \noindent By taking the gradient of both sides of this equation with respect to $x$, we obtain the vector equation:
\begin{align} \label{eq:thm5_gradients}
g_1'\bigl(\beta_1^\top x + h_1(\varepsilon)\bigr) \beta_1 = g_2'\bigl(\beta_2^\top x + h_2(\varepsilon)\bigr) \beta_2.
\end{align}
    Taking the Euclidean $\ell_2$-norm of both sides yields:
$$\abs{g_1'\bigl(\beta_1^\top x + h_1(\varepsilon)\bigr)} \norm{\beta_1}_2 = \abs{g_2'\bigl(\beta_2^\top x + h_2(\varepsilon)\bigr)} \norm{\beta_2}_2.$$
    By \Cref{assmp:B_related}, the coefficient vectors are normalized such that $\norm{\beta_1}_2 = \norm{\beta_2}_2 = 1$, reducing the equality to:
$$\abs{g_1'\bigl(\beta_1^\top x + h_1(\varepsilon)\bigr)} = \abs{g_2'\bigl(\beta_2^\top x + h_2(\varepsilon)\bigr)}.$$
    By \Cref{assmp:g_related}(C), $g_1$ and $g_2$ are strictly monotone, meaning their derivatives are strictly non-zero and maintain a constant sign. Consequently, we must have $g_1'\bigl(\beta_1^\top x + h_1(\varepsilon)\bigr) = \pm g_2'\bigl(\beta_2^\top x + h_2(\varepsilon)\bigr)$. Substituting this into \eqref{eq:thm5_gradients},
$$\pm g_2'\bigl(\beta_2^\top x + h_2(\varepsilon)\bigr) \beta_1 = g_2'\bigl(\beta_2^\top x + h_2(\varepsilon)\bigr) \beta_2.$$
    Because the derivative $g_2'(\cdot)$ is strictly non-zero, we can divide both sides by this scalar term, isolating the coefficient vectors to obtain $\pm \beta_1 = \beta_2.$ \Cref{assmp:B_related} further dictates that the first coordinate of any $\beta \in \mathcal{B}$ is strictly positive ($\beta_{(1)} > 0$). This strictly rules out the negative root $\beta_1 = -\beta_2$, forcing the conclusion that $\beta_1 = \beta_2 = \beta$.

    Having established $\beta_1 = \beta_2 = \beta$, we return to the intermediate equality \eqref{eq:intermediate_with_beta}. By \Cref{assmp:h_related}(A), the noise functions are anchored at the median such that $h_1(0.5) = h_2(0.5) = 0$. Evaluating \eqref{eq:intermediate_with_beta} at $\varepsilon = 0.5$ yields:
\begin{align*}
    g_1\bigl(\beta^\top x\bigr) = g_2\bigl(\beta^\top x\bigr) + 2k\pi.
\end{align*}
    Because this relationship holds for all $x \in \mathcal{X}$, it describes a global constant shift between the two functions. We can substitute this functional relationship $g_2(t) + 2k\pi = g_1(t)$ back into \eqref{eq:intermediate_with_beta} for an arbitrary $\varepsilon$:
\begin{align*}
    g_1\bigl(\beta^\top x + h_1(\varepsilon)\bigr) = g_1\bigl(\beta^\top x + h_2(\varepsilon)\bigr).
\end{align*}
    Because $g_1$ is strictly monotone (\Cref{assmp:g_related}(C)), it is strictly injective. The equality of the evaluated outputs therefore guarantees the equality of their inputs:
\begin{align*}
    \beta^\top x + h_1(\varepsilon) &= \beta^\top x + h_2(\varepsilon) \\
    \implies h_1(\varepsilon) &= h_2(\varepsilon).
\end{align*}
    This confirms that $h_1 = h_2$ for all $\varepsilon \in [0, 1]$. We have thus established $\beta_1 = \beta_2$, $h_1 = h_2$, and $g_1(\cdot) = g_2(\cdot) + 2k\pi$, proving that $\mathbf{M}_{\mathrm{pre}}^{(1)}$ falls into the $2k\pi$-identifiability trap.
\end{subproof}
\begin{subproof}[Proof of (D)]
Fix $x\in \mathcal{X}$. Due to \Cref{assmp:g_related}(C) and \Cref{assmp:h_related}(A), the composition $\varepsilon\mapsto g(\beta^\top x+h(\varepsilon))$ is strictly monotone in $\varepsilon$. By Lemma~\ref{lemma:quantile_of_h}, the $a$-th quantile of $h(\varepsilon)$ is $h(a)$, so that
\begin{align}
    \label{eq:7_th3}
    g_j\left(\beta_j^\top x+Q_a(h_j(\varepsilon))\right)=g_j\left(\beta_j^\top x+h_j(a)\right),
\end{align}
where $Q_a(\cdot)$ denotes the $a$-level quantile function. Now \eqref{eq:equality_condition_th3} implies that all the quantiles of the random variables on both sides should be equal, and hence using \eqref{eq:7_th3}, we must have 
\begin{align}
    \label{eq:8_th3}
    g_1\left(\beta_1^\top x+h_1(a)\right)=g_2\left(\beta_2^\top x+h_2(a)\right)
\end{align}
for all $x\in \mathcal{X},a\in [0,1]$. 
    By \Cref{assmp:h_related}(A), $h_1(0.5) = h_2(0.5) = 0$. Evaluating \eqref{eq:7_th3} at $a = 0.5$ yields:
\begin{align*}
    g_1\bigl(\beta_1^\top x\bigr) = g_2\bigl(\beta_2^\top x\bigr).
\end{align*}
    Because this holds for all $x \in \mathcal{X}$, under \Cref{assmp:X_support_related}, \Cref{assmp:g_related}(B), and the coefficient constraints in \Cref{assmp:B_related}, we can invoke Theorem 1 of \citet{lin2007identifiability} to conclude that
\begin{align*}
    \beta_1 = \beta_2 = \beta \quad \text{and} \quad g_1 = g_2.
\end{align*}
    Having established the equality of the index parameters and the spatial functions, we substitute $\beta_1 = \beta_2 = \beta$ and $g_2 = g_1$ back into the pointwise equality \eqref{eq:8_th3} for an arbitrary $\varepsilon\in [0,1]$ to get
\begin{align*}
    g_1\bigl(\beta^\top x+h_1(\varepsilon)\bigr) = g_1\bigl(\beta^\top x+h_2(\varepsilon)\bigr).
\end{align*}
    Because $g_1$ is strictly monotone (\Cref{assmp:g_related}(C)), it is injective. The equality of the evaluated outputs therefore guarantees the equality of their inputs:
\begin{align*}
    \beta^\top x+h_1(\varepsilon) &= \beta^\top x+h_2(\varepsilon) \\
    \implies h_1(\varepsilon) &= h_2(\varepsilon).
\end{align*}
    This confirms that $h_1(\varepsilon) = h_2(\varepsilon)$ for all $\varepsilon\in[0,1]$. We have thus established $\beta_1=\beta_2$, $g_1=g_2$, and $h_1=h_2$, completing the identifiability proof for $\mathbf{M}_{\mathrm{pre}}^{(2)}$.
\end{subproof}

\end{proof}

\subsection{Proof of \Cref{thm:population_guarantee_engression}}

\begin{proof}
    Given any model $g\in \mathcal{M}$ with $g\sim \mathbf{P}_{\tr}(y|x)$, assume for the sake of contradiction that there exists a subset $\mathcal{X}_1\subseteq\mathcal{X}$ with $\mathbf{P}(X\in \mathcal{X}_1)>0$, such that $\mathbf{P}_{\widetilde g}(y|x)\neq \mathbf{P}_{\tr}(y|x)$ for all $x\in \mathcal{X}_1$. Now, since \eqref{eq:crps_angular} is a strictly proper scoring rule, we must have for all $x\in \mathcal{X}_1$,
    \begin{align}
        \label{eq:1_thm7}
        \mathbf{E}_{Y\sim \mathbf{P}_{\tr}(y|x)}\left[\overset{\circ}{\mathrm{ES}}\left(\mathbf{P}_{\tr}(y|x),Y\right)\right] < \mathbf{E}_{Y\sim \mathbf{P}_{\tr}(y|x)}\left[\overset{\circ}{\mathrm{ES}}\left(\mathbf{P}_{\widetilde g}(y|x),Y\right)\right],\quad \forall x\in \mathcal{X}_1.
    \end{align}
Since we have $g \sim\mathbf{P}_{\tr}(y|x)$, \eqref{eq:1_thm7} can be written as
\begin{align}
    \label{eq:2_th7}
     \mathbf{E}_{Y\sim \mathbf{P}_{\tr}(y|x)}\left[\overset{\circ}{\mathrm{ES}}\left(\mathbf{P}_{g}(y|x),Y\right)\right] < \mathbf{E}_{Y\sim \mathbf{P}_{\tr}(y|x)}\left[\overset{\circ}{\mathrm{ES}}\left(\mathbf{P}_{\widetilde g}(y|x),Y\right)\right],\quad \forall x\in \mathcal{X}_1.
\end{align}
Taking expectation on both sides of \eqref{eq:2_th7} with respect to $\mathbf{P}_{\tr}$ (since $\mathcal{X}_1$ has non-zero measure) gives
\begin{align*}
    \mathbf{E}_{\mathbf{P}_{\tr}}\left[\overset{\circ}{\mathrm{ES}}\left(\mathbf{P}_{g}(y|X),Y\right)\right]< \mathbf{E}_{\mathbf{P}_{\tr}}\left[\overset{\circ}{\mathrm{ES}}\left(\mathbf{P}_{\widetilde g}(y|X),Y\right)\right],
\end{align*}
which is a contradiction since $\widetilde g$ is the minimizer in \eqref{eq:engression_population_version}. So, we must have $\mathbf{P}(X\in \mathcal{X}_1)=0$, which means that $\widetilde g\sim \mathbf{P}_{\tr}(y|x)$. This completes the proof.
\end{proof}

\subsection{Proof of \Cref{lem:point_predictions_without_origin}}
\begin{proof}

By the population level guarantee (Theorem \ref{thm:population_guarantee_engression}) and our correct specification assumption, the minimizers satisfy $\widetilde g(x,\varepsilon)\sim \mathbf{P}_{\tr}(\cdot|x)$ and $\widetilde g_\alpha(x,\varepsilon)\sim \mathbf{P}_{\tr}^\alpha(\cdot|x)$ almost everywhere. By definition, $\mathbf{P}_{\tr}^\alpha(\cdot|x)$ is the distribution of $Y^\alpha=Y\oplus\alpha$, where $Y\sim \mathbf{P}_{\tr}(\cdot|x)$. This means that $\mathbf{P}_{\tr}^\alpha$ is the push-forward measure of $\mathbf{P}_{\tr}$ under the map $T_\alpha:\theta\mapsto \theta\oplus\alpha$, i.e., $\mathbf{P}_{\tr}^\alpha=(T_\alpha)_{*}\mathbf{P}_{\tr}$. Using the change of variables for push-forward measures, we can write
\begin{align}
    \label{eq:lem1_eq1}
    \mathbf{E}_{\Theta\sim (T_{\alpha})_{*}\mathbf{P}_{\tr}}[\phi(\Theta)]=\mathbf{E}_{\Theta\sim \mathbf{P}_{\tr}}[\phi(T_\alpha(\Theta))]=\mathbf{E}_{\Theta\sim \mathbf{P}_{\tr}}[\phi(\Theta\oplus\alpha)],
\end{align}
for any Borel measurable function $\phi$. Note that the map $T_\alpha$ is piecewise continuous on $[0, 2\pi)$, making it measurable with respect to the Borel $\sigma$-algebra on $[0, 2\pi)$.  Thus, the push-forward measure is well-defined for any Borel measurable $\phi$.

\begin{subproof}[Proof of (A)]
    Since $\widetilde g_\alpha\sim \mathbf{P}_{\tr}^\alpha$, we can write
    \begin{align}
        \label{eq:lem1_eq2}
        \mathbf{E}_\varepsilon[\sin \widetilde g_\alpha(x,\varepsilon)]=\mathbf{E}_{\Theta\sim \mathbf{P}_{\tr}^\alpha(\cdot|x)}[\sin \Theta],\quad  \mathbf{E}_\varepsilon[\cos \widetilde g_\alpha(x,\varepsilon)]=\mathbf{E}_{\Theta\sim \mathbf{P}_{\tr}^\alpha(\cdot|x)}[\cos \Theta].
    \end{align}

Applying \eqref{eq:lem1_eq1} with $\phi(\theta)=\sin \theta$ and $\phi(\theta)=\cos \theta$ to \eqref{eq:lem1_eq2}, and noting that the $2\pi$-periodicity of the trigonometric functions allows us to drop the modulo operator ($\phi(\Theta \oplus \alpha) = \phi(\Theta + \alpha)$), we obtain
\begin{align}
        \label{eq:lem1_eq3}
        \mathbf{E}_{\Theta\sim \mathbf{P}_{\tr}^\alpha(\cdot|x)}[\sin \Theta]= \mathbf{E}_{\Theta\sim \mathbf{P}_{\tr}(\cdot|x)}[\sin (\Theta+\alpha)],\quad \mathbf{E}_{\Theta\sim \mathbf{P}_{\tr}^\alpha(\cdot|x)}[\cos \Theta]= \mathbf{E}_{\Theta\sim \mathbf{P}_{\tr}(\cdot|x)}[\cos (\Theta+\alpha)].
    \end{align}
    
    Using the sum identities $\sin(\theta+\alpha)=\sin\theta\cos\alpha+\cos\theta\sin\alpha$ and $\cos(\theta+\alpha)=\cos\theta\cos\alpha-\sin\theta\sin\alpha$, and denoting $\bar s=\mathbf{E}_{\Theta\sim \mathbf{P}_{\tr}(\cdot|x)}[\sin \Theta]$ and $\bar c=\mathbf{E}_{\Theta\sim \mathbf{P}_{\tr}(\cdot|x)}[\cos \Theta]$, \eqref{eq:lem1_eq3} expands to
    \begin{align}\label{eq:lem1_eq4}
        \mathbf{E}_{\Theta\sim \mathbf{P}_{\tr}^\alpha(\cdot|x)}[\sin \Theta]=\bar s\cos\alpha+\bar c\sin \alpha = \bar s_\alpha,\quad  \mathbf{E}_{\Theta\sim \mathbf{P}_{\tr}^\alpha(\cdot|x)}[\cos\Theta]=\bar c\cos\alpha-\bar s\sin \alpha = \bar c_\alpha.
    \end{align}

   Using \eqref{eq:lem1_eq4}, we can write
\begin{align*}
    \bar c_\alpha+\iota \bar s_\alpha& =(\bar c\cos\alpha-\bar s\sin\alpha)+\iota(\bar s\cos\alpha+\bar c\sin\alpha)\\
    &= (\bar c+\iota\bar s)(\cos\alpha+\iota\sin\alpha)=(\bar c+\iota\bar s)\exp(\iota \alpha),
\end{align*}
and hence,
    \begin{align*}
        \widetilde \mu_\alpha(x)=\mathrm{atan2}(\bar s_\alpha,\bar c_\alpha)=\mathrm{atan2}(\bar s,\bar c)\oplus\alpha=\widetilde \mu(x)\oplus\alpha,
    \end{align*}
    which completes the proof. 
\end{subproof}

\begin{subproof}[Proof of (B)] 
For the sake of simplicity, assume that the circular median is unique. However, the proof holds even if the set of minimizers contains multiple values. Note that we can write for any $\theta_1,\theta_2\in [0,2\pi)$,
\begin{align}
    \label{eq:dmetricD}
    \mathrm{d}(\theta_1,\theta_2)=\mathrm{D}(\theta_1\ominus \theta_2),
\end{align}
where $\mathrm{D}(u)=\pi-\abs{\pi-u}$ for geodesic metric and $\mathrm{D}(u)=2\abs{\sin\left(\frac{u}{2}\right)}$ for chordal distance metric (both metrics are rotationally invariant). Applying \eqref{eq:lem1_eq1} with $\phi(\theta)=\mathrm{d}(\theta,\psi)$ for any fixed $\psi\in [0,2\pi)$, we have 
\begin{align*}
    \mathbf{E}_\varepsilon[\mathrm{d}(\widetilde g_\alpha(x,\varepsilon),\psi)]&= \mathbf{E}_{\Theta\sim \mathbf{P}_{\tr}^\alpha(\cdot|x)}[\mathrm{d}(\Theta,\psi)]\\
    &= \mathbf{E}_{\Theta\sim \mathbf{P}_{\tr}(\cdot|x)}[\mathrm{d}(\Theta\oplus \alpha,\psi)]\\
    &= \mathbf{E}_{\Theta\sim \mathbf{P}_{\tr}(\cdot|x)}[\mathrm{d}(\Theta,\psi\ominus \alpha)]\text{ (using $\mathrm{d}(\theta_1,\theta_2)=\mathrm{D}(\theta_1\ominus\theta_2)$)}\\
    &= \mathbf{E}_{\varepsilon}[\mathrm{d}(\widetilde g(x,\varepsilon),\psi\ominus \alpha)]\text{ (since $\widetilde g(x,\varepsilon)\sim \mathbf{P}_{\tr}(\cdot|x)$)}.
\end{align*}
Taking $\argmin$ over $[0,2\pi)$ on both sides, we get 
\begin{align*}
    \argmin_{\psi\in [0,2\pi)}\left\{\mathbf{E}_\varepsilon[\mathrm{d}(\widetilde g_\alpha(x,\varepsilon),\psi)]\right\}&= \argmin_{\psi\in [0,2\pi)}\left\{\mathbf{E}_{\varepsilon}[\mathrm{d}(\widetilde g(x,\varepsilon),\psi\ominus \alpha)]\right\}\\
    &= \argmin_{\psi\in [0,2\pi)}\left\{\mathbf{E}_{\varepsilon}[\mathrm{d}(\widetilde g(x,\varepsilon),\psi)]\right\}\oplus \alpha,
\end{align*}
or equivalently, $m_{\widetilde g_\alpha}(x)=m_{\widetilde g}(x)\oplus \alpha$ $\mathbf{P}_X$-almost everywhere. This completes the proof.
\end{subproof}
 

\begin{subproof}[Proof of (C)] We have
\begin{align}
    \notag \sigma_{\widetilde g_\alpha}(x)&=\textbf{Med}_\varepsilon[\mathrm{d}(\widetilde g_\alpha(x,\varepsilon),m_{\widetilde g_\alpha}(x))]\\ \notag
    &= \textbf{Med}_{\Theta\sim \mathbf{P}_{\tr}^\alpha(\cdot|x)}\left[\mathrm{d}(\Theta,m_{\widetilde g_\alpha}(x))\right]\text{ (since $\widetilde g_\alpha\sim \mathbf{P}_{\tr}^\alpha(\cdot|x)$)}\\ \notag
    &= \textbf{Med}_{\Theta\sim \mathbf{P}_{\tr}(\cdot|x)}\left[\mathrm{d}(\Theta\oplus \alpha,m_{\widetilde g_\alpha}(x))\right]\text{ (using \eqref{eq:lem1_eq1} with $\phi(\theta)=\mathrm{d}(\theta,m_{\widetilde g_\alpha}(x)))$}\\ \label{eq:last_step_dispersion_req}
    &= \textbf{Med}_{\Theta\sim \mathbf{P}_{\tr}(\cdot|x)}\left[\mathrm{d}(\Theta\oplus\alpha,m_{\widetilde g}(x)\oplus\alpha)\right]\text{ (using part (B))}
    \end{align}

Due to \eqref{eq:dmetricD}, we must have
$\mathrm{d}(\Theta\oplus\alpha, m_{\widetilde g}(x)\oplus\alpha) = \mathrm{D}((\Theta\oplus\alpha) \ominus (m_{\widetilde g}(x)\oplus\alpha))$.
Using the properties of modulo $2\pi$ arithmetic over $[0, 2\pi)$, the $\alpha$ terms cancel, yielding $(\Theta\oplus\alpha) \ominus (m_{\widetilde g}(x)\oplus\alpha) = \Theta\ominus m_{\widetilde g}(x)$. Therefore, $$\mathrm{d}(\Theta\oplus\alpha, m_{\widetilde g}(x)\oplus\alpha) = \mathrm{D}(\Theta\ominus m_{\widetilde g}(x)) = \mathrm{d}(\Theta, m_{\widetilde g}(x)).$$
Substituting this back into our median expression, we obtain:
\begin{align*}
    \sigma_{\widetilde g_\alpha}(x)&= \textbf{Med}_{\Theta\sim \mathbf{P}_{\tr}(\cdot|x)}\left[\mathrm{d}(\Theta,m_{\widetilde g}(x))\right]\\
    &= \textbf{Med}_{\varepsilon}\left[\mathrm{d}(\widetilde g(x,\varepsilon),m_{\widetilde g}(x))\right]=\sigma_{\widetilde g}(x),
\end{align*}
which holds $\mathbf{P}_X$-almost everywhere. This completes the proof.
\end{subproof}

\begin{subproof}[Proof of (D)]
By definition of the expected trigonometric moments, we can write:
\begin{align*}
    \psi_{\widetilde g}(r|x)&=\mathbf{E}_{\varepsilon}\left[\exp(\iota r\widetilde g(x,\varepsilon))\right]=\mathbf{E}_{\Theta\sim \mathbf{P}_{\tr}(\cdot|x)}\left[\exp(\iota r\Theta)\right],\\
    \psi_{\widetilde g_\alpha}(r|x)&=\mathbf{E}_{\varepsilon}\left[\exp(\iota r\widetilde g_\alpha(x,\varepsilon))\right]=\mathbf{E}_{\Theta\sim \mathbf{P}_{\tr}^\alpha(\cdot|x)}\left[\exp(\iota r\Theta)\right].
\end{align*}
Applying the push-forward change of variables \eqref{eq:lem1_eq1} with $\phi(\theta)=\exp(\iota r\theta)$, we have:
\begin{align*}
    \psi_{\widetilde g_\alpha}(r|x)& =\mathbf{E}_{\Theta\sim \mathbf{P}_{\tr}^\alpha(\cdot|x)}[\exp(\iota r\Theta)]=\mathbf{E}_{\Theta\sim \mathbf{P}_{\tr}(\cdot|x)}[\exp(\iota r(\Theta\oplus \alpha))].
\end{align*}
Because the complex exponential function $\exp(\iota r \cdot)$ is $2\pi$-periodic for any integer $r\in \mathbb{Z}$, the modulo addition is equivalent to standard addition in the exponent. Thus, $\exp(\iota r(\Theta\oplus\alpha))=\exp(\iota r(\Theta+\alpha))=\exp(\iota r\alpha)\exp(\iota r\Theta)$. Factoring out the constant, we have
\begin{align*}
    \psi_{\widetilde g_\alpha}(r|x) = \exp(\iota r\alpha)\mathbf{E}_{\Theta\sim \mathbf{P}_{\tr}(\cdot|x)}[\exp(\iota r\Theta)] = \exp(\iota r\alpha)\psi_{\widetilde g}(r|x).
\end{align*}
Substituting this into the truncated density equation, we obtain:
\begin{align*}
    \mathcal{P}_{\widetilde g_\alpha}(\theta|x)&=(2\pi)^{-1}\sum_{r=-R}^{R}{\psi_{\widetilde g_\alpha}(r|x)\exp(-\iota r\theta)}\\
    &= (2\pi)^{-1}\sum_{r=-R}^{R}{\psi_{\widetilde g}(r|x)\exp(\iota r\alpha)\exp(-\iota r\theta)}\\
    &= (2\pi)^{-1}\sum_{r=-R}^{R}{\psi_{\widetilde g}(r|x)\exp(-\iota r(\theta - \alpha))}.
\end{align*}
Since the exponential function evaluates evaluating the phase modulo $2\pi$, we can replace $(\theta - \alpha)$ with $(\theta \ominus \alpha)$, resulting in $\mathcal{P}_{\widetilde g}(\theta\ominus\alpha|x)$ in the RHS. This completes the proof.
\end{subproof}


\begin{subproof}[Proof of (E)] 
For the sake of simplicity, assume the conditional mode is unique. Using part (D), we have:
\begin{align*}
    \bar{\upsilon}_{\widetilde g_\alpha}(x) = \argmax_{\theta \in [0, 2\pi)} \left\{\mathcal{P}_{\widetilde g_\alpha}(\theta|x)\right\} = \argmax_{\theta \in [0, 2\pi)} \left\{\mathcal{P}_{\widetilde g}(\theta \ominus \alpha|x)\right\}.
\end{align*}
Let $\theta' = \theta \ominus \alpha$. Because the mapping $\theta \mapsto \theta \ominus \alpha$ is a bijection on $[0, 2\pi)$, evaluating the maximum over $\theta$ is equivalent to evaluating the maximum over $\theta'$ and applying the inverse transformation $\theta = \theta' \oplus \alpha$. Therefore:
\begin{align*}
    \argmax_{\theta \in [0, 2\pi)} \left\{\mathcal{P}_{\widetilde g}(\theta \ominus \alpha|x)\right\} &= \left( \argmax_{\theta' \in [0, 2\pi)} \left\{\mathcal{P}_{\widetilde g}(\theta'|x)\right\} \right) \oplus \alpha \\
    &= \bar{\upsilon}_{\widetilde g}(x) \oplus \alpha.
\end{align*}
This establishes that $\bar{\upsilon}_{\widetilde g_\alpha}(x) = \bar{\upsilon}_{\widetilde g}(x) \oplus \alpha$. The proof for the conditional antimode, $\underline{\upsilon}_{\widetilde g_\alpha}(x) = \underline{\upsilon}_{\widetilde g}(x) \oplus \alpha$, follows identically by replacing $\argmax$ with $\argmin$. This completes the proof.
\end{subproof}

\end{proof}

\subsection{Proof of \Cref{thm:central_sigma_field_exists}}
\begin{proof}
    Let $\Pi_y=\mathbf{P}_{X|Y}(\cdot|y)$ and $\overset{\circ}{\mathbf{P}}=\{\Pi_y|y\in \mathbb{S}^1\}$. Since $\overset{\circ}{\mathbf{P}}$ is dominated by a $\sigma$-finite measure, it contains a countable subset $\overset{\circ}{\mathbf{Q}}=\{Q_k|k=1,2,\ldots\}$ such that $\overset{\circ}{\mathbf{P}}\equiv \overset{\circ}{\mathbf{Q}}$ (we use $\equiv$ to denote that one measure dominates the other). Let $\{c_k\}_{k=1,2,\ldots}$ be a sequence of positive numbers adding up to $1$ and let $\overset{\circ}{\mathbf{Q}_0}=\sum_{k=1}^{\infty}{c_kQ_k}$. Then  $\overset{\circ}{\mathbf{Q}_0}$ is a probability measure on $\mathcal{X}$ such that $\{\overset{\circ}{\mathbf{Q}_0}\}\equiv \overset{\circ}{\mathbf{Q}}\equiv\overset{\circ}{\mathbf{P}}$. Let $\pi_y=\frac{d\Pi_y}{d\overset{\circ}{\mathbf{Q}_0}}$ and let $\mathfrak{G}$ be a sub-sigma field of $\sigma(X)$. We claim that $Y\independent X|\mathfrak{G}$ if and only if $\pi_y$ is essentially measurable with respect to $\mathfrak{G}$ for all $y\in \mathbb{S}^1$ modulo $\overset{\circ}{\mathbf{Q}_0}$. To show the necessary part, let $B\in \sigma(X)$. Then, 
    \begin{align}
       \notag \mathbf{E}_{\overset{\circ}{\mathbf{Q}}_0}\left[\pi_y(X)\mathbf{1}_{\{X\in B\}}\right]&= \mathbf{E}_{\Pi_y}[\mathbf{1}_{\{X\in B\}}]\\ \notag
        &= \mathbf{E}_{\Pi_y}\left[\mathbf{E}_{\Pi_y}\left(\mathbf{1}_{\{X\in B\}}|\mathfrak{G}\right)\right]\\ \label{eq:existence_eq1}
        &= \mathbf{E}_{\overset{\circ}{\mathbf{Q}}_0}\left[\pi_y(X)\mathbf{E}_{\Pi_y}\left(\mathbf{1}_{\{X\in B\}}|\mathfrak{G}\right)\right].
    \end{align} 
    Now, since $Y\independent X|\mathfrak{G}$, $\Pi_y(B|\mathfrak{G})$ is same for all $y\in \mathbb{S}^1$. Hence $\Pi_y(B|\mathfrak{G})=Q_k(B|\mathfrak{G})$ for all $k$, implying that $\Pi_y(B|\mathfrak{G})=\overset{\circ}{\mathbf{Q}_0}(B|\mathfrak{G})$. So, we can write the RHS of \eqref{eq:existence_eq1} as 
    \begin{align*}
        \mathbf{E}_{\overset{\circ}{\mathbf{Q}}_0}\left[\pi_y(X)  \mathbf{E}_{\overset{\circ}{\mathbf{Q}}_0}\left(\mathbf{1}_{\{X\in B\}}|\mathfrak{G}\right)\right]=   \mathbf{E}_{\overset{\circ}{\mathbf{Q}}_0}\left[\mathbf{1}_{\{X\in B\}}  \mathbf{E}_{\overset{\circ}{\mathbf{Q}}_0}\left(\pi_y(X)|\mathfrak{G}\right)\right].
    \end{align*}
    Using this in \eqref{eq:existence_eq1}, we can write
    \begin{align}\label{eq:existence_eq2}
         \mathbf{E}_{\overset{\circ}{\mathbf{Q}}_0}\left[\pi_y(X)\mathbf{1}_{\{X\in B\}}\right]=\mathbf{E}_{\overset{\circ}{\mathbf{Q}}_0}\left[\mathbf{1}_{\{X\in B\}}  \mathbf{E}_{\overset{\circ}{\mathbf{Q}}_0}\left(\pi_y(X)|\mathfrak{G}\right)\right],
    \end{align}
    which implies $\pi_y(X)=\mathbf{E}_{\overset{\circ}{\mathbf{Q}}_0}\left[\pi_y(X)|\mathfrak{G}\right]$-a.s. $\overset{\circ}{\mathbf{Q}_0}$, since \eqref{eq:existence_eq2} holds for all $B\in \sigma(X)$. Next, we prove the sufficiency part. Assume that $\pi_y$ is essentially measurable with respect to $\mathfrak{G}$ for all $y\in \mathbb{S}^1$ modulo $\overset{\circ}{\mathbf{Q}_0}$, which is equivalent to 
    \begin{align}
        \label{eq:existence_eq3}
        \mathbf{E}_{\overset{\circ}{\mathbf{Q}_0}}\left[\pi_y(X)|\mathfrak{G}\right]=\pi_y(X).
    \end{align}
    Then for any $A\in \mathfrak{G}$, we can write 
    \begin{align*}
        \mathbf{E}_{\pi_y}\left[\mathbf{1}_{\{X\in A\}}\mathbf{E}_{\overset{\circ}{\mathbf{Q}_0}}\left(\mathbf{1}_{\{X\in B\}}|\mathfrak{G}\right)\right]& = \mathbf{E}_{\overset{\circ}{\mathbf{Q}}_0}\left[\mathbf{1}_{\{X\in A\}}\pi_y(X)\mathbf{E}_{\overset{\circ}{\mathbf{Q}}_0}\left(\mathbf{1}_{\{X\in B\}}|\mathfrak{G}\right)\right]\\
        &= \mathbf{E}_{\overset{\circ}{\mathbf{Q}_0}}\left[\mathbf{1}_{\{X\in A\}}\mathbf{1}_{\{X\in B\}}\mathbf{E}_{\overset{\circ}{\mathbf{Q}_0}}\left(\pi_y(X)|\mathfrak{G}\right)\right]\\
        &= \mathbf{E}_{\overset{\circ}{\mathbf{Q}_0}}\left[\mathbf{1}_{\{X\in A\}}\mathbf{1}_{\{X\in B\}}\pi_y(X)\right]\text{ (due to \eqref{eq:existence_eq3})}\\
        &= \mathbf{E}_{\pi_y}[\mathbf{1}_{\{X\in A\}}\mathbf{1}_{\{X\in B\}}]=\Pi_y(X\in A \cap B).
    \end{align*}
    Therefore, we can conclude that $\mathbf{E}_{\overset{\circ}{\mathbf{Q}}_0}\left[\mathbf{1}_{\{X\in B\}}|\mathfrak{G}\right]=\overset{\circ}{\mathbf{Q}_0}(B|\mathfrak{G})$ is the conditional probability $\Pi_y(B|\mathfrak{G})$, which means that $\Pi_y(B|\mathfrak{G})$ does not depend on $y$, hence $Y\independent X|\mathfrak{G}$. Now let $\mathfrak{G}^*$ be the intersection of all SDR sigma fields $\mathfrak{G}$. Then $\mathfrak{G}^*$ is itself a sigma field. Moreover, since $\pi_y$ is measurable with respect to all SDR sigma fields for all $y\in \mathbb{S}^1$, it should also be measurable with respect to $\mathfrak{G}^*$ for all $y\in \mathbb{S}^1$. Hence, $\mathfrak{G}^*$ is itself an SDR sigma field, which implies that it is also the smallest SDR sigma field. If $\mathfrak{G}_0$ is another smallest SDR sigma field then we know that $\mathfrak{G}^*\subseteq \mathfrak{G}_0$ and $\mathfrak{G}_0\subseteq\mathfrak{G}^*$. Therefore, $\mathfrak{G}^*$ is unique. 
\end{proof} 

\subsection{Proof of \Cref{thm:central_subspace_finding_engression}}
\begin{proof}
   Since $\varepsilon$ is independent of $X$, under correct specification, $\mathbf{P}_{Y|X}$ depends on $X$ only through $\beta_0^\top X$. That is, for any Borel measurable set $B\subset [0,2\pi)$, $\mathbf{P}(Y\in B|X=x)=\mathbf{P}(Y\in B|\beta_0^\top X=\beta_0^\top x)$, which means that  we should have $Y\independent X|\sigma(\beta_0^\top X)$. This means that $\sigma(\beta_0^\top X)$ is always an SDR sigma field and therefore, we must have $\mathfrak{G}^*_{Y|X}\subseteq \sigma(\beta_0^\top X)$. Next, we need to show that $\sigma(\beta_0^\top X)\subseteq \mathfrak{G}^*_{Y|X}$. Suppose for the sake of contradiction that $\sigma(\beta_0^\top X)\nsubseteq \mathfrak{G}^*_{Y|X}$. We can write $\mathfrak{G}^*_{Y|X}=\sigma(R(X))$ for some measurable function $R$ such that $\sigma(\beta_0^\top X)\nsubseteq \sigma(R(X))$. Now since $\sigma(R(X))$ is an SDR sigma field, the conditional distribution $\mathbf{P}_{Y|X}$ depends on $X$ only through $R(X)$, i.e.,
\begin{align*} 
    \mathbf{P}_{Y|X=x}=\mathbf{P}_{Y|R(X)=R(x)},\quad \mathbf{P}_X\text{-a.s.}.
\end{align*}
Therefore, we can write
\begin{align*}
    R(x_1)=R(x_2)\implies \mathbf{P}_{Y|R(X)=R(x_1)}=\mathbf{P}_{Y|R(X)=R(x_2)}&\implies \mathbf{P}_{Y|X=x_1}=\mathbf{P}_{Y|X=x_2}\\ 
    &\implies \beta_0^\top x_1=\beta_0^\top x_2,
\end{align*}
where the last step holds because the map $t\mapsto \mathbf{P}_{Y|\beta_0^\top X=t}$ is injective (note that injectivity of $f_0$ guarantees the injectivity of the induced measure $\mathbf{P}_{Y|\cdot}$). The above chain means that $\beta_0^\top X$ is a function of $R(X)$ almost surely. Hence, using Doob-Dynkin's lemma, $\beta_0^\top X$ is measurable with respect to $\sigma(R(X))$, i.e., $\sigma(\beta_0^\top X)\subseteq \sigma(R(X))$, which is a contradiction. Thus, we must have $\sigma(\beta_0^\top X)=\mathfrak{G}^*_{Y|X}$, which completes the proof. 
\end{proof}

\subsection{Proof of \Cref{thm:central_subspace_recovery}}
\begin{proof}
 By \Cref{thm:population_guarantee_engression}, under correct specification, the population minimizer satisfies $\widetilde g\left(\widetilde\beta^\top x,\widetilde h(\varepsilon)\right)\sim \mathbf{P}_{Y|x}$ for all $x\in \mathcal{X}$. Because the conditional distribution of $Y$ given $X=x$ depends on $x$ strictly through the linear combination $\widetilde\beta^\top x$, it follows that $Y \independent X \mid \widetilde\beta^\top X$, meaning that $\sigma(\widetilde\beta^\top X)$ is an SDR sigma field. From \Cref{thm:central_subspace_finding_engression}, the unique central subspace satisfies $\mathfrak{G}^*_{Y|X}=\sigma(\beta_0^\top X)$. By the minimality of the central subspace, any other sub-sigma field that satisfies the SDR condition must contain it. Therefore, we must have $\sigma\left(\beta_0^\top X\right)\subseteq \sigma\left(\widetilde \beta^\top X\right)$. By the Doob-Dynkin factorization lemma, this inclusion implies the existence of a Borel measurable function $\psi:\mathbb{R}\rightarrow\mathbb{R}$ such that $\beta_0^\top X=\psi\left(\widetilde\beta^\top X\right)$ almost surely.  Suppose for the sake of contradiction that $\beta_0$ and $\widetilde\beta$ are linearly independent vectors in $\mathbb{R}^d$. Define the linear transformation matrix $B=\left[\widetilde\beta,\beta_0\right]^\top$, which by assumption has full row rank $2$. Because $X$ is absolutely continuous with respect to the Lebesgue measure on $\mathbb{R}^d$, its projection onto this full-rank two-dimensional subspace, given by the random vector $Z=\left(\widetilde\beta^\top X,\beta_0^\top X\right)^\top$, must be absolutely continuous with respect to the two-dimensional Lebesgue measure $\lambda_2$ on $\mathbb{R}^2$. Consequently, for any Borel set $A\subset\mathbb{R}^2$ with $\lambda_2(A)=0$, we must have $\mathbf{P}(Z\in A)=0$. 
 Define $\Psi=\{(u,v)\in \mathbb{R}^2 \mid v=\psi(u)\}$ as the graph of the function $\psi$. Because $\psi$ is Borel measurable, Fubini's Theorem guarantees that its graph has a 2D Lebesgue measure of zero ($\lambda_2(\Psi)=0$). Utilizing the absolute continuity of $Z$, this implies $\mathbf{P}\left[\left(\widetilde\beta^\top X, \beta_0^\top X\right)^\top \in \Psi \right] = 0$. However, our previous deduction states that $\beta_0^\top X=\psi\left(\widetilde\beta^\top X\right)$ almost surely, meaning the random vector resides on the graph with probability one: $\mathbf{P}\left[\left(\widetilde\beta^\top X, \beta_0^\top X\right)^\top \in \Psi \right] = 1$. This is a contradiction. We conclude that $\beta_0$ and $\widetilde\beta$ cannot be linearly independent. Since neither vector is the zero vector (assuming non-trivial dimension reduction), they must be collinear. Thus, we can write $\widetilde\beta=c\beta_0$ for some non-zero scalar $c$. This implies that $\sigma(\widetilde\beta^\top X) = \sigma(\beta_0^\top X)$. Therefore, $\mathfrak{G}^*_{Y|X}=\sigma(\widetilde\beta^\top X)$, completing the proof. 
\end{proof}

\subsection{Proof of \Cref{thm:extrapolability_criteria}}
\begin{proof}
    We illustrate the proof for part (B); the proof for part (A) follows identically by dropping the additive noise $h(\varepsilon)$ from the argument of $g$. Fix $\varepsilon\in[0,1]$, and let $M_1, M_2 \in \mathbf{M}_{\text{pre}}^{(2)}$ be two models parameterized by $(g_1, \beta_1, h_1)$ and $(g_2, \beta_2, h_2)$ that induce the identical conditional distribution on the training support:
    \begin{align}
        \label{eq:1_th6}
        g_1\left(\beta_1^\top x+h_1(\varepsilon)\right)\overset{d}{=}g_2\left(\beta_2^\top x+h_2(\varepsilon)\right),\quad \forall x\in \mathcal{X}.
    \end{align}
    By the identifiability guarantee in \Cref{thm:identifiability_all}(D), it follows that $\beta_1 = \beta_2 = \beta$, $h_1=h_2=h$ on $[0,1]$, and $g_1(\beta^\top x+h(\varepsilon))=g_2(\beta^\top x+h(\varepsilon))$ almost surely for $x \in \mathcal{X}$. 
    
    Because $\varepsilon\sim \mathcal{U}(0,1)$ and $h$ is continuous and strictly increasing (\Cref{assmp:h_related}), the image $h([0,1])$ is a closed interval $[a,b]$ for some $a<b$. Fix any interior point $x_0\in \mathcal{X}$. We can write $g_1(\beta^\top x_0+t)=g_2(\beta^\top x_0+t)$ for all $t\in [a,b]$. This implies that $g_1$ and $g_2$ perfectly agree on the closed interval $[\beta^\top x_0+a, \beta^\top x_0+b]$. Because a closed interval contains infinitely many accumulation points, and both $g_1$ and $g_2$ are real analytic on the connected open set $\mathcal{I}$, the identity theorem for real analytic functions \citep{krantz2002primer} dictates that $g_1=g_2$ everywhere on $\mathcal{I}$.
    
    Next, we establish the topological boundary for extrapolation. Define the projected parameter space for this specific model as $\mathfrak{T}_{pre} = \{\beta^\top x+h(\varepsilon) \mid x\in \mathcal{X}, \varepsilon \in [0,1]\}$. Because $\mathcal{X}$ is compact, $h$ is continuous, and the projection is linear, $\mathfrak{T}_{pre}$ is a compact subset of $\mathbb{R}$ (see \Cref{lemma:compactness_condition}). By assumption, $\mathfrak{T}_{pre}\subset \mathcal{I}$. Define the set distance to the boundary of the analytic domain:
    \begin{align*}
        \Delta = \mathrm{d}_s\left(\mathfrak{T}_{pre}, \mathbb{R}\setminus\mathcal{I}\right) = \inf_{s\in \mathfrak{T}_{pre}, u\in \mathbb{R}\setminus\mathcal{I}} \abs{s-u}.
    \end{align*}
    Because $\mathcal{I}$ is open, its complement $\mathbb{R}\setminus\mathcal{I}$ is closed. The distance between a compact set ($\mathfrak{T}_{pre}$) and a disjoint closed set ($\mathbb{R}\setminus\mathcal{I}$) is strictly positive. Thus, $\Delta > 0$. 
    
    Now, let $x^*$ be an extrapolation point satisfying $\inf_{x\in \mathcal{X}} \norm{x-x^*}_2 \le \delta$ for some $\delta > 0$. Let $x_c \in \mathcal{X}$ be the closest point to $x^*$ in the support. By the Cauchy-Schwarz inequality and the constraint $\norm{\beta} = 1$ (\Cref{assmp:B_related}), the distance in the projected scalar space is bounded by $\delta$:
    \begin{align*}
        \abs{\left(\beta^\top x^*+h(\varepsilon)\right)-\left(\beta^\top x_c+h(\varepsilon)\right)} = \abs{\beta^\top(x^* - x_c)} \le \norm{\beta} \norm{x^* - x_c} \le \delta.
    \end{align*}
    Therefore, the projected point $\beta^\top x^*+h(\varepsilon)$ lies within a $\delta$-neighborhood of $\mathfrak{T}_{pre}$. By choosing $\delta_0 < \Delta$, any $\delta \le \delta_0$ guarantees that $\beta^\top x^*+h(\varepsilon)$ strictly remains inside the analytic domain $\mathcal{I}$ for all $\varepsilon \in [0,1]$. Because we have already established that $g_1=g_2$ everywhere on $\mathcal{I}$, it follows that:
    \begin{align*}
        g_1(\beta^\top x^*+h(\varepsilon))=g_2(\beta^\top x^*+h(\varepsilon)),\quad \forall \varepsilon\in [0,1].
    \end{align*}
    This establishes that $\mathrm{D}(\mathbf{P}_1(y|x^*),\mathbf{P}_2(y|x^*))=0$. Because this holds for any $x^*$ within a $\delta$-neighborhood of $\mathcal{X}$, taking the supremum yields $U_{\mathbf{M}_{\text{pre}}^{(2)}}(\delta)=0$ for all $\delta \le \delta_0$. This completes the proof for the fully identifiable model classes. 

    \begin{subproof}[Proof for models under $2k\pi$-identifiability trap ($\mathbf{M}_{\text{pre}}^{(1)}$ and $\mathbf{M}_{\text{post}}^{(1)}$)]
    We detail the proof for $\mathbf{M}_{\text{pre}}^{(1)}$; the logic applies identically to $\mathbf{M}_{\text{post}}^{(1)}$ by modifying the position of the additive noise. 
Let $M_1, M_2 \in \mathbf{M}_{\text{pre}}^{(1)}$ be two models that induce the identical conditional distribution on the training support. By the $2k\pi$-identifiability guarantee (\Cref{thm:identifiability_all}(C)), the underlying parameters are not uniquely identified; instead, they satisfy $h_1 = h_2 = h$, $\beta_1 = \beta_2 = \beta$, and $g_1(\beta^\top x + h(\varepsilon)) = g_2(\beta^\top x + h(\varepsilon)) + 2k\pi$ for some integer $k \in \mathbb{Z}$, almost surely for $x \in \mathcal{X}$. Using the exact same topological arguments established previously, $g_1$ and $g_2$ must maintain a constant difference of $2k\pi$ over a closed interval in the projected space. Because $g_1$ and $g_2$ are real analytic on the connected open set $\mathcal{I}$, the function $f(z) = g_1(z) - g_2(z) - 2k\pi$ is also real analytic. Since $f(z) = 0$ on a closed interval, the identity theorem \citep{krantz2002primer} requires that $f(z) = 0$ everywhere on $\mathcal{I}$. Therefore, $g_1(z) = g_2(z) + 2k\pi$ across the entire analytic domain $\mathcal{I}$. Now, consider an extrapolation point $x^*$ such that $\beta^\top x^* + h(\varepsilon) \in \mathcal{I}$. The generated circular responses for the two models at this extrapolation point are given by:
    \begin{align*}
        Y_1^* &= g_1(\beta^\top x^* + h(\varepsilon)) \bmod 2\pi \\
        Y_2^* &= g_2(\beta^\top x^* + h(\varepsilon)) \bmod 2\pi
    \end{align*}
    Substituting the analytic continuation $g_1(z) = g_2(z) + 2k\pi$ into the first equation yields:
    \begin{align*}
        Y_1^* &= \left(g_2(\beta^\top x^* + h(\varepsilon)) + 2k\pi \right) \bmod 2\pi.
    \end{align*}
    By the modulo $2\pi$ arithmetic, the additive $2k\pi$ shift vanishes, and we get
    \begin{align*}
        Y_1^* &= g_2(\beta^\top x^* + h(\varepsilon)) \bmod 2\pi = Y_2^*.
    \end{align*}
    Because $Y_1^* = Y_2^*$ deterministically for all $\varepsilon \in [0,1]$, the conditional distributions match: $\mathbf{P}_1(y|x^*) = \mathbf{P}_2(y|x^*)$. Consequently, $\mathrm{D}(\mathbf{P}_1(y|x^*),\mathbf{P}_2(y|x^*))=0$. Taking the supremum over the $\delta$-neighborhood confirms $U_{\mathbf{M}_{\text{pre}}^{(1)}}(\delta)=0$, proving that the model is extrapolable despite being not fully identifiable.
\end{subproof}
\end{proof}

\begin{remark}
    Identifiability is not a strictly necessary condition for distributional extrapolability. For example, \cite{shen2025engression} note that the class of linear pre-ANMs is not identifiable, but distributionally extrapolable.
\end{remark}

\section{Baseline Models and Implementation} \label{appendix:Baseline_models}
In this section, we briefly describe the baseline models used for comparison along with their implementation strategies. As discussed in the main text, we do not consider Bayesian implementations for all models except vMQP. This is primarily due to two reasons: to avoid the excessive computational overhead of Bayesian training paradigms, and the non-availability of original open-source codes for most of the frameworks. The task-specific usage of each model for comparison in this study is mentioned in \Cref{table:baseline_models} in the main paper.

\begin{enumerate}
    \item \textbf{Circular Linear Regression (CLR):} 
    The projected multivariate linear model proposed by \cite{presnell1998projected} models a circular response variable by treating it as the radial projection of an unobserved, latent bivariate normal random vector onto the unit circle. The mean of this latent Euclidean vector is formulated as a linear combination of the input covariates, and the corresponding regression weights are efficiently estimated via an Expectation-Maximization (EM) algorithm. The code implementation is taken from a public repository\footnote{https://gist.github.com/ahwillia/2941fb64e8bfe999c66291257601dfb4}. Natively, the framework is designed to map purely linear (Euclidean) covariates to a directional response. However, to keep a fair comparison strategy, we embed the circular predictors (if any) trigonometrically into continuous sine and cosine components prior to incorporating them into the model's design matrix.

    \item \textbf{Kernel CLR:} 
    Nonparametric kernel regression for a circular response \citep{taylor2012non, Alonso_Pena_2025} extends standard local smoothing techniques to accommodate directional data. Because angular responses cannot be averaged arithmetically without artificially depending on the zero-direction, the circular response is first decomposed into its orthogonal sine and cosine components. The model then computes a locally weighted average for both components, where the assigned weights are determined by a kernel function that measures the proximity of the evaluation point to the training covariates. The final predicted mean direction is subsequently reconstructed by applying the two-argument arctangent function ($\mathrm{atan2}$) to the smoothed sine and cosine estimates. We implement this nonparametric framework to natively process both linear and circular covariates simultaneously. This is achieved by utilizing a multivariate product kernel. Specifically, the joint localized weight for any given data point is constructed by multiplying individual feature kernels: a standard Euclidean kernel (such as a Gaussian kernel governed by a bandwidth parameter $h$) is applied to the continuous linear features, while a directional kernel (such as a von Mises kernel governed by a concentration parameter $\kappa$) is applied to the circular features. This formulation allows the model to natively map mixed-domain inputs directly to a circular response without requiring artificial embedding of the circular predictors prior to fitting.

    \item \textbf{Skew-Circular Beta Regressor (SCBR):}
    The Skew-Circular Beta (SCB) regression model \citep{hassanzadeh2021smoothing} provides a parametric framework for predicting multimodal and asymmetric angular responses. The model governs the circular distribution through a global location parameter $\mu \in [0, 2\pi)$ and two global shape parameters $a, b > 0.5$ which control the concentration and multimodality. To capture complex, observation-specific dynamics, the model dynamically predicts a directional skewness parameter $\lambda_i \in [-1, 1]$ conditioned on the input covariates. This is achieved using a linear predictor $\eta_i = \mathbf{x}_i^\top \beta + \text{bias}$, which is subsequently squeezed into the valid mathematical boundaries via an arctangent link function, $\lambda_i = \frac{2}{\pi} \arctan(\eta_i)$. The SCB architecture aggregates features strictly through a standard Euclidean linear predictor, and does not natively process circular covariates. We feed the sin-cos embeddings of the circular features to the model for a consistent evaluation strategy. Although \cite{hassanzadeh2021smoothing} presents a Bayesian scheme to estimate the model parameters, we consider two distinct non-Bayesian training paradigms: Maximum Likelihood Estimation (MLE) and Mean Circular Prediction Error (MCPE) minimization, which the author uses as a model selection criterion. Under the MLE setting, the network minimizes the negative log-likelihood (NLL) of the SCB distribution, employing strict numerical clamping within the logarithmic terms to prevent gradient explosion when angular differences approach $-1$. Conversely, in the MCPE setting, the model operates as a geometric point-predictor by minimizing the cosine distance between the ground truth and the theoretical mean direction derived via the distribution's trigonometric moments.

    \item \textbf{LEP-mGvM:}
    The Lifted Expectation Propagation for the multivariate Generalised von Mises (LEP-mGvM; \cite{wu2019probabilistic}) model performs approximate Bayesian inference by mapping the $N$-dimensional circular target space into a relaxed $2N$-dimensional Euclidean space (representing the cosine and sine components) using Gaussian site approximations. While the theoretical LEP-mGvM framework dictates that the EP moment-matching step must compute the exact trigonometric moments of the GvM distribution (a process relying on unstable infinite series of modified Bessel functions), our implementation adopts a numerically robust simplification. Specifically, we utilize a first-order von Mises approximation by discarding the second harmonic parameter ($\kappa_2$) and employing exponentially scaled Bessel ratios, which maintains computational stability while preserving the strict annealing and damping schedules mandatory for EP convergence. Additionally, for out-of-sample predictions, the implemented model approximates the full $2N$ joint Gaussian posterior conditioning with a streamlined kernel projection of the learned site means. Regarding input compatibility, the theoretical Gaussian Process formulation natively ingests mixed linear and circular covariates through the construction of composite covariance functions. However, because our implementation relies exclusively on a standard Euclidean squared-exponential kernel, circular covariates must first be exogenously embedded into continuous orthogonal components (sine and cosine) prior to passing them to the model.

    \item \textbf{Multiple Circular Regression II (MCR2):}
    The Multiple Circular Regression II (MCR2) model \citep{jha2017multiple} is a geometric framework that predicts a circular response from multiple circular covariates by employing Möbius transformations. Theoretically, it linearly combines complex-valued circular predictors using convex mixing weights and rotation parameters, projects the aggregate onto the unit circle, applies a Möbius transformation, and performs a final global rotation to yield the predicted mean direction. In our PyTorch implementation, the convex weights are enforced via a softmax activation, and the first covariate's rotation is strictly anchored to zero to guarantee theoretical identifiability. To ensure stable gradient-based optimization, the complex arithmetic of the Möbius transformation is decoupled into explicit real and imaginary components, and the network is trained by minimizing the circular distance, an objective equivalent to maximizing the von Mises log-likelihood. The MCR2 architecture is rigidly formulated for angular inputs and does not natively process continuous linear covariates.

    \item \textbf{vMQP:}
    The von Mises Quasi-Process (vMQP) is a non-parametric Bayesian regression model specifically designed for circular data \citep{cohen2025bayesian}. It is formulated by taking a Gaussian process over two-dimensional Euclidean random functions and conditioning it on the unit circle. Unlike alternative Gaussian process-based adaptations for directional data that rely on wrapping or radial marginalization, the vMQP yields a simple, maximum-entropy density. Because the vMQP is not mathematically consistent under marginalization, it operates within a transductive learning framework where test locations are fixed during training, which motivates a fully Bayesian approach to parameter learning. To handle posterior inference efficiently, the model introduces a Stratonovich-like augmentation that linearizes the trigonometric dependencies, allowing for fast Markov Chain Monte Carlo (MCMC) sampling. We utilize the original Bayesian implementation of vMQP\footnote{https://github.com/Yarden231/vMQP} for comparison in the wind direction prediction task, where it is observed to incur the maximum training time. We thus omit it for simulation experiments and object pose estimation task.

\end{enumerate}

\section{Evaluation Metrics} \label{appendix:metrics}
In this section, we describe the metrics tailored for circular data used in model evaluation. Let $n$ denote the total number of observations, $\theta_i \in [0, 2\pi)$ be the ground-truth circular response for the $i$-th observation, and $\widehat{\theta}_i$ be the corresponding predicted mean direction. Let $\{\widehat{\theta}_{i,m}\}_{m=1}^M$ represent an ensemble of $M$ predicted angles for the $i$-th observation generated from the trained model. The mathematical formulations for the metrics are given as follows.

\begin{enumerate}
    \item \textbf{Mean Absolute Angular Deviation (MAAD):} MAAD computes the average magnitude of the shortest angular path separating the predictions from the ground truth. To handle phase wrapping at the $[0, 2\pi)$ boundary, the shortest angular distance $\mathrm{d}_a(\cdot, \cdot)$ is computed using the two-argument arctangent function:
    \begin{align} \label{eq:shortest_distance}
        \mathrm{d}_a\left(\theta_i, \widehat{\theta}_i\right) = \left| \mathrm{atan2}\left(\sin\left(\theta_i - \widehat{\theta}_i\right), \cos\left(\theta_i - \widehat{\theta}_i)\right) \right)\right|.
    \end{align}
    The MAAD is then obtained by averaging this absolute wrapped difference over all samples \citep{prokudin2018deep}:
    $$\text{MAAD} = \frac{1}{n} \sum_{i=1}^n \mathrm{d}_a\left(\theta_i, \widehat{\theta}_i\right).$$
    We report this metric in degrees for enhanced interpretability.

    \item \textbf{Circular Mean Directional Error (CMDE):} CMDE evaluates the alignment between the predicted and true angles by leveraging the cosine of the angular difference. This metric avoids the artificial discontinuities associated with linear distance metrics evaluated on a circle. It is bounded between $0$ (indicating perfect alignment) and $2$ (indicating perfectly opposite directions). The formulation is given by:
    $$\text{CMDE} = 1 - \frac{1}{n} \sum_{i=1}^n \cos\left(\theta_i - \widehat{\theta}_i\right).$$

    \item \textbf{Circular Continuous Ranked Probability Score (CRPS):} To evaluate the sharpness of the predictive distributions (or ensembles), we adapt the CRPS for circular topologies. Using the shortest angular distance function \eqref{eq:shortest_distance}, the empirical circular CRPS for a single observation $i$ approximated by an ensemble of size $M$ is defined as \citep{gneiting2007strictly}:
    $$\text{CRPS}_i = \frac{1}{M} \sum_{m=1}^M \mathrm{d}_a\left(\widehat{\theta}_{i,m}, \theta_i\right) - \frac{1}{2M(M-1)} \sum_{m=1}^M \sum_{m'=1, m'\neq m}^M \mathrm{d}_a\left(\widehat{\theta}_{i,m}, \widehat{\theta}_{i,m'}\right).$$
    The first term quantifies the average angular distance between the ensemble members and the true observation (rewarding accuracy), while the second term quantifies the expected angular distance among the ensemble members themselves. The overall Mean Circular CRPS is simply the average across all observations: $$\text{Mean Circular CRPS} = \frac{1}{n} \sum_{i=1}^n \text{CRPS}_i.$$

    \item \textbf{Median Error (MedErr):} The MedErr metric \citep{prokudin2018deep} provides a robust measure of central tendency for the prediction errors, mitigating the influence of extreme outliers. It is defined as the median of the shortest angular distances across all $n$ samples. For reporting purposes, the final value is converted from radians to degrees: $$\text{MedErr} = \text{median}_{i \in \{1, \dots, N\}} \left(\mathrm{d}_a\left(\theta_i, \widehat{\theta}_i\right)\right),$$
    where $\mathrm{d}_a$ denotes the shortest angular distance given in \eqref{eq:shortest_distance}.

    \item $\text{\textbf{Accuracy}}_{\mathbf{\widetilde{\theta}}}$: The `Threshold Accuracy' metric quantifies the proportion of predictions that fall within an acceptable margin of error, defined by a predetermined threshold $\widetilde{\theta}$. It is formulated as the mean of an indicator function over the dataset:
    $$\text{Acc}_{\widetilde{\theta}} = \frac{1}{N} \sum_{i=1}^{N} \mathbb{1}\left(\mathrm{d}_a\left(\theta_i, \widehat{\theta}_i\right) \leq \widetilde{\theta}\right),$$
    where $\mathbb{1}(\cdot)$ represents the indicator function, evaluating to 1 if the condition $\mathrm{d}_a \leq \widetilde{\theta}$ is satisfied, and 0 otherwise. In our standard evaluation protocol, we set the threshold to $\widetilde{\theta} = \pi/6$ (30 degrees), following prior works \citep{prokudin2018deep, tulsiani2015viewpoints}.
\end{enumerate}

\section{Software Demonstration}
\label{sec:software_demonstration}
We provide a highly modular open-source software implementation for ANGLE utilizing a PyTorch backend in Python. The source code is publicly available\footnote{\url{https://github.com/anglepy}} and we offer the software through our Python package \texttt{anglepy}\footnote{Documentation: \url{https://anglepy.readthedocs.io}}. The primary entry point for model training is the \texttt{ANGLE} wrapper function, which instantiates and trains a \texttt{CircularEngressor} object. As mentioned in \Cref{sec:ablation_study}, our implementation provides extensive flexibility regarding the network architecture, the projection of the output space, the formulation of the GCES loss, and the underlying noise injection mechanism. The input data is expected to be array-like in the form of PyTorch tensors, and training and inference can be performed in batches on both GPU and CPU.

\subsection{Architectural Configurations and Sufficient Dimension Reduction}
The model supports standard fully connected dense networks, which can be augmented with batch normalization (\texttt{add\_bn=True}) and residual skip connections (\texttt{resblock=True}). Furthermore, the software provides native support for sufficient dimension reduction (SDR) within the neural network framework. Setting \texttt{sdr=True} restricts the first layer to act as a linear projection. Setting \texttt{reduced\_dim=1} trains a single-index model, whereas \texttt{reduced\_dim=b} trains a multiple-index model.  Upon convergence, the fitted projection matrix $\beta \in \mathbb{R}^{d \times b}$ can be extracted via the \texttt{beta\_proj} attribute. This allows the user to explicitly compute the SDR subspace $\beta^\top X$ for downstream analysis.

\subsection{Output Heads}
To map unconstrained neural network outputs to the unit circle $\mathbb{S}^1 \cong [0, 2\pi)$, the software accommodates three distinct output modes, governed by the \texttt{unbounded} and \texttt{circular\_projection} arguments:
\begin{itemize}
    \item \textbf{Biternion head (\texttt{unbounded=False, circular\_projection='atan2'}):} The network's final layer outputs a two-dimensional vector $(u, v) \in \mathbb{R}^2$. This vector is then transformed to an angular coordinate in $[0, 2\pi)$ using the two-argument arctangent function, $\mathrm{atan2}(v, u)$.
    \item \textbf{Scaled sigmoid (\texttt{unbounded=False, circular\_projection='sigmoid'}):} The network applies a scaled sigmoid activation at the final layer to bound the outputs strictly within $[0, 2\pi)$.
    \item \textbf{Unbounded modulo (\texttt{unbounded=True}):} The network predicts an unconstrained scalar value $y \in \mathbb{R}$. Before loss computation and during inference, the output is mapped to the circle via the modulo operation $y \pmod{2\pi}$.
\end{itemize}

\subsection{Distance Metrics in GCES Loss}
The core optimization objective relies on strictly proper scoring rules, specifically the GCES, which requires the specification of a distance metric on the circle. The \texttt{dist\_method} argument allows toggling between two fundamental distances. \texttt{dist\_method='chordal'} utilizes the Euclidean distance between points embedded on the unit circle. \texttt{dist\_method='geodesic'} computes the shortest path along the manifold. When using the geodesic distance, the user can supply a custom (negative) kernel function via \texttt{kernel\_func}. The \texttt{kernels.py} module implements the strictly positive definite kernels on the circle proposed by \citet[Table 1]{gneiting2013strictly}.

\subsection{Noise Injection}
As a generative model, ANGLE relies on random noise to model conditional distributions. The framework allows precise control over this stochasticity. The base distribution of the noise is set via \texttt{noise\_dist} (accepting \texttt{'gaussian'} or \texttt{'uniform'}), with its scale modulated by the \texttt{noise\_std} parameter. The \texttt{noise\_all\_layer} boolean dictates the structural causal assumption. Setting \texttt{noise\_all\_layer=True} implements a pre-ANM framework by injecting noise into the early layers or all layers. Conversely, setting it to \texttt{False} strictly injects noise only at the final hidden layer, yielding a post-ANM formulation.

\subsection{Usage Example: Training and Inference}
The following Python code snippet demonstrates how to instantiate a realistic testing scenario, train the model using a subset of the aforementioned features (e.g., SDR, biternion projection, and chordal distance), and perform both point prediction and conditional distribution estimation via sampling.
\clearpage
\begin{lstlisting}[language=Python, caption={Demonstration of training and inference using the \texttt{ANGLE} API.}, label={lst:engression_code}]
import torch
from anglepy import ANGLE

# Assume X_train (N x d), y_train (N x 1), X_test (M x d) are torch.Tensors
device = torch.device("cuda" if torch.cuda.is_available() else "cpu")

# Train the ANGLE Model
model = ANGLE(
    x=X_train, 
    y=y_train, 
    num_layer=3, 
    hidden_dim=100, 
    noise_dim=64,
    noise_std=1.0,
    noise_dist='gaussian',
    noise_all_layer=True,          # Pre-ANM noise injection
    add_bn=True, 
    resblock=False, 
    sdr=True,                      # Enable SDR
    reduced_dim=1,                 # Single-index model
    unbounded=False,
    circular_projection='atan2',   # Biternion output head
    dist_method='chordal',         # Chordal distance for GCES loss
    gamma=1.0,                     # Energy score power parameter
    lr=0.05, 
    num_epochs=500, 
    standardize=True, 
    device=device,
    verbose=False
)

# Extract the SDR projection matrix (d x 1)
beta_matrix = model.beta_proj

# Inference: Point Predictions and Probabilistic Sampling
# Point estimates (e.g., conditional circular mean, median, or quantiles from 100 samples)
y_pred = model.predict(X_test, target='mean', sample_size=100)

# Draw 100 conditional samples per test observation
# Returns tensor of shape (M, 100) capturing the conditional distribution
y_pred_ensemble = model.sample(X_test, sample_size=100)
\end{lstlisting}

\section{Literature Review on Estimation of Circular Conditional Functionals} \label{appendix:review_point_prediction}
\subsection{Conditional Mean Function}
The mean regression task is the most explored in the literature for circular response. One of the earliest approaches involve modeling the circular response using a parametric link function \citep{fisher1992regression}, where the conditional distribution of the circular response is assumed to be von Mises distribution and the conditional mean function is related to the covariate through a parametric link. A relatively more popular approach was proposed by \cite{presnell1998projected}, which treats the circular response as a projection of unobserved response from a multivariate linear model onto $\mathbb{S}^1$. Different parametrization and estimation of such projected normal distributions can be found in \cite{hernandez2017general, zou2025dirichlet} among others. In a parallel approach, \cite{lund2002tree} devised the regression function through a tree-based approach, and it was utilized by \cite{lang2020circular} for wind direction forecasting. Another very common method of circular regression was proposed by \cite{kato2008circular}, where the regression curve is expressed as a form of M\"{o}bius circle transformation. Later, the approach was further extended to various circular data applications by works like \cite{jha2018circular,biswas2025semi}. Recently, \cite{fernandez2024regression} utilized a non-negative trigonometric sums (NNTS) model for circular data regression where the idea is to map the circular response to a vector of its first few trigonometric moments, and then exploit the hypersphere geometry of this space to convert the problem to a linear regression problem on a real line. Several other Bayesian approaches can be found in \cite{nur2024bayesian,ye2026bayesian}, and some machine learning-based modeling of the mean regression can be found in \cite{laha2022angular,bruns2024single}. A comprehensive review of parametric models for circular variables can be found in \cite{kim2016regressions,mohammad2021review,jha2022regression}. Nonparametric smoothing based regression approaches have also gained much popularity in estimating the conditional mean function in the mean time. In \cite{di2013non}, the regression function is defined as the minimizer of the expected cosine distance between the observed response and the fitted value. Under this formulation, the regression function can be expressed as the arctangent of the ratio of two conditional trigonometric moments, which are subsequently estimated through local averaging of the sine and cosine transforms of the observed responses. Subsequently, \cite{di2014nonparametric} extended this framework using local polynomial fitting techniques for spherical data. More generally, a common nonparametric strategy in the circular data literature is to model the conditional sine and cosine components separately via smoothing procedures and then combine the resulting estimates to recover the conditional direction; see, for example, \cite{nguyen2023adaptive,meilan2024nonparametric,woolsey2025nonparametric}. Several general additive and spline based models are also available in the literature, see \cite{mcmillan2013two,hassanzadeh2021smoothing} among others. In more recent contributions, \cite{gottard2026quasi} proposed modeling the circular response through the quasi-likelihood score instead of assuming the full conditional distribution, and \cite{francisco2026analyzing} utilized the Nadaraya Watson local constant estimator for a setup with circular response and a mix of continuous and categorical predictors. A comprehensive review of nonparametric regression for circular responses can be found in \cite{alonso2025review}.

\subsection{Conditional Mode and Concentration Functions} Most proposed circular regression models are primarily concerned with modeling the conditional mean direction, with \cite{fisher1992regression} being a notable exception in additionally allowing the conditional dispersion to depend on the covariates. Likewise, comparatively little attention has been devoted to estimating the conditional modal direction, despite the fact that the mode often provides a more meaningful notion of location in multimodal or highly diffuse settings, where the mean direction may correspond to a direction that is rarely observed in practice. Applications where the estimation of the conditional mode and concentration are particularly relevant include modeling the flight orientation of nocturnal migratory songbirds as a function of wind direction and flight altitude \citep{sjoberg2015nocturnal}, modeling wind direction at noon using early morning measurements \citep{kato2010family}, modeling the onset date of acute primary angle-closure glaucoma as a function of patient age \citep{gao2006application}, and studying the timing of tropical tree growth relative to annual climatic cycles \citep{hogan2019drought}. Motivated by such applications, \cite{alonso2023analyzing,perez2026nonparametric} developed nonparametric estimators for the conditional modal direction. More recently, \cite{ameijeiras2026semiparametric} proposed semiparametric modeling of the conditional concentration and modal direction through a flexible unimodal family of circular densities introduced by \cite{ameijeiras2022family}.

\subsection{CDF, Quantiles, and Density Functions} Estimating cumulative distribution functions and quantiles for circular data presents several challenges that do not arise in the Euclidean setting. The fundamental difficulty stems from the periodic geometry of the circle: unlike the classical CDF on the real line, a circular CDF requires an arbitrary choice of origin or lower integration limit, and this arbitrariness propagates to downstream tasks such as quantile estimation. Consequently, circular quantiles must be defined in a manner that is invariant to the choice of origin and applicable across general circular distributions. Early contributions in this area were made by \cite{di2012smooth}, who introduced nonparametric estimation of the circular CDF and proposed quantile estimation via inversion of the estimated CDF and through a Parzen-type smoothing approach. Later, \cite{di2016nonparametric} proposed circular quantile regression using either inversion of a double-kernel CDF estimator or minimization of a smooth circular check loss analogous to the pinball loss for Euclidean quantile regression \citep{steinwart2011estimating}. In a related work, \cite{di2016note} studied nonparametric estimation of the conditional circular density. More recently, \cite{klar2024nonparametric} developed estimators of circular distribution and density functions using Fejér polynomials and additionally established validity under error-in-covariate settings. Alternative notions of circular quantiles have also been proposed through angular depth \citep{ley2014new,hauch2024quantiles} and optimal transport formulations \citep{hallin2024nonparametric,bercu2024regularized}.

\newpage
\section{Tables and Figures} \label{appendix:tables_figures}

{\footnotesize 
\setlength{\tabcolsep}{6pt} 
\begin{longtable}[]{@{}ccccc@{}}
\caption{{\small Parameter configurations for the synthetic circular data generation mechanisms. The table details the number of linear ($p_L$) and circular ($p_C$) covariates, along with their corresponding true regression coefficient vectors ($\beta$), across all 16 experimental settings.}}\label{table:simulation_parameter_settings}\tabularnewline
\toprule 
Approach & Setting & \makecell{\# Linear\\covariates} & \makecell{\# Circular\\covariates} & Parameters \\
\midrule 
\endhead
\makecell{1: Arctangent link,\\linear relation} & 1.1 & 2 & 0 & \makecell{$\beta_1 = [-5.3, -7.9],$\\
            $\beta_2 = [-2.8, 4.1]$} \\ 
        ~ & 1.2 & 0 & 2 & \makecell{$\beta_1 = [13.5, -6.6, 8.1, 6.7],$\\
            $\beta_2 = [-5.2, 7.4, 6.9, -5.5]$} \\ 
        ~ & 1.3 & 2 & 2 & \makecell{$\beta_1 = [7.3, 6.6, 2.5, 1.1, 10.3, 5.9],$\\
            $\beta_2 = [6.7, -8.1, -3.4, 2.7, -6.5, 5.7]$} \\ 
        ~ & 1.4 & 75 & 25 & \makecell{$\beta_1, \beta_2 \in \mathbb{R}^{125}$ \\
          $\beta_{1,j}, \beta_{2,j} \sim \mathcal{U}(-10, 10)$} \\  \midrule
        \makecell{2: Arctangent link,\\nonlinear relation} & 2.1 & 2 & 0 & \makecell{$\beta_1 = [-5.3, -7.9],$\\
            $\beta_2 = [-2.8, 4.1]$} \\ 
        ~ & 2.2 & 0 & 2 & \makecell{$\beta_1 = [13.5, -6.6, 8.1, 6.7],$\\
            $\beta_2 = [-5.2, 7.4, 6.9, -5.5]$} \\ 
        ~ & 2.3 & 2 & 2 & \makecell{$\beta_1 = [7.3, 6.6, 2.5, 1.1, 10.3, 5.9],$\\
            $\beta_2 = [6.7, -8.1, -3.4, 2.7, -6.5, 5.7]$} \\ 
        ~ & 2.4 & 75 & 25 & \makecell{$\beta_1, \beta_2 \in \mathbb{R}^{125}$ \\
          $\beta_{1,j}, \beta_{2,j} \sim \mathcal{U}(-10, 10)$} \\ \midrule
        \makecell{3: Projected normal distribution,\\linear relation} & 3.1 & 2 & 0 & \makecell{$\beta_1 = [8.3, 4.6],$\\
            $\beta_2 = [6.5, -5.1]$} \\  
        ~ & 3.2 & 0 & 2 & \makecell{$\beta_1 = [13.5, -8.6, 12.1, 6.9],$\\
            $\beta_2 = [9.2, -8.1, -5.3, 2.2]$} \\ 
        ~ & 3.3 & 2 & 2 & \makecell{$\beta_1 = [8.3, 4.6, 2.5, 1.1, 12.3, 5.2],$\\
            $\beta_2 = [6.5, -5.1, -3.4, 2.7, -6.5, 4.7]$} \\
        ~ & 3.4 & 75 & 25 & \makecell{$\beta_1, \beta_2 \in \mathbb{R}^{125}$ \\
          $\beta_{1,j}, \beta_{2,j} \sim \mathcal{U}(-10, 10)$} \\ \midrule
        \makecell{4: Projected normal distribution,\\nonlinear relation} & 4.1 & 2 & 0 & \makecell{$\beta_1 = [8.3, 4.6],$\\
            $\beta_2 = [6.5, -5.1]$} \\  
        ~ & 4.2 & 0 & 2 & \makecell{$\beta_1 = [13.5, -8.6, 12.1, 6.9],$\\
            $\beta_2 = [9.2, -8.1, -5.3, 2.2]$} \\ 
        ~ & 4.3 & 2 & 2 & \makecell{$\beta_1 = [8.3, 4.6, 2.5, 1.1, 12.3, 5.2],$\\
            $\beta_2 = [6.5, -5.1, -3.4, 2.7, -6.5, 4.7]$} \\
        ~ & 4.4 & 75 & 25 & \makecell{$\beta_1, \beta_2 \in \mathbb{R}^{125}$ \\
          $\beta_{1,j}, \beta_{2,j} \sim \mathcal{U}(-10, 10)$} \\
\bottomrule\noalign{}
\endlastfoot
\end{longtable}
}

{\scriptsize 
\renewcommand{\arraystretch}{1.85}
\setlength{\LTleft}{0pt}
\setlength{\LTright}{0pt}
\setlength{\LTcapwidth}{\textwidth}
\setlength{\tabcolsep}{1.6pt} 
\begin{longtable}[]{@{}lcccccccccccc@{}}
\caption{{\small Model performances for prediction on simulated data: mean and standard deviation of three circular metrics across all 16 data generation settings. The data generation mechanisms are summarized in \Cref{table:simulation_parameter_settings}. The \underline{\textbf{best}} and \textbf{second best} results are highlighted. The mean and standard deviation obtained across 50 independent training-inference runs are reported. MAAD and CRPS are reported in degrees.}}\label{table:simulation_results}\tabularnewline
\toprule\noalign{}
 \multirow{1}{*}{Model} & MAAD & CMDE & CRPS & MAAD & CMDE & CRPS & MAAD & CMDE & CRPS & MAAD & CMDE & CRPS \\
\midrule
\endfirsthead

\multicolumn{13}{c}{
\tablename\ \thetable\ -- continued from previous page
}\\
\toprule
 \multirow{1}{*}{Model} & MAAD & CMDE & CRPS & MAAD & CMDE & CRPS & MAAD & CMDE & CRPS & MAAD & CMDE & CRPS \\
\midrule
\endhead

\midrule
\multicolumn{13}{r}{Continued on next page}\\
\endfoot

\bottomrule
\endlastfoot
        ~ & \multicolumn{3}{c}{Setting 1.1} & \multicolumn{3}{c}{Setting 1.2} & \multicolumn{3}{c}{Setting 1.3} & \multicolumn{3}{c}{Setting 1.4} \\
        \cmidrule(lr){2-4} \cmidrule(lr){5-7} \cmidrule(lr){8-10} \cmidrule(lr){11-13} 
        ANGLE-Geodesic & $\bms{3.370}{0.507}$ & $\bms{0.007}{0.003}$ & $\bms{3.152}{0.655}$ & $\bms{4.253}{0.731}$ & $\ubms{0.010}{0.003}$ & $\bms{3.609}{0.867}$ & $\md{3.511}{0.770}$ & $\md{0.007}{0.004}$ & $\md{3.172}{0.829}$ & $\md{4.345}{0.540}$ & $\ubms{0.012}{0.004}$ & $\ubms{3.187}{0.422}$ \\ 
        ANGLE-Chordal & $\ubms{3.221}{0.318}$ & $\bms{0.007}{0.002}$ & $\ubms{2.725}{0.399}$ & $\ubms{3.939}{0.377}$ & $\ubms{0.010}{0.001}$ & $\ubms{2.996}{0.382}$ & $\bms{3.194}{0.320}$ & $\bms{0.005}{0.002}$ & $\ubms{2.652}{0.315}$ & $\bms{4.308}{0.525}$ & $\ubms{0.012}{0.004}$ & $\bms{3.238}{0.438}$ \\ 
        CLR & $\md{3.634}{0.020}$ & $\ubms{0.006}{0.000}$ & $\md{3.634}{0.020}$ & $\md{5.044}{0.032}$ & $\bms{0.017}{0.000}$ & $\md{5.044}{0.032}$ & $\ubms{2.948}{0.001}$ & $\ubms{0.004}{0.000}$ & $\bms{2.948}{0.001}$ & $\ubms{3.813}{0.003}$ & $\bms{0.013}{0.000}$ & $\md{3.813}{0.003}$ \\ 
        Kernel CLR & $\md{5.321}{0.000}$ & $\md{0.010}{0.000}$ & $\md{5.321}{0.000}$ & $\md{31.444}{0.000}$ & $\md{0.244}{0.000}$ & $\md{31.444}{0.000}$ & $\md{19.400}{0.000}$ & $\md{0.123}{0.000}$ & $\md{19.400}{0.000}$ & $\md{64.079}{0.000}$ & $\md{0.643}{0.000}$ & $\md{64.079}{0.000}$ \\ 
        SCB-NLL & $\md{33.028}{0.069}$ & $\md{0.230}{0.000}$ & $\md{33.028}{0.069}$ & $\md{37.151}{1.123}$ & $\md{0.277}{0.014}$ & $\md{37.151}{1.123}$ & $\md{46.040}{0.687}$ & $\md{0.373}{0.011}$ & $\md{46.040}{0.687}$ & $\md{44.505}{0.647}$ & $\md{0.363}{0.009}$ & $\md{44.505}{0.647}$ \\ 
        SCB-MCPE & $\md{33.352}{0.686}$ & $\md{0.238}{0.008}$ & $\md{33.352}{0.686}$ & $\md{34.360}{1.237}$ & $\md{0.244}{0.010}$ & $\md{34.360}{1.237}$ & $\md{46.441}{0.540}$ & $\md{0.378}{0.005}$ & $\md{46.441}{0.540}$ & $\md{46.441}{0.540}$ & $\md{0.378}{0.005}$ & $\md{46.441}{0.540}$ \\ 
        Lifted EP-MGvM & $\md{6.781}{0.000}$ & $\md{0.012}{0.000}$ & $\md{6.781}{0.000}$ & $\md{25.661}{0.000}$ & $\md{0.175}{0.000}$ & $\md{25.661}{0.000}$ & $\md{21.666}{0.000}$ & $\md{0.126}{0.000}$ & $\md{21.666}{0.000}$ & $\md{67.727}{0.000}$ & $\md{0.696}{0.000}$ & $\md{67.727}{0.000}$ \\ 
        MCR-II & $\md{41.277}{0.000}$ & $\md{0.352}{0.000}$ & $\md{41.277}{0.000}$ & $\md{30.682}{0.000}$ & $\md{0.209}{0.000}$ & $\md{30.682}{0.000}$ & $\md{46.074}{1.152}$ & $\md{0.419}{0.013}$ & $\md{46.074}{1.152}$ & $\md{43.826}{2.180}$ & $\md{0.389}{0.027}$ & $\md{43.826}{2.180}$ \\ \midrule
        ~ & \multicolumn{3}{c}{Setting 2.1} & \multicolumn{3}{c}{Setting 2.2} & \multicolumn{3}{c}{Setting 2.3} & \multicolumn{3}{c}{Setting 2.4} \\ 
    \cmidrule(lr){2-4} \cmidrule(lr){5-7} \cmidrule(lr){8-10} \cmidrule(lr){11-13}        
        ANGLE-Geodesic & $\bms{8.586}{3.017}$ & $\bms{0.032}{0.025}$ & $\bms{7.128}{1.942}$ & $\bms{8.104}{1.184}$ & $\bms{0.038}{0.008}$ & $\bms{6.342}{0.876}$ & $\bms{8.194}{1.392}$ & $\bms{0.047}{0.014}$ & $\bms{6.214}{0.886}$ & $\md{4.647}{0.587}$ & $\bms{0.015}{0.005}$ & $\ubms{3.424}{0.464}$ \\ 
        ANGLE-Chordal & $\ubms{5.718}{1.060}$ & $\ubms{0.015}{0.007}$ & $\ubms{4.607}{0.783}$ & $\ubms{6.703}{0.784}$ & $\ubms{0.029}{0.006}$ & $\ubms{5.111}{0.597}$ & $\ubms{6.411}{1.428}$ & $\ubms{0.030}{0.014}$ & $\ubms{4.775}{0.855}$ & $\bms{4.578}{0.537}$ & $\ubms{0.014}{0.004}$ & $\bms{3.478}{0.462}$ \\ 
        CLR & $\md{19.281}{0.000}$ & $\md{0.144}{0.000}$ & $\md{19.281}{0.000}$ & $\md{10.769}{0.009}$ & $\md{0.073}{0.000}$ & $\md{10.769}{0.009}$ & $\md{10.422}{0.000}$ & $\md{0.072}{0.000}$ & $\md{10.422}{0.000}$ & $\ubms{3.865}{0.002}$ & $\bms{0.015}{0.000}$ & $\md{3.865}{0.002}$ \\ 
        Kernel CLR & $\md{10.617}{0.000}$ & $\md{0.045}{0.000}$ & $\md{10.617}{0.000}$ & $\md{31.988}{0.000}$ & $\md{0.267}{0.000}$ & $\md{31.988}{0.000}$ & $\md{19.437}{0.000}$ & $\md{0.128}{0.000}$ & $\md{19.437}{0.000}$ & $\md{64.089}{0.000}$ & $\md{0.646}{0.000}$ & $\md{64.089}{0.000}$ \\ 
        SCB-NLL & $\md{28.335}{0.179}$ & $\md{0.189}{0.002}$ & $\md{28.335}{0.179}$ & $\md{32.441}{1.343}$ & $\md{0.219}{0.012}$ & $\md{32.441}{1.343}$ & $\md{33.611}{0.477}$ & $\md{0.235}{0.005}$ & $\md{33.611}{0.477}$ & $\md{37.087}{1.055}$ & $\md{0.273}{0.012}$ & $\md{37.087}{1.055}$ \\ 
        SCB-MCPE & $\md{27.815}{3.268}$ & $\md{0.186}{0.041}$ & $\md{27.815}{3.268}$ & $\md{29.781}{0.220}$ & $\md{0.197}{0.003}$ & $\md{29.781}{0.220}$ & $\md{33.259}{3.232}$ & $\md{0.229}{0.040}$ & $\md{33.259}{3.232}$ & $\md{41.401}{3.365}$ & $\md{0.325}{0.045}$ & $\md{41.401}{3.365}$ \\ 
        Lifted EP-MGvM & $\md{16.934}{0.000}$ & $\md{0.115}{0.000}$ & $\md{16.934}{0.000}$ & $\md{27.560}{0.000}$ & $\md{0.216}{0.000}$ & $\md{27.560}{0.000}$ & $\md{24.392}{0.000}$ & $\md{0.172}{0.000}$ & $\md{24.392}{0.000}$ & $\md{67.626}{0.000}$ & $\md{0.697}{0.000}$ & $\md{67.626}{0.000}$ \\ 
        MCR-II & $\md{41.435}{0.000}$ & $\md{0.346}{0.000}$ & $\md{41.435}{0.000}$ & $\md{28.118}{0.002}$ & $\md{0.181}{0.000}$ & $\md{28.118}{0.002}$ & $\md{41.959}{0.244}$ & $\md{0.357}{0.003}$ & $\md{41.959}{0.244}$ & $\md{43.495}{0.828}$ & $\md{0.383}{0.011}$ & $\md{43.495}{0.828}$ \\ \midrule
        ~ & \multicolumn{3}{c}{Setting 3.1} & \multicolumn{3}{c}{Setting 3.2} & \multicolumn{3}{c}{Setting 3.3} & \multicolumn{3}{c}{Setting 3.4} \\ 
   \cmidrule(lr){2-4} \cmidrule(lr){5-7} \cmidrule(lr){8-10} \cmidrule(lr){11-13}          
        ANGLE-Geodesic & $\bms{7.247}{0.340}$ & $\bms{0.025}{0.002}$ & $\bms{5.296}{0.374}$ & $\bms{5.825}{0.319}$ & $\ubms{0.013}{0.001}$ & $\bms{4.386}{0.427}$ & $\md{5.602}{0.274}$ & $\bms{0.016}{0.001}$ & $\bms{4.203}{0.339}$ & $\md{3.770}{0.366}$ & $\bms{0.007}{0.003}$ & $\ubms{2.720}{0.236}$ \\ 
        ANGLE-Chordal & $\ubms{7.175}{0.300}$ & $\ubms{0.024}{0.002}$ & $\ubms{5.093}{0.273}$ & $\ubms{5.722}{0.234}$ & $\ubms{0.013}{0.001}$ & $\ubms{4.184}{0.232}$ & $\bms{5.514}{0.295}$ & $\bms{0.016}{0.001}$ & $\ubms{3.956}{0.279}$ & $\bms{3.688}{0.344}$ & $\bms{0.007}{0.003}$ & $\bms{2.726}{0.265}$ \\ 
        CLR & $\md{7.306}{0.002}$ & $\md{0.026}{0.000}$ & $\md{7.306}{0.002}$ & $\md{6.686}{0.022}$ & $\bms{0.017}{0.000}$ & $\md{6.686}{0.022}$ & $\ubms{5.341}{0.002}$ & $\ubms{0.015}{0.000}$ & $\md{5.341}{0.002}$ & $\ubms{3.381}{0.002}$ & $\ubms{0.006}{0.000}$ & $\md{3.381}{0.002}$ \\ 
        Kernel CLR & $\md{7.345}{0.000}$ & $\ubms{0.024}{0.000}$ & $\md{7.345}{0.000}$ & $\md{25.815}{0.000}$ & $\md{0.172}{0.000}$ & $\md{25.815}{0.000}$ & $\md{20.692}{0.000}$ & $\md{0.130}{0.000}$ & $\md{20.692}{0.000}$ & $\md{68.379}{0.000}$ & $\md{0.709}{0.000}$ & $\md{68.379}{0.000}$ \\ 
        SCB-NLL & $\md{36.185}{0.440}$ & $\md{0.266}{0.006}$ & $\md{36.185}{0.440}$ & $\md{30.673}{0.143}$ & $\md{0.212}{0.001}$ & $\md{30.673}{0.143}$ & $\md{44.309}{2.560}$ & $\md{0.360}{0.028}$ & $\md{44.309}{2.560}$ & $\md{50.878}{0.782}$ & $\md{0.445}{0.012}$ & $\md{50.878}{0.782}$ \\ 
        SCB-MCPE & $\md{36.183}{1.476}$ & $\md{0.270}{0.018}$ & $\md{36.183}{1.476}$ & $\md{28.712}{0.048}$ & $\md{0.193}{0.000}$ & $\md{28.712}{0.048}$ & $\md{41.536}{0.966}$ & $\md{0.325}{0.013}$ & $\md{41.536}{0.966}$ & $\md{52.842}{0.977}$ & $\md{0.477}{0.014}$ & $\md{52.842}{0.977}$ \\ 
        Lifted EP-MGvM & $\md{8.228}{0.000}$ & $\md{0.027}{0.000}$ & $\md{8.228}{0.000}$ & $\md{24.407}{0.000}$ & $\md{0.130}{0.000}$ & $\md{24.407}{0.000}$ & $\md{21.155}{0.000}$ & $\md{0.116}{0.000}$ & $\md{21.155}{0.000}$ & $\md{85.440}{0.000}$ & $\md{0.935}{0.000}$ & $\md{85.440}{0.000}$ \\ 
        MCR-II & $\md{41.303}{0.772}$ & $\md{0.347}{0.010}$ & $\md{41.303}{0.772}$ & $\md{56.908}{13.448}$ & $\md{0.559}{0.185}$ & $\md{56.908}{13.448}$ & $\md{41.422}{0.549}$ & $\md{0.350}{0.005}$ & $\md{41.422}{0.549}$ & $\md{44.769}{1.672}$ & $\md{0.397}{0.022}$ & $\md{44.769}{1.672}$ \\ \midrule
        ~ & \multicolumn{3}{c}{Setting 4.1} & \multicolumn{3}{c}{Setting 4.2} & \multicolumn{3}{c}{Setting 4.3} & \multicolumn{3}{c}{Setting 4.4} \\ 
    \cmidrule(lr){2-4} \cmidrule(lr){5-7} \cmidrule(lr){8-10} \cmidrule(lr){11-13}          
        ANGLE-Geodesic & $\bms{8.017}{0.478}$ & $\bms{0.034}{0.004}$ & $\bms{6.194}{0.513}$ & $\bms{7.801}{0.374}$ & $\ubms{0.033}{0.002}$ & $\bms{5.794}{0.369}$ & $\bms{6.004}{0.715}$ & $\bms{0.014}{0.005}$ & $\bms{4.951}{0.529}$ & $\md{3.750}{0.357}$ & $\bms{0.007}{0.003}$ & $\ubms{2.696}{0.238}$ \\ 
        ANGLE-Chordal & $\ubms{7.775}{0.527}$ & $\ubms{0.033}{0.004}$ & $\ubms{5.668}{0.417}$ & $\ubms{7.576}{0.325}$ & $\ubms{0.033}{0.002}$ & $\ubms{5.478}{0.274}$ & $\ubms{5.704}{0.375}$ & $\ubms{0.012}{0.002}$ & $\ubms{4.336}{0.370}$ & $\bms{3.678}{0.357}$ & $\bms{0.007}{0.003}$ & $\bms{2.711}{0.260}$ \\ 
        CLR & $\md{8.404}{0.003}$ & $\md{0.038}{0.000}$ & $\md{8.404}{0.003}$ & $\md{9.143}{0.018}$ & $\bms{0.038}{0.000}$ & $\md{9.143}{0.018}$ & $\md{6.472}{0.000}$ & $\md{0.017}{0.000}$ & $\md{6.472}{0.000}$ & $\ubms{3.163}{0.002}$ & $\ubms{0.005}{0.000}$ & $\md{3.163}{0.002}$ \\ 
        Kernel CLR & $\md{8.356}{0.000}$ & $\md{0.039}{0.000}$ & $\md{8.356}{0.000}$ & $\md{27.293}{0.000}$ & $\md{0.216}{0.000}$ & $\md{27.293}{0.000}$ & $\md{19.893}{0.000}$ & $\md{0.123}{0.000}$ & $\md{19.893}{0.000}$ & $\md{68.752}{0.000}$ & $\md{0.714}{0.000}$ & $\md{68.752}{0.000}$ \\ 
        SCB-NLL & $\md{28.505}{0.293}$ & $\md{0.188}{0.005}$ & $\md{28.505}{0.293}$ & $\md{30.240}{0.176}$ & $\md{0.213}{0.002}$ & $\md{30.240}{0.176}$ & $\md{34.841}{0.422}$ & $\md{0.248}{0.005}$ & $\md{34.841}{0.422}$ & $\md{48.466}{0.849}$ & $\md{0.407}{0.013}$ & $\md{48.466}{0.849}$ \\ 
        SCB-MCPE & $\md{28.628}{0.306}$ & $\md{0.189}{0.004}$ & $\md{28.628}{0.306}$ & $\md{30.246}{0.547}$ & $\md{0.217}{0.005}$ & $\md{30.246}{0.547}$ & $\md{35.607}{4.107}$ & $\md{0.259}{0.048}$ & $\md{35.607}{4.107}$ & $\md{53.406}{1.074}$ & $\md{0.485}{0.015}$ & $\md{53.406}{1.074}$ \\ 
        Lifted EP-MGvM & $\md{10.016}{0.000}$ & $\md{0.048}{0.000}$ & $\md{10.016}{0.000}$ & $\md{27.938}{0.000}$ & $\md{0.175}{0.000}$ & $\md{27.938}{0.000}$ & $\md{22.360}{0.000}$ & $\md{0.132}{0.000}$ & $\md{22.360}{0.000}$ & $\md{85.366}{0.000}$ & $\md{0.935}{0.000}$ & $\md{85.366}{0.000}$ \\ 
        MCR-II & $\md{39.088}{0.000}$ & $\md{0.318}{0.000}$ & $\md{39.088}{0.000}$ & $\md{58.865}{4.375}$ & $\md{0.587}{0.061}$ & $\md{58.865}{4.375}$ & $\md{39.400}{0.440}$ & $\md{0.315}{0.007}$ & $\md{39.400}{0.440}$ & $\md{44.962}{1.724}$ & $\md{0.401}{0.023}$ & $\md{44.962}{1.724}$ \\

\bottomrule\noalign{}
\end{longtable}
}

{\scriptsize 
\setlength{\tabcolsep}{4.2pt} 
\renewcommand{\arraystretch}{1.85}
\setlength{\LTleft}{0pt}
\setlength{\LTright}{0pt}
\setlength{\LTcapwidth}{\textwidth}
\begin{longtable}{cccccccccc} 
\caption{{\small Results on the wind direction prediction task. The \underline{\textbf{best}} and \textbf{second best} results are highlighted. The mean and standard deviation obtained across 50 independent training-inference runs are reported. For the proposed model using energy score loss with geodesic distance, the $C^2$-Wendland kernel is used. MedErr, CRPS, and MAAD are reported in degrees.}}\label{table:wind_prediction_results}\\
\toprule
Dataset & Metric & vMQP & CLR & Kernel CLR & SCBR & Lifted EP-MGvM & MCR2 & ANGLE-Geodesic & ANGLE-Chordal \\
\midrule

\multirow{5}{*}{\rotatebox[origin=c]{90}{Germany Wind}} & Accuracy
& $\md{0.85}{0.01}$
& $\ubms{0.90}{0.00}$
& $\ubms{0.90}{0.00}$
& $\ubms{0.90}{0.00}$
& $\ubms{0.90}{0.02}$
& $\bms{0.89}{0.02}$
& $\ubms{0.90}{0.03}$
& $\bms{0.89}{0.02}$ \\

~ & MedErr
& $\md{12.74}{0.47}$
& $\md{12.66}{0.18}$
& $\md{16.05}{0.00}$
& $\md{13.01}{0.02}$
& $\md{13.18}{2.87}$
& $\md{13.28}{0.49}$
& $\ubms{12.52}{0.78}$
& $\bms{12.63}{0.87}$ \\

~ & CRPS
& $\md{12.61}{0.11}$
& $\md{16.06}{0.01}$
& $\md{18.12}{0.00}$
& $\md{16.54}{0.00}$
& $\md{15.59}{1.53}$
& $\md{16.95}{0.76}$
& $\ubms{11.47}{0.20}$
& $\bms{11.61}{0.14}$ \\

~ & MAAD
& $\md{16.49}{0.21}$
& $\md{16.06}{0.01}$
& $\md{18.12}{0.00}$
& $\md{16.54}{0.00}$
& $\ubms{15.59}{1.53}$
& $\md{16.95}{0.76}$
& $\bms{15.86}{0.41}$
& $\md{16.29}{0.27}$ \\

~ & CMDE
& $\ubms{0.06}{0.00}$
& $\ubms{0.06}{0.00}$
& $\bms{0.07}{0.00}$
& $\bms{0.07}{0.00}$
& $\bms{0.07}{0.01}$
& $\bms{0.07}{0.01}$
& $\ubms{0.06}{0.00}$
& $\ubms{0.06}{0.00}$ \\
\midrule

\multirow{5}{*}{\rotatebox[origin=c]{90}{India wind}}& Accuracy
& $\md{0.03}{0.00}$
& $\md{0.35}{0.00}$
& $\md{0.68}{0.00}$
& $\md{0.41}{0.03}$
& $\bms{0.77}{0.00}$
& $\md{0.33}{0.01}$
& $\md{0.72}{0.04}$
& $\ubms{0.78}{0.03}$ \\

~ & MedErr
& $\md{130.35}{0.73}$
& $\md{43.15}{0.01}$
& $\md{17.83}{0.00}$
& $\md{41.41}{1.06}$
& $\bms{13.17}{0.00}$
& $\md{41.25}{1.25}$
& $\md{15.76}{1.74}$
& $\ubms{13.04}{1.43}$ \\

~ & CRPS
& $\md{82.63}{2.25}$
& $\md{47.49}{0.00}$
& $\md{28.24}{0.00}$
& $\md{47.63}{0.46}$
& $\md{23.24}{0.00}$
& $\md{49.24}{0.40}$
& $\md{19.14}{1.95}$
& $\ubms{17.13}{1.32}$ \\

~ & MAAD
& $\md{122.83}{0.27}$
& $\md{47.49}{0.00}$
& $\md{28.24}{0.00}$
& $\md{47.63}{0.46}$
& $\bms{23.24}{0.00}$
& $\md{49.24}{0.40}$
& $\md{25.81}{2.50}$
& $\ubms{23.18}{1.60}$ \\

~ & CMDE
& $\md{1.46}{0.00}$
& $\md{0.41}{0.00}$
& $\md{0.20}{0.00}$
& $\md{0.43}{0.01}$
& $\ubms{0.15}{0.00}$
& $\md{0.43}{0.01}$
& $\md{0.18}{0.03}$
& $\ubms{0.15}{0.02}$ \\

\bottomrule
\endlastfoot
\end{longtable}}

\clearpage

\begingroup
{\tiny
\setlength{\tabcolsep}{2.6pt}
\renewcommand{\arraystretch}{1.85}
\setlength{\LTleft}{0pt}
\setlength{\LTright}{0pt}
\setlength{\LTcapwidth}{\textwidth}
\begin{longtable}{@{}llcccccccccccc|c@{}}
\caption{Model performances on the object pose detection task.
The \underline{\textbf{best}} and \textbf{second-best} results are highlighted.
The mean and standard deviation obtained across 50 independent
training--inference runs are reported. MedErr, CRPS, and MAAD are
reported in degrees.} \label{table:object_pose_results}\\

\toprule
Metric & Model & aeroplane & bicycle & boat & bottle & bus & car &
chair & table & motorbike & sofa & train & tvmonitor & Mean \\
\midrule
\endfirsthead

\multicolumn{15}{c}{
\tablename\ \thetable\ -- continued from previous page
}\\
\toprule
Metric & Model & aeroplane & bicycle & boat & bottle & bus & car &
chair & table & motorbike & sofa & train & tvmonitor & Mean \\
\midrule
\endhead

\midrule
\multicolumn{15}{r}{Continued on next page}\\
\endfoot

\bottomrule
\endlastfoot

        \multirow{12}{*}{\rotatebox[origin=c]{90}{$\text{Accuracy}_{\pi/6}$}} & Mixture-Inception & $0.28$ & $0.48$ & $0.38$ & \underline{$\mathbf{0.99}$} & $0.7$ & $0.33$ & $0.36$ & $0.65$ & $0.3$ & $0.57$ & $0.66$ & $0.69$ & $0.53$ \\
        ~ & Mixture-DenseNet & $0.53$ & $0.27$ & $0.4$ & \underline{$\mathbf{0.99}$} & $0.61$ & $0.39$ & $0.39$ & $0.63$ & $0.31$ & $0.65$ & $0.65$ & $0.64$ & $0.54$ \\
        ~ & Mixture-MobileNet & $0.16$ & $0.29$ & $0.41$ & \underline{$\mathbf{0.99}$} & $0.63$ & $0.39$ & $0.36$ & $0.63$ & $0.31$ & $0.64$ & $0.66$ & $0.67$ & $0.51$ \\
        ~ & Inception-v3-SCBR & $\ms{0.33}{0.00}$ & $\ms{0.27}{0.01}$ & $\ms{0.36}{0.01}$ & $\ubms{0.99}{0.00}$ & $\ms{0.20}{0.04}$ & $\ms{0.49}{0.00}$ & $\ms{0.39}{0.00}$ & $\ms{0.60}{0.01}$ & $\ms{0.31}{0.00}$ & $\ms{0.62}{0.00}$ & $\ms{0.55}{0.02}$ & $\ms{0.66}{0.00}$ & $0.48$ \\
        ~ & Inception-v3-LEP-mGvM & $\bms{0.68}{0.00}$ & $\bms{0.60}{0.00}$ & $\bms{0.48}{0.00}$ & $\ubms{0.99}{0.00}$ & $\ubms{0.80}{0.00}$ & $\ubms{0.68}{0.00}$ & $\ms{0.58}{0.00}$ & $\ms{0.65}{0.00}$ & $\ms{0.69}{0.00}$ & $\ms{0.73}{0.00}$ & $\ms{0.67}{0.00}$ & $\ms{0.73}{0.00}$ & $0.69$ \\
        ~ & Inception-v3-MCR2 & $\ms{0.60}{0.01}$ & $\ms{0.57}{0.01}$ & $\ms{0.44}{0.01}$ & $\ubms{0.99}{0.00}$ & $\ms{0.77}{0.01}$ & $\ms{0.65}{0.01}$ & $\ms{0.58}{0.00}$ & $\bms{0.66}{0.00}$ & $\ms{0.68}{0.01}$ & $\ms{0.74}{0.00}$ & $\ms{0.62}{0.01}$ & $\ms{0.73}{0.01}$ & $0.67$ \\
        ~ & ConvNeXt-SCBR & $\ms{0.34}{0.00}$ & $\ms{0.28}{0.00}$ & $\ms{0.40}{0.01}$ & $\ubms{0.99}{0.00}$ & $\ms{0.20}{0.05}$ & $\ms{0.50}{0.00}$ & $\ms{0.39}{0.00}$ & $\bms{0.66}{0.00}$ & $\ms{0.31}{0.00}$ & $\ms{0.63}{0.00}$ & $\ms{0.58}{0.01}$ & $\ms{0.66}{0.00}$ & $0.5$ \\
        ~ & ConvNeXt-LEP-mGvM & $\ms{0.55}{0.00}$ & $\ms{0.54}{0.00}$ & $\bms{0.48}{0.00}$ & $\ubms{0.99}{0.00}$ & $\ubms{0.80}{0.00}$ & $\ms{0.63}{0.00}$ & $\ms{0.43}{0.00}$ & $\bms{0.66}{0.00}$ & $\ms{0.53}{0.00}$ & $\ms{0.70}{0.00}$ & $\ubms{0.71}{0.00}$ & $\ms{0.72}{0.00}$ & $0.65$ \\
        ~ & ConvNeXt-MCR2 & $\ms{0.64}{0.02}$ & $\ms{0.55}{0.04}$ & $\ms{0.46}{0.01}$ & $\ubms{0.99}{0.00}$ & $\ms{0.70}{0.02}$ & $\bms{0.67}{0.01}$ & $\ms{0.58}{0.01}$ & $\ms{0.63}{0.01}$ & $\ms{0.57}{0.07}$ & $\ms{0.72}{0.01}$ & $\ms{0.59}{0.02}$ & $\bms{0.74}{0.01}$ & $0.65$ \\
        ~ & Inception-v3-SDR-ANGLE & $\ms{0.66}{0.01}$ & $\ms{0.59}{0.01}$ & $\ms{0.45}{0.01}$ & $\ubms{0.99}{0.00}$ & $\bms{0.79}{0.01}$ & $\ms{0.65}{0.00}$ & $\bms{0.59}{0.00}$ & $\ms{0.64}{0.04}$ & $\ms{0.66}{0.04}$ & $\ms{0.75}{0.01}$ & $\bms{0.70}{0.01}$ & $\ms{0.70}{0.00}$ & $0.68$ \\
        ~ & Inception-v3-ANGLE & $\ms{0.67}{0.01}$ & $\bms{0.60}{0.01}$ & $\ms{0.47}{0.01}$ & $\ubms{0.99}{0.00}$ & $\ubms{0.80}{0.01}$ & $\ms{0.66}{0.00}$ & $\ubms{0.60}{0.00}$ & $\ubms{0.67}{0.01}$ & $\ubms{0.70}{0.01}$ & $\bms{0.76}{0.00}$ & $\ubms{0.71}{0.01}$ & $\bms{0.74}{0.01}$ & $\mathbf{0.70}$ \\
        ~ & ConvNeXt-ANGLE & $\ubms{0.70}{0.01}$ & $\ubms{0.61}{0.01}$ & $\ubms{0.49}{0.01}$ & $\ubms{0.99}{0.00}$ & $\ubms{0.80}{0.00}$ & $\ubms{0.68}{0.00}$ & $\ubms{0.60}{0.00}$ & $\ubms{0.67}{0.01}$ & $\ubms{0.71}{0.01}$ & $\ubms{0.77}{0.01}$ & $\ms{0.69}{0.01}$ & $\ubms{0.76}{0.01}$ & \underline{$\mathbf{0.71}$} \\ \midrule
        \multirow{12}{*}{\rotatebox[origin=c]{90}{MedErr}} & Mixture-Inception & $69.89$ & $32.55$ & $48.24$ & $\mathbf{0.96}$ & $8.9$ & $70.64$ & $53.62$ & $8.51$ & $58.83$ & $25.38$ & $18.52$ & $16.31$ & $34.36$ \\
        ~ & Mixture-DenseNet & $27.63$ & $67.03$ & $50.35$ & \underline{$\mathbf{0.62}$} & $19.88$ & $59.3$ & $42.16$ & $\mathbf{5.9}$ & $59.84$ & $16.42$ & $17.63$ & $17.45$ & $32.02$ \\
        ~ & Mixture-MobileNet & $68.13$ & $65.1$ & $50.89$ & $1.49$ & $19.04$ & $52.01$ & $53.67$ & \underline{$\mathbf{5.59}$} & $59.92$ & $13.79$ & $17.68$ & $18.66$ & $35.50$ \\
        ~ & Inception-v3-SCBR & $\ms{51.40}{0.29}$ & $\ms{57.07}{0.43}$ & $\ms{65.84}{0.52}$ & $\ms{2.91}{0.00}$ & $\ms{54.85}{1.51}$ & $\ms{50.77}{0.21}$ & $\ms{54.97}{0.20}$ & $\ms{36.05}{0.25}$ & $\ms{52.10}{0.46}$ & $\ms{29.22}{0.12}$ & $\ms{45.45}{0.62}$ & $\ms{31.93}{0.30}$ & $44.38$ \\
        ~ & Inception-v3-LEP-mGvM & $\ms{34.80}{0.00}$ & $\ms{40.72}{0.00}$ & $\ms{52.65}{0.00}$ & $\ms{4.34}{0.00}$ & $\ms{24.99}{0.00}$ & $\ms{37.56}{0.00}$ & $\ms{39.66}{0.00}$ & $\ms{27.83}{0.00}$ & $\ms{32.95}{0.00}$ & $\ms{24.63}{0.00}$ & $\ms{43.35}{0.00}$ & $\ms{26.92}{0.00}$ & $32.53$ \\
        ~ & Inception-v3-MCR2 & $\ms{37.50}{0.88}$ & $\ms{41.99}{0.53}$ & $\ms{54.47}{0.40}$ & $\ms{3.86}{0.10}$ & $\ms{28.13}{0.46}$ & $\ms{39.33}{0.70}$ & $\ms{38.99}{0.12}$ & $\ms{29.33}{0.10}$ & $\ms{32.98}{0.42}$ & $\ms{24.91}{0.66}$ & $\ms{42.41}{0.47}$ & $\ms{28.29}{0.35}$ & $33.52$ \\
        ~ & ConvNeXt-SCBR & $\ms{48.71}{0.33}$ & $\ms{55.70}{0.38}$ & $\ms{60.01}{0.63}$ & $\ms{2.89}{0.00}$ & $\ms{56.20}{2.37}$ & $\ms{47.66}{0.41}$ & $\ms{53.50}{0.19}$ & $\ms{34.79}{0.13}$ & $\ms{50.33}{0.37}$ & $\ms{28.49}{0.16}$ & $\ms{45.15}{0.23}$ & $\ms{31.55}{0.11}$ & $42.92$ \\
        ~ & ConvNeXt-LEP-mGvM & $\ms{47.54}{0.00}$ & $\ms{50.85}{0.00}$ & $\ms{54.67}{0.00}$ & $\ms{3.39}{0.00}$ & $\ms{24.56}{0.00}$ & $\ms{46.52}{0.00}$ & $\ms{57.75}{0.00}$ & $\ms{26.07}{0.00}$ & $\ms{51.45}{0.00}$ & $\ms{24.85}{0.00}$ & $\ms{37.70}{0.00}$ & $\ms{30.15}{0.00}$ & $37.96$ \\
        ~ & ConvNeXt-MCR2 & $\ms{34.05}{1.29}$ & $\ms{43.65}{1.65}$ & $\ms{51.87}{0.25}$ & $\ms{5.63}{0.39}$ & $\ms{30.71}{1.16}$ & $\ms{35.47}{0.40}$ & $\ms{36.77}{0.51}$ & $\ms{30.84}{0.28}$ & $\ms{38.53}{3.96}$ & $\ms{26.29}{0.34}$ & $\ms{44.32}{0.83}$ & $\ms{28.67}{0.33}$ & $33.90$ \\
        ~ & Inception-v3-SDR-ANGLE & $\ms{18.59}{0.45}$ & $\ms{21.24}{0.64}$ & $\ms{36.29}{1.55}$ & $\ms{2.19}{0.31}$ & $\bms{6.95}{0.22}$ & $\ms{15.25}{0.21}$ & $\ms{23.10}{0.30}$ & $\ms{18.13}{4.24}$ & $\ms{18.02}{2.81}$ & $\ms{13.16}{1.06}$ & $\bms{11.52}{0.22}$ & $\ms{15.69}{0.39}$ & $16.68$ \\
        ~ & Inception-v3-ANGLE & $\ubms{17.39}{0.43}$ & $\ubms{19.96}{0.56}$ & $\bms{34.65}{0.89}$ & $\ms{2.46}{0.30}$ & $\bms{6.95}{0.28}$ & $\bms{14.62}{0.37}$ & $\ubms{22.09}{0.20}$ & $\ms{15.27}{0.57}$ & $\ubms{16.31}{0.49}$ & $\ubms{11.57}{0.43}$ & $\ms{12.05}{0.76}$ & $\bms{14.67}{0.28}$ & $\mathbf{15.67}$ \\
        ~ & ConvNeXt-ANGLE & $\bms{17.54}{0.47}$ & $\bms{21.03}{0.50}$ & $\ubms{30.72}{1.06}$ & $\ms{1.64}{0.19}$ & $\ubms{6.12}{0.18}$ & $\ubms{14.32}{0.33}$ & $\bms{22.80}{0.31}$ & $\ms{17.66}{0.50}$ & $\bms{17.11}{0.45}$ & $\bms{11.97}{0.36}$ & $\ubms{11.05}{0.62}$ & $\ubms{12.61}{0.32}$ & \underline{$\mathbf{15.47}$} \\ \midrule
        \multirow{12}{*}{\rotatebox[origin=c]{90}{CRPS}} & Mixture-Inception & $43.38$ & $36.25$ & $41.75$ & $2.82$ & $29.49$ & $42.96$ & $40.77$ & $22.32$ & $41.99$ & $23.35$ & $\mathbf{33.54}$ & $22.47$ & $31.76$ \\
        ~ & Mixture-DenseNet & $31$ & $44.35$ & $42.32$ & $2.56$ & $32.11$ & $60.1$ & $35.63$ & $24.58$ & $43.19$ & $22.84$ & \underline{$\mathbf{33.14}$} & $23.82$ & $32.97$ \\
        ~ & Mixture-MobileNet & $43.8$ & $53.33$ & $42.22$ & $2.95$ & $33.03$ & $49.58$ & $52.75$ & $23.86$ & $44.06$ & $24.91$ & $35.64$ & $26.35$ & $36.04$ \\
        ~ & Inception-v3-SCBR & $\ms{51.40}{0.29}$ & $\ms{57.07}{0.43}$ & $\ms{65.84}{0.52}$ & $\ms{2.91}{0.00}$ & $\ms{54.85}{1.51}$ & $\ms{50.77}{0.21}$ & $\ms{54.97}{0.20}$ & $\ms{36.05}{0.25}$ & $\ms{52.10}{0.46}$ & $\ms{29.22}{0.12}$ & $\ms{45.45}{0.62}$ & $\ms{31.93}{0.30}$ & $44.38$ \\
        ~ & Inception-v3-LEP-mGvM & $\ms{34.80}{0.00}$ & $\ms{40.72}{0.00}$ & $\ms{52.65}{0.00}$ & $\ms{4.34}{0.00}$ & $\ms{24.99}{0.00}$ & $\ms{37.56}{0.00}$ & $\ms{39.66}{0.00}$ & $\ms{27.83}{0.00}$ & $\ms{32.95}{0.00}$ & $\ms{24.63}{0.00}$ & $\ms{43.35}{0.00}$ & $\ms{26.92}{0.00}$ & $32.53$ \\
        ~ & Inception-v3-MCR2 & $\ms{37.50}{0.88}$ & $\ms{41.99}{0.53}$ & $\ms{54.47}{0.40}$ & $\ms{3.86}{0.10}$ & $\ms{28.13}{0.46}$ & $\ms{39.33}{0.70}$ & $\ms{38.99}{0.12}$ & $\ms{29.33}{0.10}$ & $\ms{32.98}{0.42}$ & $\ms{24.91}{0.66}$ & $\ms{42.41}{0.47}$ & $\ms{28.29}{0.35}$ & $33.52$ \\
        ~ & ConvNeXt-SCBR & $\ms{48.71}{0.33}$ & $\ms{55.70}{0.38}$ & $\ms{60.01}{0.63}$ & $\ms{2.89}{0.00}$ & $\ms{56.20}{2.37}$ & $\ms{47.66}{0.41}$ & $\ms{53.50}{0.19}$ & $\ms{34.79}{0.13}$ & $\ms{50.33}{0.37}$ & $\ms{28.49}{0.16}$ & $\ms{45.15}{0.23}$ & $\ms{31.55}{0.11}$ & $42.92$ \\
        ~ & ConvNeXt-LEP-mGvM & $\ms{47.54}{0.00}$ & $\ms{50.85}{0.00}$ & $\ms{54.67}{0.00}$ & $\ms{3.39}{0.00}$ & $\ms{24.56}{0.00}$ & $\ms{46.52}{0.00}$ & $\ms{57.75}{0.00}$ & $\ms{26.07}{0.00}$ & $\ms{51.45}{0.00}$ & $\ms{24.85}{0.00}$ & $\ms{37.70}{0.00}$ & $\ms{30.15}{0.00}$ & $37.96$ \\
        ~ & ConvNeXt-MCR2 & $\ms{34.05}{1.29}$ & $\ms{43.65}{1.65}$ & $\ms{51.87}{0.25}$ & $\ms{5.63}{0.39}$ & $\ms{30.71}{1.16}$ & $\ms{35.47}{0.40}$ & $\ms{36.77}{0.51}$ & $\ms{30.84}{0.28}$ & $\ms{38.53}{3.96}$ & $\ms{26.29}{0.34}$ & $\ms{44.32}{0.83}$ & $\ms{28.67}{0.33}$ & $33.90$ \\
        ~ & Inception-v3-SDR-ANGLE & $\ms{23.03}{0.16}$ & $\bms{27.54}{0.22}$ & $\bms{36.14}{0.42}$ & $\bms{2.53}{0.07}$ & $\bms{18.71}{0.26}$ & $\bms{28.96}{0.50}$ & $\bms{28.94}{0.26}$ & $\ms{23.81}{1.76}$ & $\ms{24.60}{1.64}$ & $\ms{18.67}{1.10}$ & $\ms{36.82}{0.12}$ & $\ms{23.45}{0.39}$ & $24.43$ \\
        ~ & Inception-v3-ANGLE & $\bms{22.74}{0.15}$ & $\ms{28.50}{0.22}$ & $\ms{39.24}{0.27}$ & $\ubms{2.51}{0.09}$ & $\ms{20.23}{0.21}$ & $\ms{29.46}{0.28}$ & $\ms{30.02}{0.18}$ & $\bms{21.82}{0.12}$ & $\ubms{22.54}{0.17}$ & $\bms{18.32}{0.15}$ & $\ms{35.05}{0.35}$ & $\bms{21.64}{0.37}$ & $\mathbf{24.34}$ \\
        ~ & ConvNeXt-ANGLE & $\ubms{22.55}{0.21}$ & $\ubms{25.36}{0.22}$ & $\ubms{33.70}{0.23}$ & $\ms{2.70}{0.12}$ & $\ubms{18.68}{0.20}$ & $\ubms{28.37}{0.24}$ & $\ubms{26.79}{0.24}$ & $\ubms{21.71}{0.16}$ & $\bms{23.40}{0.22}$ & $\ubms{16.75}{0.33}$ & $\ms{36.46}{0.93}$ & $\ubms{20.23}{0.61}$ & \underline{$\mathbf{23.06}$} \\ \midrule
        \multirow{4}{*}{\rotatebox[origin=c]{90}{MAAD}} & Mixture-Inception & $72$ & $50.84$ & $62.87$ & $3.03$ & $38.52$ & $80.91$ & $64.33$ & $\mathbf{27.76}$ & $70.77$ & $31.25$ & $41.3$ & $31.26$ & $47.90$ \\
        ~ & Mixture-DenseNet & $44.96$ & $72.59$ & $60.9$ & \underline{$\mathbf{2.88}$} & $43.18$ & $74.44$ & $53.66$ & $28.61$ & $70.06$ & $29.74$ & $41.26$ & $32.27$ & $46.21$ \\
        ~ & Mixture-MobileNet & $77.42$ & $74.02$ & $61.66$ & $3.24$ & $42.8$ & $72.09$ & $64.42$ & $29.3$ & $69.18$ & $28.6$ & $41.32$ & $32.69$ & $49.73$ \\
        ~ & Inception-v3-SCBR & $\ms{51.40}{0.29}$ & $\ms{57.07}{0.43}$ & $\ms{65.84}{0.52}$ & $\ms{2.91}{0.00}$ & $\ms{54.85}{1.51}$ & $\ms{50.77}{0.21}$ & $\ms{54.97}{0.20}$ & $\ms{36.05}{0.25}$ & $\ms{52.10}{0.46}$ & $\ms{29.22}{0.12}$ & $\ms{45.45}{0.62}$ & $\ms{31.93}{0.30}$ & $44.38$ \\
        \multirow{8}{*}{\rotatebox[origin=c]{90}{MAAD}} & Inception-v3-LEP-mGvM & $\ms{34.80}{0.00}$ & $\ms{40.72}{0.00}$ & $\ms{52.65}{0.00}$ & $\ms{4.34}{0.00}$ & $\ms{24.99}{0.00}$ & $\ms{37.56}{0.00}$ & $\ms{39.66}{0.00}$ & $\ms{27.83}{0.00}$ & $\ms{32.95}{0.00}$ & $\ms{24.63}{0.00}$ & $\ms{43.35}{0.00}$ & $\bms{26.92}{0.00}$ & $32.53$ \\
        ~ & Inception-v3-MCR2 & $\ms{37.50}{0.88}$ & $\ms{41.99}{0.53}$ & $\ms{54.47}{0.40}$ & $\ms{3.86}{0.10}$ & $\ms{28.13}{0.46}$ & $\ms{39.33}{0.70}$ & $\ms{38.99}{0.12}$ & $\ms{29.33}{0.10}$ & $\ms{32.98}{0.42}$ & $\ms{24.91}{0.66}$ & $\ms{42.41}{0.47}$ & $\ms{28.29}{0.35}$ & $33.52$ \\
        ~ & ConvNeXt-SCBR & $\ms{48.71}{0.33}$ & $\ms{55.70}{0.38}$ & $\ms{60.01}{0.63}$ & $\bms{2.89}{0.00}$ & $\ms{56.20}{2.37}$ & $\ms{47.66}{0.41}$ & $\ms{53.50}{0.19}$ & $\ms{34.79}{0.13}$ & $\ms{50.33}{0.37}$ & $\ms{28.49}{0.16}$ & $\ms{45.15}{0.23}$ & $\ms{31.55}{0.11}$ & $42.92$ \\
        ~ & ConvNeXt-LEP-mGvM & $\ms{47.54}{0.00}$ & $\ms{50.85}{0.00}$ & $\ms{54.67}{0.00}$ & $\ms{3.39}{0.00}$ & $\bms{24.56}{0.00}$ & $\ms{46.52}{0.00}$ & $\ms{57.75}{0.00}$ & $\ubms{26.07}{0.00}$ & $\ms{51.45}{0.00}$ & $\ms{24.85}{0.00}$ & $\ubms{37.70}{0.00}$ & $\ms{30.15}{0.00}$ & $37.96$ \\
        ~ & ConvNeXt-MCR2 & $\ms{34.05}{1.29}$ & $\ms{43.65}{1.65}$ & $\bms{51.87}{0.25}$ & $\ms{5.63}{0.39}$ & $\ms{30.71}{1.16}$ & $\bms{35.47}{0.40}$ & $\bms{36.77}{0.51}$ & $\ms{30.84}{0.28}$ & $\ms{38.53}{3.96}$ & $\ms{26.29}{0.34}$ & $\ms{44.32}{0.83}$ & $\ms{28.67}{0.33}$ & $33.90$ \\
        ~ & Inception-v3-SDR-ANGLE & $\ms{33.92}{0.30}$ & $\ms{40.14}{0.57}$ & $\ms{54.55}{0.95}$ & $\ms{3.93}{0.23}$ & $\ms{25.84}{0.26}$ & $\ms{37.25}{0.14}$ & $\ms{37.57}{0.21}$ & $\ms{30.23}{2.54}$ & $\ms{33.98}{2.66}$ & $\ms{24.55}{0.80}$ & $\ms{39.17}{0.10}$ & $\ms{29.42}{0.15}$ & $32.55$ \\
        ~ & Inception-v3-ANGLE & $\bms{32.88}{0.29}$ & $\bms{39.26}{0.26}$ & $\ms{53.92}{0.55}$ & $\ms{3.98}{0.21}$ & $\ms{25.24}{0.18}$ & $\ms{37.09}{0.26}$ & $\ms{37.22}{0.08}$ & $\ms{28.07}{0.18}$ & $\ubms{31.80}{0.53}$ & $\bms{23.66}{0.25}$ & $\bms{39.13}{0.30}$ & $\ms{27.74}{0.12}$ & $\mathbf{31.67}$ \\
        ~ & ConvNeXt-ANGLE & $\ubms{30.67}{0.27}$ & $\ubms{36.93}{0.41}$ & $\ubms{48.80}{0.53}$ & $\ms{3.41}{0.18}$ & $\ubms{22.40}{0.24}$ & $\ubms{34.63}{0.17}$ & $\ubms{35.39}{0.18}$ & $\ms{28.42}{0.22}$ & $\bms{32.30}{0.42}$ & $\ubms{21.39}{0.20}$ & $\ms{39.76}{0.68}$ & $\ubms{26.29}{0.19}$ & \underline{$\mathbf{30.03}$} \\ \midrule
        \multirow{12}{*}{\rotatebox[origin=c]{90}{CMDE}} & Mixture-Inception & $0.78$ & $0.49$ & $0.65$ & \underline{$\mathbf{0.01}$} & $0.38$ & $0.88$ & $0.68$ & $0.24$ & $0.74$ & $0.24$ & $0.39$ & $0.25$ & $0.48$ \\
        ~ & Mixture-DenseNet & $0.41$ & $0.78$ & $0.65$ & \underline{$\mathbf{0.01}$} & $0.4$ & $0.82$ & $0.51$ & $0.24$ & $0.74$ & $0.24$ & $\mathbf{0.38}$ & $0.26$ & $0.45$ \\
        ~ & Mixture-MobileNet & $0.82$ & $0.79$ & $0.64$ & \underline{$\mathbf{0.01}$} & $0.4$ & $0.77$ & $0.68$ & $0.24$ & $0.74$ & $0.24$ & $\mathbf{0.38}$ & $0.26$ & $0.50$ \\
        ~ & Inception-v3-SCBR & $\ms{0.46}{0.00}$ & $\ms{0.53}{0.01}$ & $\ms{0.67}{0.01}$ & $\ubms{0.01}{0.00}$ & $\ms{0.48}{0.01}$ & $\ms{0.49}{0.00}$ & $\ms{0.53}{0.00}$ & $\ms{0.26}{0.00}$ & $\ms{0.46}{0.01}$ & $\ms{0.24}{0.00}$ & $\ms{0.40}{0.00}$ & $\ms{0.25}{0.00}$ & $0.40$ \\
        ~ & Inception-v3-LEP-mGvM & $\ms{0.30}{0.00}$ & $\ms{0.37}{0.00}$ & $\ms{0.53}{0.00}$ & $\ubms{0.01}{0.00}$ & $\bms{0.22}{0.00}$ & $\ms{0.36}{0.00}$ & $\ms{0.35}{0.00}$ & $\ms{0.23}{0.00}$ & $\ms{0.28}{0.00}$ & $\ms{0.19}{0.00}$ & $\ms{0.43}{0.00}$ & $\bms{0.21}{0.00}$ & $0.29$ \\
        ~ & Inception-v3-MCR2 & $\ms{0.32}{0.01}$ & $\ms{0.38}{0.01}$ & $\ms{0.53}{0.01}$ & $\ubms{0.01}{0.00}$ & $\ms{0.25}{0.01}$ & $\ms{0.37}{0.01}$ & $\ms{0.33}{0.00}$ & $\bms{0.22}{0.00}$ & $\bms{0.27}{0.01}$ & $\bms{0.17}{0.00}$ & $\ms{0.39}{0.01}$ & $\ms{0.22}{0.00}$ & $0.29$ \\
        ~ & ConvNeXt-SCBR & $\ms{0.42}{0.00}$ & $\ms{0.52}{0.01}$ & $\ms{0.59}{0.01}$ & $\ubms{0.01}{0.00}$ & $\ms{0.50}{0.03}$ & $\ms{0.46}{0.00}$ & $\ms{0.51}{0.00}$ & $\ms{0.25}{0.00}$ & $\ms{0.44}{0.01}$ & $\ms{0.23}{0.00}$ & $\ms{0.40}{0.00}$ & $\ms{0.25}{0.00}$ & $0.38$ \\
        ~ & ConvNeXt-LEP-mGvM & $\ms{0.48}{0.00}$ & $\ms{0.51}{0.00}$ & $\ms{0.57}{0.00}$ & $\ubms{0.01}{0.00}$ & $\ms{0.24}{0.00}$ & $\ms{0.49}{0.00}$ & $\ms{0.60}{0.00}$ & $\ms{0.23}{0.00}$ & $\ms{0.53}{0.00}$ & $\ms{0.21}{0.00}$ & $\ubms{0.37}{0.00}$ & $\ms{0.24}{0.00}$ & $0.37$ \\
        ~ & ConvNeXt-MCR2 & $\ms{0.28}{0.02}$ & $\ms{0.39}{0.02}$ & $\bms{0.49}{0.00}$ & $\ubms{0.01}{0.00}$ & $\ms{0.25}{0.01}$ & $\bms{0.32}{0.00}$ & $\bms{0.30}{0.01}$ & $\ms{0.23}{0.00}$ & $\ms{0.32}{0.04}$ & $\bms{0.17}{0.00}$ & $\ms{0.41}{0.01}$ & $\ms{0.22}{0.00}$ & $0.28$ \\
        ~ & Inception-v3-SDR-ANGLE & $\ms{0.28}{0.00}$ & $\ms{0.36}{0.01}$ & $\ms{0.54}{0.01}$ & $\ubms{0.01}{0.00}$ & $\ms{0.23}{0.00}$ & $\ms{0.34}{0.00}$ & $\ms{0.32}{0.00}$ & $\ms{0.23}{0.02}$ & $\ms{0.28}{0.03}$ & $\bms{0.17}{0.01}$ & $\bms{0.38}{0.00}$ & $\ms{0.23}{0.00}$ & $0.28$ \\
        ~ & Inception-v3-ANGLE & $\bms{0.27}{0.00}$ & $\bms{0.35}{0.00}$ & $\ms{0.53}{0.01}$ & $\ubms{0.01}{0.00}$ & $\bms{0.22}{0.00}$ & $\ms{0.34}{0.00}$ & $\ms{0.32}{0.00}$ & $\ubms{0.21}{0.00}$ & $\ubms{0.26}{0.01}$ & $\bms{0.17}{0.00}$ & $\ubms{0.37}{0.00}$ & $\bms{0.21}{0.00}$ & $\mathbf{0.27}$ \\
        ~ & ConvNeXt-ANGLE & $\ubms{0.24}{0.00}$ & $\ubms{0.32}{0.00}$ & $\ubms{0.46}{0.01}$ & $\ubms{0.01}{0.00}$ & $\ubms{0.19}{0.00}$ & $\ubms{0.31}{0.00}$ & $\ubms{0.29}{0.00}$ & $\ubms{0.21}{0.00}$ & $\ubms{0.26}{0.00}$ & $\ubms{0.14}{0.00}$ & $\bms{0.38}{0.01}$ & $\ubms{0.20}{0.00}$ & \underline{$\mathbf{0.25}$} \\

\bottomrule
\end{longtable}
}

{\tiny 
\setlength{\tabcolsep}{5.5pt} 
\begin{longtable}[]{@{}ccccccccc@{}}
\caption{{\small Ablation study to evaluate the performance of different components in the proposed model on the Indian wind dataset. The negative of the kernels are taken from \citet[Table 1]{gneiting2013strictly}. For each variant, the mean and standard deviation of the metrics are reported across 50 independent train-test runs. The \underline{\textbf{best}} and \textbf{second best} results are highlighted. MedErr, CRPS, and MAAD are reported in degrees.}}\label{table:ablation}\tabularnewline
\toprule\noalign{}
Distance & Mode & Kernel & Output & $\text{Accuracy}_{\pi/6}$ & MedErr & CRPS & MAAD & CMDE \\
\midrule\noalign{}

\multirow{54}{*}{Geodesic} & \multirow{27}{*}{Single-index} & Powered exponential & Modulated & $0.506\pm0.111$ & $30.838\pm12.758$ & $28.314\pm5.137$ & $40.988\pm10.284$ & $0.347\pm0.131$ \\ 
        ~ & ~ & ~ & Scaled sigmoid & $0.601\pm0.025$ & $23.300\pm1.469$ & $23.930\pm0.533$ & $33.319\pm1.096$ & $0.252\pm0.011$ \\ 
        ~ & ~ & ~ & $\mathrm{atan2}$ & $0.583\pm0.032$ & $24.773\pm2.355$ & $24.969\pm0.814$ & $34.107\pm1.336$ & $0.258\pm0.011$ \\ \cmidrule{3-9}
        ~ & ~ & Generalized Cauchy & Modulated & $0.492\pm0.128$ & $33.834\pm17.438$ & $29.117\pm6.228$ & $43.494\pm14.470$ & $0.380\pm0.191$ \\ 
        ~ & ~ & ~ & Scaled sigmoid & $0.596\pm0.028$ & $23.353\pm2.090$ & $24.086\pm0.766$ & $33.573\pm1.603$ & $0.255\pm0.016$ \\ 
        ~ & ~ & ~ & $\mathrm{atan2}$ & $0.556\pm0.043$ & $26.253\pm2.940$ & $25.548\pm1.172$ & $35.137\pm1.620$ & $0.266\pm0.017$ \\ \cmidrule{3-9}
        ~ & ~ & Dagum & Modulated & $0.322\pm0.076$ & $52.150\pm16.489$ & $37.338\pm4.756$ & $58.019\pm14.486$ & $0.559\pm0.201$ \\ 
        ~ & ~ & ~ & Scaled sigmoid & $0.463\pm0.058$ & $32.561\pm3.952$ & $27.884\pm1.375$ & $40.255\pm2.376$ & $0.321\pm0.032$ \\ 
        ~ & ~ & ~ & $\mathrm{atan2}$ & $0.421\pm0.055$ & $35.461\pm4.058$ & $28.881\pm1.575$ & $41.498\pm2.821$ & $0.330\pm0.033$ \\ \cmidrule{3-9}
        ~ & ~ & Multiquadric & Modulated & $0.474\pm0.131$ & $34.726\pm16.454$ & $30.373\pm6.354$ & $45.419\pm13.377$ & $0.406\pm0.174$ \\ 
        ~ & ~ & ~ & Scaled sigmoid & $0.606\pm0.020$ & $22.552\pm1.312$ & $23.775\pm0.599$ & $32.939\pm1.169$ & $0.250\pm0.014$ \\ 
        ~ & ~ & ~ & $\mathrm{atan2}$ & $0.590\pm0.030$ & $24.039\pm2.186$ & $24.360\pm0.912$ & $33.744\pm1.437$ & $0.256\pm0.014$ \\ \cmidrule{3-9}
        ~ & ~ & Sine power & Modulated & $0.546\pm0.093$ & $27.677\pm8.280$ & $26.552\pm4.656$ & $37.190\pm7.268$ & $0.297\pm0.090$ \\ 
        ~ & ~ & ~ & Scaled sigmoid & $0.600\pm0.020$ & $23.773\pm1.836$ & $23.554\pm0.484$ & $33.403\pm1.190$ & $0.249\pm0.010$ \\ 
        ~ & ~ & ~ & $\mathrm{atan2}$ & $0.586\pm0.026$ & $24.890\pm1.953$ & $24.398\pm0.783$ & $34.016\pm1.232$ & $0.252\pm0.011$ \\ \cmidrule{3-9}
        ~ & ~ & Spherical & Modulated & $0.347\pm0.109$ & $50.913\pm20.542$ & $35.597\pm6.367$ & $56.665\pm17.111$ & $0.544\pm0.234$ \\ 
        ~ & ~ & ~ & Scaled sigmoid & $0.558\pm0.054$ & $25.493\pm4.074$ & $25.264\pm1.407$ & $35.825\pm2.729$ & $0.282\pm0.028$ \\ 
        ~ & ~ & ~ & $\mathrm{atan2}$ & $0.457\pm0.052$ & $33.615\pm4.288$ & $28.206\pm2.312$ & $40.353\pm3.806$ & $0.320\pm0.049$ \\ \cmidrule{3-9}
        ~ & ~ & Askey & Modulated & $0.358\pm0.109$ & $48.936\pm19.576$ & $35.737\pm5.798$ & $57.041\pm15.566$ & $0.553\pm0.212$ \\ 
        ~ & ~ & ~ & Scaled sigmoid & $0.554\pm0.057$ & $25.680\pm4.305$ & $25.764\pm1.339$ & $36.398\pm2.568$ & $0.291\pm0.025$ \\ 
        ~ & ~ & ~ & $\mathrm{atan2}$ & $0.462\pm0.065$ & $32.770\pm4.848$ & $27.815\pm1.761$ & $39.867\pm2.714$ & $0.315\pm0.028$ \\ \cmidrule{3-9}
        ~ & ~ & $C^2$-Wendland & Modulated & $0.547\pm0.097$ & $27.141\pm7.615$ & $26.047\pm3.720$ & $37.226\pm7.359$ & $0.298\pm0.088$ \\ 
        ~ & ~ & ~ & Scaled sigmoid & $0.600\pm0.018$ & $23.331\pm1.751$ & $23.421\pm0.486$ & $32.985\pm1.111$ & $0.246\pm0.010$ \\ 
        ~ & ~ & ~ & $\mathrm{atan2}$ & $0.588\pm0.023$ & $24.927\pm2.029$ & $24.225\pm0.748$ & $33.772\pm1.224$ & $0.249\pm0.010$ \\ \cmidrule{3-9}
        ~ & ~ & $C^4$-Wendland & Modulated & $0.541\pm0.113$ & $28.517\pm11.805$ & $27.149\pm5.533$ & $39.135\pm10.713$ & $0.328\pm0.137$ \\ 
        ~ & ~ & ~ & Scaled sigmoid & $0.603\pm0.022$ & $22.989\pm1.392$ & $23.584\pm0.536$ & $32.906\pm1.044$ & $0.247\pm0.011$ \\ 
        ~ & ~ & ~ & $\mathrm{atan2}$ & $0.595\pm0.021$ & $24.299\pm1.739$ & $24.246\pm0.580$ & $33.561\pm1.109$ & $0.250\pm0.009$ \\ \cmidrule{2-9}
        ~ & \multirow{27}{*}{Dense} & Powered exponential & Modulated & $0.451\pm0.120$ & $35.858\pm15.722$ & $29.635\pm6.077$ & $43.414\pm13.835$ & $0.369\pm0.182$ \\ 
        ~ & ~ & ~ & Scaled sigmoid & $0.636\pm0.029$ & $20.780\pm1.893$ & $22.108\pm0.815$ & $30.920\pm1.167$ & $0.229\pm0.012$ \\ 
        ~ & ~ & ~ & $\mathrm{atan2}$ & $0.608\pm0.026$ & $22.268\pm1.849$ & $23.777\pm0.860$ & $32.897\pm1.269$ & $0.251\pm0.014$ \\ \cmidrule{3-9}
        ~ & ~ & Generalized Cauchy & Modulated & $0.420\pm0.113$ & $39.099\pm16.116$ & $30.684\pm6.301$ & $45.639\pm14.440$ & $0.393\pm0.193$ \\ 
        ~ & ~ & ~ & Scaled sigmoid & $0.607\pm0.022$ & $22.226\pm1.513$ & $22.970\pm0.758$ & $32.209\pm0.931$ & $0.241\pm0.010$ \\ 
        ~ & ~ & ~ & $\mathrm{atan2}$ & $0.593\pm0.027$ & $23.061\pm2.227$ & $24.234\pm0.683$ & $33.579\pm1.047$ & $0.257\pm0.011$ \\ \cmidrule{3-9}
        ~ & ~ & Dagum & Modulated & $0.301\pm0.086$ & $55.856\pm19.890$ & $38.068\pm5.243$ & $61.646\pm17.354$ & $0.608\pm0.240$ \\ 
        ~ & ~ & ~ & Scaled sigmoid & $0.366\pm0.031$ & $39.528\pm2.197$ & $30.673\pm0.996$ & $45.573\pm2.121$ & $0.378\pm0.029$ \\ 
        ~ & ~ & ~ & $\mathrm{atan2}$ & $0.372\pm0.027$ & $38.773\pm2.181$ & $29.552\pm0.907$ & $43.905\pm1.697$ & $0.353\pm0.022$ \\ \cmidrule{3-9}
        ~ & ~ & Multiquadric & Modulated & $0.465\pm0.129$ & $34.633\pm14.711$ & $29.580\pm6.322$ & $42.983\pm12.499$ & $0.367\pm0.160$ \\ 
        ~ & ~ & ~ & Scaled sigmoid & $0.672\pm0.036$ & $18.840\pm2.170$ & $20.698\pm1.240$ & $28.826\pm2.034$ & $0.208\pm0.021$ \\ 
        ~ & ~ & ~ & $\mathrm{atan2}$ & $0.658\pm0.029$ & $19.119\pm1.622$ & $22.214\pm0.927$ & $30.367\pm1.771$ & $0.228\pm0.020$ \\ \cmidrule{3-9}
        ~ & ~ & Sine power & Modulated & $0.554\pm0.125$ & $27.014\pm10.983$ & $25.174\pm5.224$ & $36.001\pm9.907$ & $0.284\pm0.118$ \\ 
        ~ & ~ & ~ & Scaled sigmoid & $0.732\pm0.043$ & $16.233\pm1.900$ & $18.786\pm1.198$ & $25.509\pm2.005$ & $0.174\pm0.019$ \\ 
        ~ & ~ & ~ & $\mathrm{atan2}$ & $0.665\pm0.031$ & $19.008\pm1.857$ & $21.604\pm1.029$ & $29.514\pm1.688$ & $0.217\pm0.017$ \\ \cmidrule{3-9}
        ~ & ~ & Spherical & Modulated & $0.325\pm0.088$ & $50.511\pm19.264$ & $35.331\pm5.642$ & $56.105\pm16.993$ & $0.529\pm0.235$ \\ 
        ~ & ~ & ~ & Scaled sigmoid & $0.484\pm0.045$ & $31.104\pm3.022$ & $26.245\pm1.174$ & $38.034\pm2.030$ & $0.292\pm0.023$ \\ 
        ~ & ~ & ~ & $\mathrm{atan2}$ & $0.506\pm0.048$ & $29.524\pm3.122$ & $26.584\pm1.333$ & $37.273\pm2.312$ & $0.286\pm0.028$ \\ \cmidrule{3-9}
        ~ & ~ & Askey & Modulated & $0.303\pm0.081$ & $54.639\pm19.590$ & $37.873\pm5.146$ & $59.828\pm16.684$ & $0.582\pm0.232$ \\ 
        ~ & ~ & ~ & Scaled sigmoid & $0.415\pm0.038$ & $35.506\pm2.220$ & $27.932\pm1.155$ & $41.422\pm1.923$ & $0.323\pm0.024$ \\ 
        ~ & ~ & ~ & $\mathrm{atan2}$ & $0.444\pm0.036$ & $33.584\pm2.372$ & $27.495\pm0.652$ & $39.308\pm1.658$ & $0.301\pm0.017$ \\ \cmidrule{3-9}
        ~ & ~ & $C^2$-Wendland & Modulated & $0.581\pm0.114$ & $24.504\pm7.757$ & $24.105\pm4.472$ & $33.562\pm6.640$ & $0.255\pm0.068$ \\ 
        ~ & ~ & ~ & Scaled sigmoid & $\mathbf{0.753\pm0.038}$ & $15.229\pm1.468$ & $\mathbf{18.180\pm1.021}$ & $\mathbf{24.539\pm1.711}$ & $\mathbf{0.167\pm0.017}$ \\ 
        ~ & ~ & ~ & $\mathrm{atan2}$ & $0.702\pm0.036$ & $17.435\pm1.870$ & $20.464\pm1.215$ & $27.494\pm1.930$ & $0.197\pm0.020$ \\ \cmidrule{3-9}
        ~ & ~ & $C^4$-Wendland & Modulated & $0.592\pm0.101$ & $23.940\pm6.771$ & $24.047\pm4.219$ & $34.032\pm6.975$ & $0.265\pm0.078$ \\ 
        ~ & ~ & ~ & Scaled sigmoid & $0.739\pm0.041$ & $15.806\pm1.795$ & $18.460\pm1.298$ & $25.053\pm2.038$ & $0.171\pm0.020$ \\ 
        ~ & ~ & ~ & $\mathrm{atan2}$ & $0.701\pm0.033$ & $17.189\pm1.854$ & $20.599\pm1.154$ & $27.715\pm1.970$ & $0.200\pm0.020$ \\ \midrule
        \multirow{6}{*}{Chordal} & \multirow{3}{*}{Single-index} & \multirow{6}{*}{None} & Modulated & $0.583\pm0.059$ & $25.100\pm4.366$ & $24.638\pm2.660$ & $34.387\pm3.704$ & $0.262\pm0.043$ \\ 
        ~ & ~ & ~ & Scaled sigmoid & $0.602\pm0.014 $ & $23.032\pm1.255$ & $22.949\pm0.537$ & $32.528\pm0.769$ & $0.242\pm0.008$ \\ 
        ~ & ~ & ~ & $\mathrm{atan2}$ & $0.596\pm0.019$ & $24.012\pm1.628$ & $23.481\pm0.472$ & $33.138\pm0.796$ & $ 0.245\pm0.006$ \\ \cmidrule{4-9}
        ~ & \multirow{3}{*}{Dense} & ~ & Modulated & $0.666\pm0.090$ & $19.418\pm6.437$ & $20.884\pm4.080$ & $28.774\pm5.777$ & $0.208\pm0.064$ \\ 
        ~ & ~ & ~ & Scaled sigmoid & \underline{$\mathbf{0.781\pm0.027}$} & \underline{$\mathbf{14.234\pm1.284}$} & \underline{$\mathbf{16.948\pm0.800}$} & \underline{$\mathbf{22.700\pm1.259}$} & \underline{$\mathbf{ 0.149\pm0.011}$} \\ 
        ~ & ~ & ~ & $\mathrm{atan2}$ & $0.746\pm0.029$ & $\mathbf{14.984\pm1.192}$ & $18.578\pm0.952$ & $24.813\pm1.522$ & $0.171\pm0.016$ \\
\bottomrule\noalign{}
\endlastfoot
\end{longtable}
}

\begin{figure}[!h]
        \centering
        \includegraphics[width=0.8\linewidth]{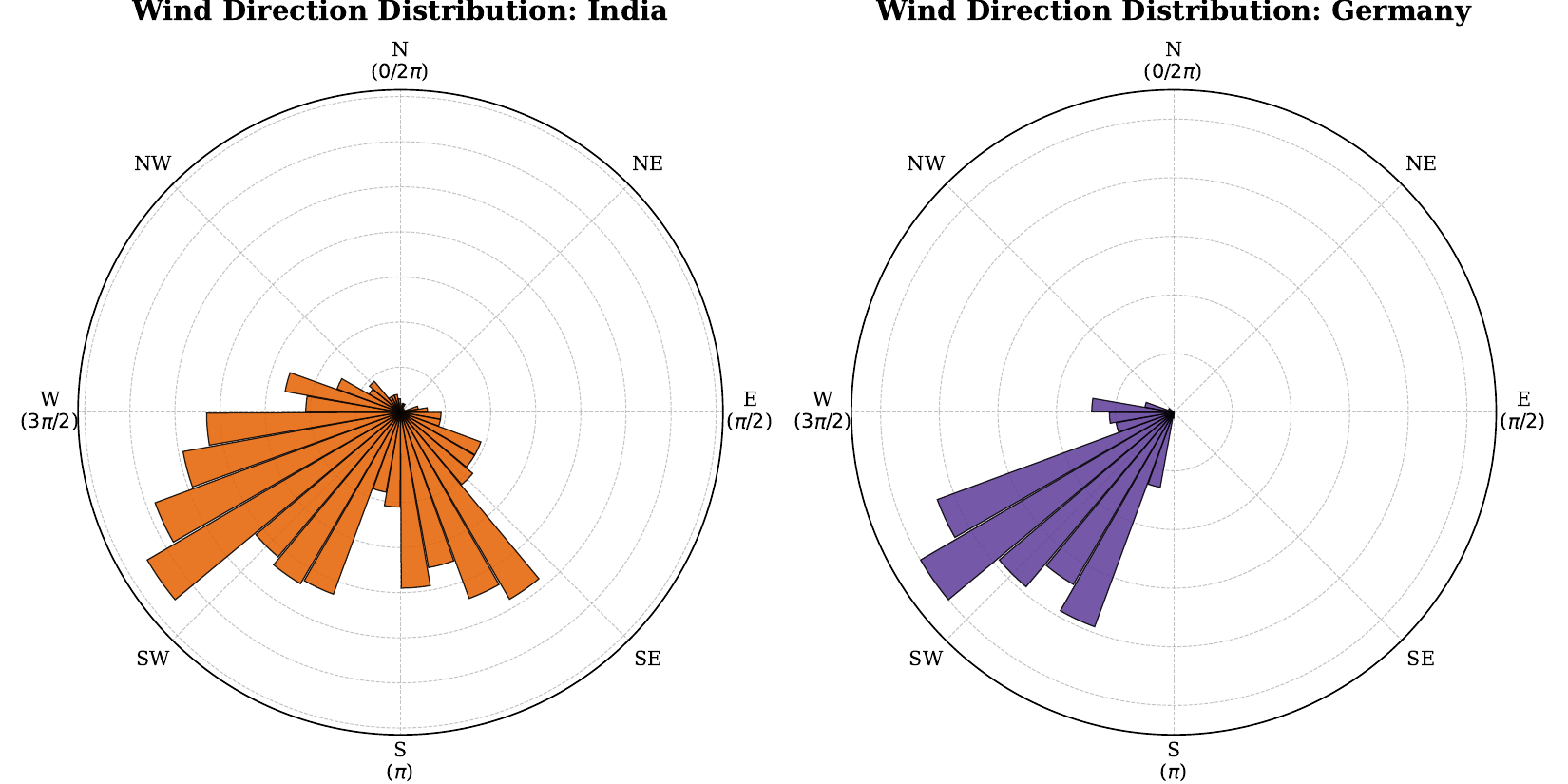}
        \caption{Rose plots indicating distribution of wind directions in $[0,2\pi)$ range for (A) Indian and (B) German datasets.}
        \label{fig:wind_distributions}
\end{figure}

\begin{figure}[!t]
        \centering
        \includegraphics[width=0.8\linewidth]{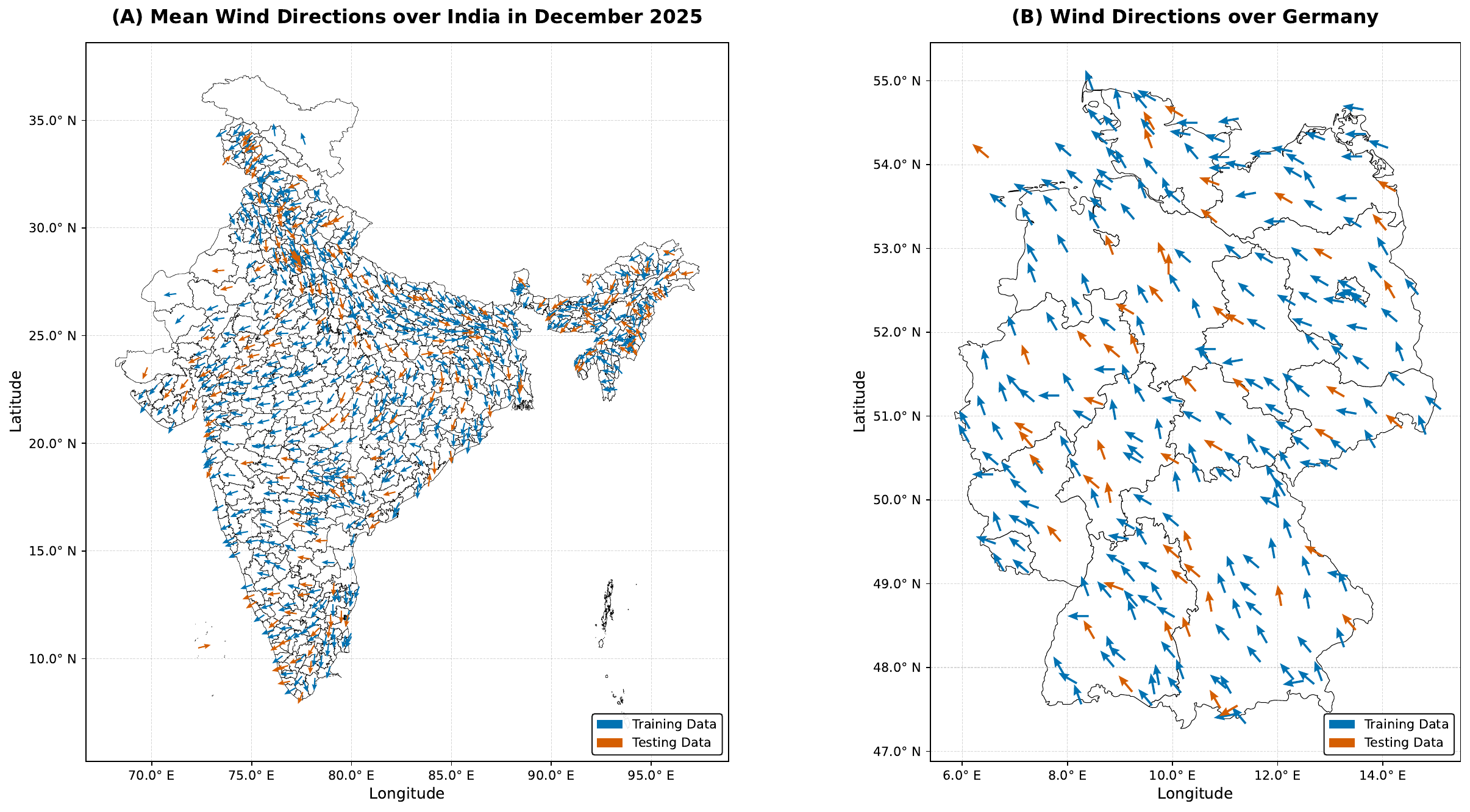}
        \caption{{\small Training and test observations for (A) Indian and (B) German wind datasets. The train-test split is done randomly for both the datasets. The boundaries are for illustrative purposes only, and imply no political assertions.}}
        \label{fig:combined_wind_datasets}
\end{figure}


\begin{figure}
        \centering
        \includegraphics[width=0.8\linewidth]{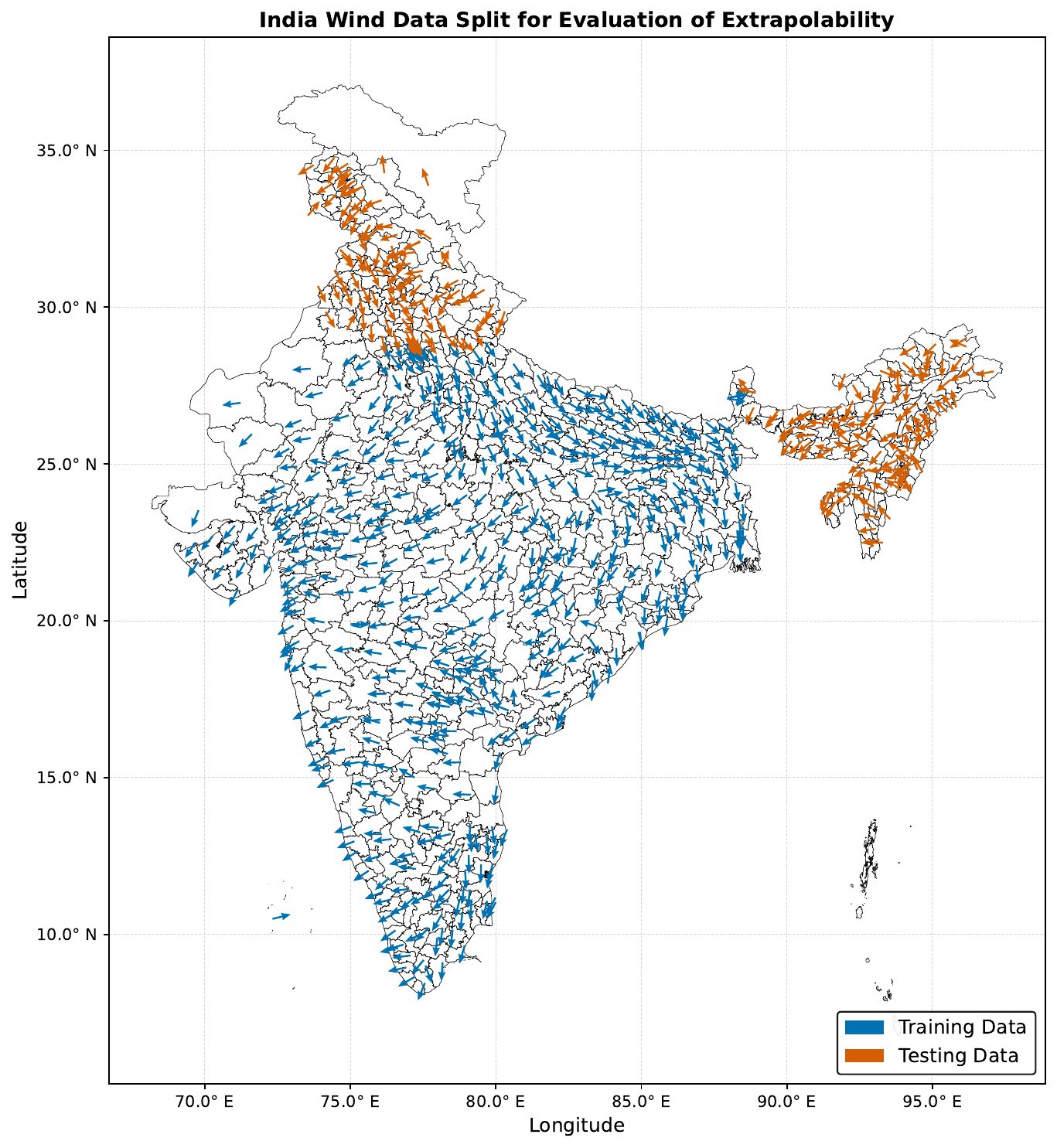}
        \caption{Training (70.1\%) and test (29.9\%) points of the Indian wind dataset for evaluation of extrapolability. The boundaries imply no political assertions and are for illustrative purposes only.}
        \label{fig:train_test_splits_Extrapolability}
\end{figure}


\clearpage
\bibliography{bibliography}

\end{document}